\documentclass[11pt]{article}

\usepackage[T1]{fontenc}
\usepackage[utf8]{inputenc}
\usepackage[margin=1in]{geometry}
\usepackage{authblk}
\usepackage{amsmath,amssymb,amsthm}
\usepackage{graphicx}
\usepackage{subcaption}
\usepackage{xspace}
\usepackage{algorithm}
\usepackage{algorithmic}
\usepackage{xcolor}
\usepackage[numbers,sort&compress]{natbib}
\usepackage[hidelinks]{hyperref}

\theoremstyle{plain}
\newtheorem{theorem}{Theorem}[section]
\newtheorem{proposition}[theorem]{Proposition}
\newtheorem{corollary}[theorem]{Corollary}
\newtheorem{lemma}{Lemma}
\theoremstyle{definition}
\newtheorem{definition}{Definition}
\theoremstyle{remark}

\newcommand\methodname{HAAM\xspace}
\newcommand{\norm}[1]{\left\lVert#1\right\rVert}

\title{Modeling Heterophily in Multiplex Graphs: An Adaptive Approach for Node Classification}

\author[1]{Kamel Abdous\thanks{Kamel Abdous and Nairouz Mrabah contributed equally to this work.}}
\author[1]{Nairouz Mrabah\protect\footnotemark[1]}
\author[1]{Mohamed Bouguessa}
\affil[1]{Department of Computer Science, University of Quebec at Montreal, Montreal, QC, Canada\\
\texttt{abous.kamel@courrier.uqam.ca}; \texttt{mrabah.nairouz@etsmtl.livia.ca}; \texttt{bouguessa.mohamed@uqam.ca}}
\date{}

\begin{document}
\maketitle

\begin{abstract}
Existing multiplex graph models often assume homophily, where connected nodes tend to belong to the same class or share similar attributes. Consequently, these models may struggle with graphs exhibiting heterophily, where connected nodes typically belong to different classes and have dissimilar attributes. While recent methods have been developed to learn reliable node representations from unidimensional graphs with heterophily, they do not fully address the complexities of multiplex graphs. In a multiplex graph, nodes are linked through multiple types of edges (referred to as dimensions), which can simultaneously exhibit homophilic and heterophilic interactions. To address this gap, we propose \methodname, a novel method for node classification in multiplex graphs that adapts to both homophilic and heterophilic dimensions. \methodname introduces dimension-specific compatibility matrices to model varying degrees of homophily and heterophily across dimensions. A key innovation is its use of a product of trainable low-pass and high-pass filters, approximated via Chebyshev polynomials, to capture both smooth and abrupt changes in the graph signal. By composing these filters and optimizing label predictions using a proximal-gradient method, \methodname dynamically adjusts to the heterophilic characteristics of each dimension. Extensive experiments on synthetic and real-world datasets provide evidence that \methodname captures the complex interplay of homophilic and heterophilic interactions in multiplex graphs, and tends to yield improved node classification performance compared to state-of-the-art methods.
\end{abstract}

\noindent\textbf{Keywords:} Multiplex Graphs ; Heterophily \& Homophily ; Graph Neural Networks

\bigskip
\section{Introduction}
Multiplex graphs offer a comprehensive framework for modeling interconnected systems by capturing multiple layers of interactions within a single structure \cite{battiston2017new}. Unlike traditional unidimensional graphs \cite{mrabah2022rethinking}, \cite{mrabah2024contrastive}, \cite{mrabah2022escaping}, which represent relationships with a single type of edge, multiplex graphs enable simultaneous and diverse connections between the same set of nodes \cite{abdous2024geometric}. Each type of connection represents a distinct graph dimension. This capability is particularly helpful in modeling complex systems where entities interact through various channels, such as social networks \cite{berlingerio2013multidimensional}, biological networks \cite{oughtred2021biogrid}, and online recommendation systems \cite{sun2019multi}. By preserving the distinct nature of each interaction, multiplex graphs support a more accurate and nuanced analysis of system dynamics, which may lead to deeper insights into how different layers of connectivity contribute to the overall behavior \cite{hmge2023}.

Graph modeling techniques often rely on the homophily assumption, which presumes that nodes are more likely to connect if they share the same class or closely related attributes \cite{10597953}. While this assumption underpins many traditional methods, its applicability can be limited in systems where connections arise from diverse or even opposing factors \cite{zhu2020beyond}. In such cases, heterophily plays an important role, as nodes with differing attributes may be more likely to connect \cite{luan2022revisiting}. For instance, professional networks foster collaborations between individuals with complementary skills \cite{barranco2019heterophily}, while biological systems often rely on interactions between entities that assume diverse yet interdependent roles \cite{lim2021large}.

Despite progress in modeling homophily in multiplex graphs \cite{WANG2023120552}, \cite{jing2022x}, \cite{jing2021hdmi}, \cite{mitra2021semi}, \cite{pio2021multiverse}, \cite{sadikaj2023semi}, the role of heterophily remains relatively underexplored. Although recent efforts have investigated heterophily in traditional unidimensional graphs \cite{LIU2025125274}, \cite{JIN2024124460},  \cite{pan2023beyond}, \cite{zheng2022graph}, extending this framework to multiplex graphs introduces additional challenges. Multiplex systems often exhibit conflicting patterns of connectivity, where nodes may be dissimilar in one dimension but similar in another \cite{boutemine2017mining}. Furthermore, certain dimensions may emphasize homophilic interactions (e.g., individuals with shared interests), while others reflect heterophilic dynamics (e.g., collaborations between individuals with different expertise). Addressing these challenges is important for advancing tasks like node classification, where leveraging both homophilic and heterophilic connections can improve predictive accuracy.

To address these challenges, we introduce \methodname (Heterophily-Aware Adaptive Multiplex model), a novel framework tailored to node classification in multiplex graphs\footnote{This work focuses on multiplex graphs with homogeneous nodes, where nodes of the same type are connected through multiple dimensions, each representing a distinct type of relation. While multiplex graphs involve multiple types of edges, this setting is distinct from heterogeneous graphs, which involve both multi-typed nodes and edges. Other graph types, such as heterogeneous or dynamic/time-evolving graphs, fall outside the scope of this study and warrant further investigation.}. Unlike existing methods that primarily focus on homophilic structures, \methodname explicitly accommodates both homophilic and heterophilic interactions across graph dimensions. More precisely, \methodname uses learnable and dimension-specific compatibility matrices \cite{zhu2021graph} to capture the varying levels of homophily and heterophily across dimensions. Additionally, we propose a combination between learnable low-pass and high-pass spectral Chebyshev filters \cite{defferrard2016convolutional} to extract smooth (i.e., low-frequency homophilic) and rapidly-changing (i.e., high-frequency heterophilic) information from node interactions. In particular, our framework leverages a mathematically derived method that applies two filters sequentially via the product of their Chebyshev interpolation in the spectral domain. Finally, the proposed model is trained using two loss functions: the traditional cross-entropy loss, and a second loss that minimizes the divergence between dimension-specific and consensus predictions while promoting sparsity in the consensus predictions. To handle the non-smooth regularization that induces sparsity, we adopt a proximal-gradient optimization framework.

Spectral polynomial filters have been shown to approximate a wide range of spectral filters effectively \cite{wang2022powerful}, making them suitable in spectral graph convolutions for both homophilic and heterophilic graphs. In particular, Chebyshev polynomials \cite{defferrard2016convolutional}, \cite{he2022convolutional} are recognized for their strong approximation capabilities thanks to the properties of the Chebyshev basis and the capacity to minimize the Runge phenomenon. Unlike GREET \cite{liu2023beyond}, PolyGCL \cite{chen2024polygcl}, and TFE-GNN \cite{duan2024unifying}, which all use linear combinations or concatenations of Chebyshev filters, our approach employs the product of low-pass and high-pass filters. As an advantage, the product of the filters can capture non-linear interactions between the low-frequency and high-frequency components. Furthermore, the product of filters introduces higher-order Chebyshev terms (up to $2K$ for two Chebyshev filters of order $K$), which enables capturing more complex interactions between low and high frequencies.

\textbf{Contributions:} 
\begin{itemize}
    \item \textbf{Heterophily-aware adaptive multiplex model (\textsc{\methodname}).}
    We introduce a principled framework for multiplex node classification that explicitly models relation-specific level of homophily/heterophily via learnable compatibility matrices and reconciles dimension-wise predictions into a \emph{sparse} consensus label distribution through a proximal-gradient formulation (Alg.~\ref{algo:algo}).

    \item \textbf{Product-composed Chebyshev spectral filtering.}
        We propose to fuse learnable low-pass and high-pass Chebyshev filters by composition (matrix product) rather than by linear mixtures, enabling non-linear low/high-frequency interactions and implicitly introducing higher-order terms up to degree $2K$ (Sec.~\ref{sec:product_filter}). We characterize the composed operator in the spectral domain (Proposition~\ref{prop.A} and Corollary~\ref{coro.1}) and derive a Chebyshev product-to-sum expansion (Proposition~\ref{prop:2}) that avoids explicit $N\times N$ matrix products and yields an efficient implementation.

    \item \textbf{Theoretical guarantees for stability and generalization.}
    We establish bounded input, bounded output stability bounds for Chebyshev bases and polynomial filters and extend them to the product-composed operator (Propositions~\ref{prop:cheb_basis_bound}--\ref{prop:bibo_cheb} and Corollary~\ref{coro:product_stability}). Furthermore, we prove that the row-wise softmax is Lipschitz-stable (Proposition~\ref{prop:softmax_stability}) and formalize a bandwise noise attenuation effect induced by the product fusion (Proposition~\ref{prop:bandwise_attenuation}). Finally, we provide a statistical generalization bound linking the generalization gap to the operator norms $\|\hat{L}_d\|_2\|H_d\|_2$ and to the learned Chebyshev coefficients via Corollary~\ref{coro:product_stability} (Proposition~\ref{prop:rad_scaling} and Theorem~\ref{thm:generalization}; Sec.~\ref{sec:generalization}).

    \item \textbf{Comprehensive empirical validation across homophily regimes.}
    We conduct extensive experiments on synthetic multiplex graphs with controlled homophily ratios and on real-world multiplex datasets. Results show that \methodname\ consistently outperforms competitive multiplex baselines and adapted heterophily-aware unidimensional methods (Fig.~\ref{fig:synthetic_results} and Table~\ref{table:node_classification}). Ablations and sensitivity analyses further validate the impact of compatibility modeling, proximal consensus, and product-based filtering (Table~\ref{table:ablation_study} and Fig.~\ref{fig:sensitivity_analysis}).
\end{itemize}

\section{Related Work}

This section reviews existing approaches for modeling multiplex graphs and the advancements in addressing heterophily within unidimensional graphs.

\subsection{Models for Multiplex Graphs}

Many methods designed for multiplex graphs have traditionally focused on capturing homophily patterns. DMGI \cite{park2020dmgi} integrates embeddings from various types of node relations using a consensus regularization framework, a bilinear discriminator, along with a mutual information maximization mechanism \cite{hjelm2018learning} that pulls similar nodes together and dissimilar ones apart. HDMI \cite{jing2021hdmi} builds upon the contrastive loss of DMGI by including a higher-order objective function that incorporates node features. X-GOAL \cite{jing2022x} further improves the contrastive loss by pairing topologically similar nodes and nodes within the same cluster as positive and negative pairs. SSDCM \cite{mitra2021semi} introduces a self-supervised framework for learning node representations in multiplex graphs. The model leverages a cluster-based graph summary that improves the discriminative power of node embeddings. Since all these methods rely on optimizing contrastive loss functions, they inherently group nodes that are topologically close, operating under the assumption of homophily. Beyond fixed-structure contrastive objectives, InfoMGF \cite{shen2024beyond} tackles multiplex graph reliability by refining each graph view to remove task-irrelevant noise and learning a fused graph by maximizing both view-shared and view-unique task-relevant information.

GATNE \cite{cen2019representation} decomposes the node embeddings into base, edge, and attribute embeddings. Neighborhood information is integrated into edge embeddings using a self-attention mechanism. Similarly, mGCN \cite{ma2019multi} uses two types of node embeddings: one that captures interactions within and across dimensions, and another that considers the node embeddings with respect to the entire graph. SSAMN \cite{sadikaj2023semi} combines node embeddings and class label embeddings learned in a semi-supervised setting using a small subset of labeled nodes. HMGE \cite{hmge2023} embeds high-dimensional multiplex graphs by hierarchically encoding the graph dimensions. The method progressively builds hidden graph dimensions that can capture new types of interactions through nonlinear combinations of the original graph structures. DMG \cite{mo2023disentangled} focuses on capturing the common and complementary information across the graph dimensions. The approach takes advantage of disentangled representations to distinguish between shared and unique information. In the same context, MGDCR \cite{mgdcr} minimizes the correlation between inter-dimension and intra-dimension codes.

While recent studies on multiplex graphs have made significant progress, the treatment of heterophily, where nodes with different attributes or classes are more likely to connect, remains only partially explored. This aspect may limit the ability to fully capture the structural diversity observed in complex systems. This work aims to complement recent efforts by proposing a novel approach that explicitly integrates both homophilic and heterophilic interactions across multiplex dimensions in a principled way. The proposed method can accurately capture the full spectrum of interactions within multiplex graph dimensions, including heterophilic and homophilic interactions.

\subsection{Heterophily in Unidimensional Graphs}

In recent years, heterophily has emerged as a significant challenge in modeling unidimensional graphs, leading to the development of various methods to address it. GPR-GNN \cite{chien2020adaptive} is a flexible graph neural network that adapts to homophilic and heterophilic graphs. By learning generalized PageRank weights, GPR-GNN optimizes the balance between node features and topological information, avoiding over-smoothing. LINKX \cite{lim2021large} introduces a simple and scalable method to learn reliable representations on heterophilous graphs. The method separately embeds node features and edges with MLPs and combines the embeddings by concatenation. DGCN \cite{pan2023beyond} follows a similar approach, employing a mixed filter to balance low- and high-frequency information. CPGNN \cite{zhu2021graph} incorporates a learnable compatibility matrix that models the likelihood of connections between nodes of different classes. SELENE \cite{zhong2022unsupervised} proposes a dual-channel embedding pipeline that discriminates between r-ego networks, taking advantage of the attributes of the nodes and the structural information separately. GREET \cite{liu2023beyond} employs an edge discriminator that separates homophilic from heterophilic edges using features and structural information. This is coupled with a dual-channel encoder that processes edges independently with a concatenation of low-pass and high-pass filters. PolyGCL \cite{chen2024polygcl} applies contrastive learning with polynomial filters, making it adaptable to homophilic and heterophilic graphs. Moreover, it introduces a dual-channel filtering mechanism with a linear combination of low-pass and high-pass filters. Similarly, TFE-GNN \cite{duan2024unifying} employs triple filter ensembles and combines low-pass and high-pass filters using linear summations and concatenations.

Although these methods have made significant contributions to addressing heterophily in unidimensional graphs, they fall short when it comes to handling the complexity of multiplex graphs. Multiplex graphs, characterized by multiple dimensions of diverse interactions between nodes, present unique challenges that unidimensional approaches can not tackle. In this paper, we propose a principled approach specifically designed to handle the varying levels of homophily and heterophily across the dimensions of multiplex graphs.

\section{Definitions \& Notations}

Before describing the proposed approach, let us first present the main definitions and notations used throughout the paper. For convenience, Table \ref{tab:notation_summary} in \ref{sec:notation_summary} summarizes the main notation used throughout the paper.

\subsection{Multiplex Graphs} 
We consider a $D$-dimensional multiplex graph $G$, defined as a set of $D$ graphs $G = \left \{G_1, \dots, G_D \right\}$. Each graph $G_d = (V, \, A_d)$, for $d \in \{1, \dots, D\}$, consists of the same set of $N$ nodes $V = \left \{ v_1, \dots, v_N \right\}$ and an adjacency matrix $A_d \in \mathbb{R}^{N \times N}$, where $(A_d)_{ij} = 1$ if there is an edge between nodes $v_i$ and $v_j$ in dimension $d$, and $0$ otherwise. The degree matrix $\Delta_d$ of each adjacency matrix $A_d$ is diagonal, with $(\Delta_d)_{ii} = \sum_{j=1}^N (A_d)_{ij}$, and the associated normalized Laplacian matrix is $L_d = I - \Delta_d^{-\frac{1}{2}} \, \cdot  \, (A_d + I) \, \cdot  \, \Delta_d^{-\frac{1}{2}}$. We define $X \in \mathbb{R}^{N \times F}$ as the node feature matrix, where the $i$-th row corresponds to the feature vector of node $v_i$ and $F$ is the number of features per node. The label matrix is denoted by $Y \in \{0, 1\}^{N \times C}$, where each row corresponds to a node, and each node is assigned exactly one of the $C$ labels.

\subsection{Homophily Ratio} Let $\mathcal{Y}_d \in \mathbb{N}^{C \times C}$ be the matrix that measures class-wise connectivity, where each entry $(\mathcal{Y}_{d})_{ij}$ indicates the number of edges connecting nodes of class $i$ to nodes of class $j$ in dimension $d$. The homophily ratio for dimension $d$ is defined in Eq. (\ref{eq:homophily_ratio}). The proportion of heterophilic edges is given by $1 - h_d$. As $h_d$ decreases, the level of homophily in dimension $d$ decreases, while the level of heterophily increases.
\begin{equation}
    h_d = \frac{\sum_{i=1}^C \left(\mathcal{Y}_{d}\right)_{ii}}{\sum_{i,j=1}^{C} \left(\mathcal{Y}_{d}\right)_{ij}}.
\label{eq:homophily_ratio}
\end{equation}

\subsection{Chebyshev Polynomials} For $x \in \mathbb{R}$, the Chebyshev polynomials $T_k(x)$ are recursively defined by the relation $T_{k+1}(x) = 2 \, x \, T_k(x) - T_{k-1}(x)$, with initial conditions $T_0(x) = 1$ and $T_1(x) = x$. For matrices, this recurrence is applied element-wise, making Chebyshev polynomials useful in approximating spectral functions.

\subsection{Spectral Graph Convolution} Spectral graph neural networks operate based on spectral graph convolutions. Recall that $X \in \mathbb{R}^{N \times F}$ denotes the node feature matrix introduced in Sec.~3.1, where each row corresponds to one node and each column to one feature. For dimension $d$, this operation is given by:
\begin{equation}
    Z_d = f_d(L_d) \, X, \qquad \forall d \in \left\{1, \dots, D \right\},
\end{equation}
\noindent where $Z_d \in \mathbb{R}^{N \times F'}$ is the transformed feature matrix for dimension $d$ (with $F'$ the embedding dimension), and $f_d(L_d) \in \mathbb{R}^{N \times N}$ is the spectral filter computed using the normalized Laplacian matrix $L_d$. Let $U_d \, \Lambda_d \, U_d^\top$ be the eigendecomposition of $L_d$, $\Lambda_d = \text{diag}(\lambda_{d}^{(1)}, \dots, \lambda_{d}^{(N)})$ is the matrix of eigenvalues representing the graph frequencies, and $U_d = [u_{d}^{(1)}, \dots, u_{d}^{(N)}]$ contains the eigenvectors. The graph Fourier transform of a graph signal $x \in \mathbb{R}^{N}$ is defined as $\hat{x} = U_d^\top \, x$, and the inverse transform is $x = U_d \, \hat{x}$. In the spectral domain, the spectral filter modulates the frequency response as expressed in:
\begin{equation}
    Z_d = f_d(L_d) \, X = U_d \, f_d(\Lambda_d) \, U_d^\top \, X.
\end{equation}

\section{The Proposed \methodname Approach}

In this section, we describe the proposed approach \methodname (Heterophily-Aware Adaptive Multiplex model).

\subsection{Overall Framework}

\begin{figure*}
  \centering
  \includegraphics [width=\linewidth]{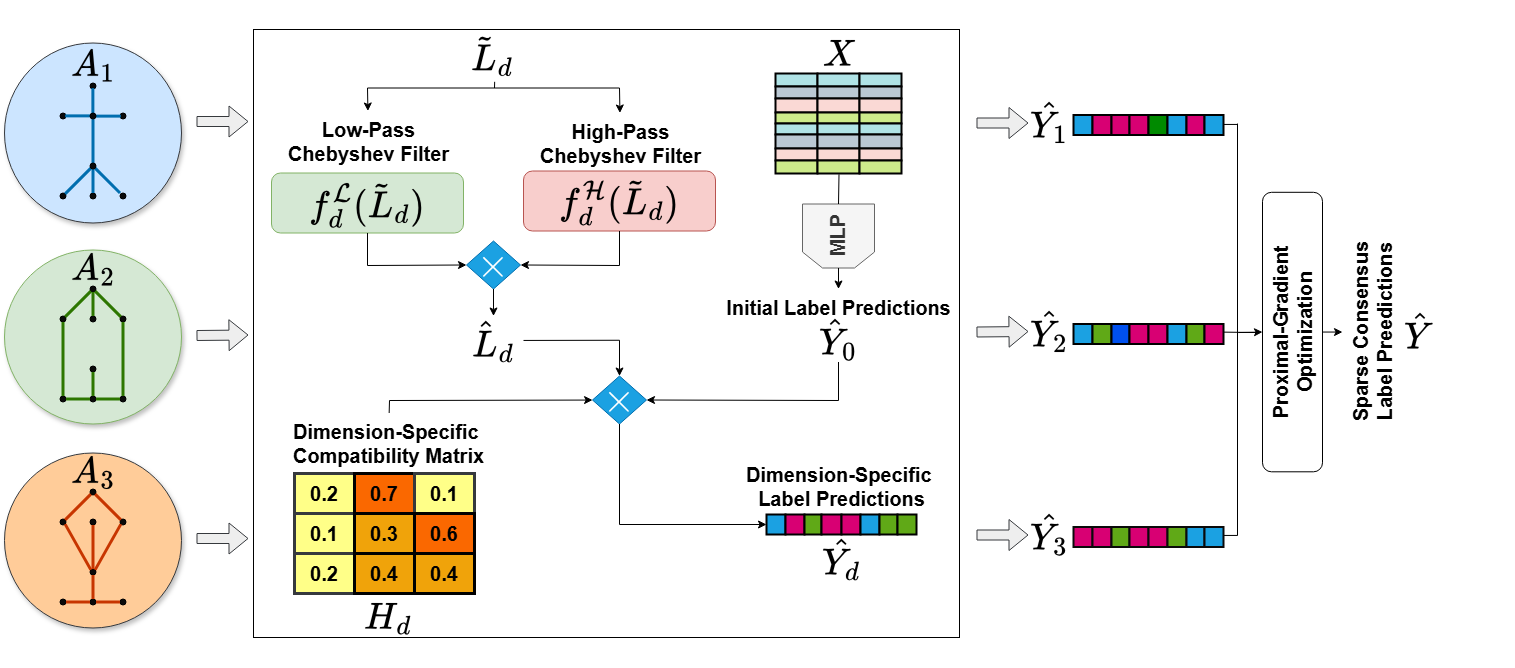}
  \caption{The architecture of \methodname.}
  \label{fig:overall_framework}
\end{figure*}

Fig. \ref{fig:overall_framework} depicts the architecture of \methodname. The process starts by embedding the features $X$ using a Multilayer Perceptron (MLP) to generate an initial distribution of label predictions $\hat{Y}_0 \in \mathbb{R}^{N \times C}$. In high heterophily settings, the graph topology reflects more complex relations between the label distribution and node features compared to homophilous graphs \cite{lim2021large}. To overcome this, we embed $X$ independently from the graph structures to extract a prior distribution of node labels, which is then refined in the subsequent stages. 

We formulate learnable low-pass and high-pass Chebyshev filters for each dimension $d$, denoted by $f_{d}^{\mathcal{L}}$ and $f_{d}^{\mathcal{H}}$, respectively. The filters are computed based on the rescaled Laplacian matrices $\tilde{L}_d = 2 \, L_{d} \, / \, \lambda_{d}^{\text{max}} - I$, where $\lambda_{d}^{\text{max}}$ is the greatest eigenvalue of the normalized Laplacian $L_{d}$. The low-pass filters smooth neighboring signals, emphasizing homophilic properties in the graph structures. Conversely, the high-pass filters capture significant variations between adjacent nodes, highlighting heterophilous properties. 

For each dimension, the low-pass and high-pass spectral filters extract distinct information, which we combine through a matrix product to obtain
\begin{equation}
    \hat{L}_d = f_{d}^{\mathcal{L}}(\tilde{L}_d) \cdot f_{d}^{\mathcal{H}}(\tilde{L}_d), \qquad \hat{L}_d \in \mathbb{R}^{N \times N}.
\end{equation}
This contrasts with previous work that employs linear combinations and concatenations to fuse filters \cite{liu2023beyond, chen2024polygcl, duan2024unifying}. 

The next step is to compute the updated label predictions $\hat{Y}_d$ for each dimension $d$. To this end, we model the probability of connections between nodes in different classes using an empirically estimated compatibility matrix $H_d \in \mathbb{R}^{C \times C}$ \cite{zhu2021graph}. As explained in Sec. \ref{sec:compa_matrix}, the entry $(H_d)_{c_1 c_2}$ represents the likelihood that nodes from class $c_1$ in dimension $d$ connect to nodes from class $c_2$. 
Both matrices, $\hat{L}_d$ and $H_d$, capture heterophily and homophily within the graph. While $\hat{L}_d$ diffuses information across the topology, $H_d$ adjusts the predictions based on inter-class connection probabilities. Accordingly, the updated predictions $\hat{Y}_d$ are expressed as:
\begin{equation} \label{eq:gnn_update_rule}
    \hat{Y}_d = \mathrm{softmax}(\hat{L}_d \, \cdot \, \hat{Y}_0 \, \cdot \, H_d), 
\end{equation}
where $\hat{Y}_d \in \mathbb{R}^{N \times C}$ denotes the dimension-specific label predictions and $H_d \in \mathbb{R}^{C \times C}$ is the compatibility matrix introduced in Sec.~\ref{sec:compa_matrix}.

Finally, we generate the consensus label predictions $\hat{Y}$ from all the dimension-specific predictions $\hat{Y}_d$. Specifically, we minimize the divergence between $\hat{Y}_d$ and $\hat{Y}$ while maximizing sparsity in the consensus predictions. To manage the non-smooth regularization that induces sparsity, we utilize proximal-gradient optimization.

\subsection{Composition of Chebyshev Filters}\label{sec:product_filter}

We define the filters $f^{\mathcal{L}}_{d}$ and $f^{\mathcal{H}}_{d}$ as Chebyshev polynomials \cite{defferrard2016convolutional} to approximate optimal spectral filters.  
Let $K$ denote the degree of the polynomial filters, then:
\begin{equation}\label{eq:low_high_polynomials}
    f_d^{\mathcal{L}}(\tilde{L}_d) = \sum_{k=0}^K \theta_k^{\mathcal{L}, d} \, T_k(\tilde{L}_d), \;\; f_d^{\mathcal{H}}(\tilde{L}_d) = \sum_{k=0}^K \theta_{k}^{\mathcal{H}, d} \, T_k(\tilde{L}_d),
\end{equation}
where $\theta_k^{\mathcal{L}, d}$ and $\theta_k^{\mathcal{H}, d}$ are the polynomial coefficients of the low-pass and high-pass filters, respectively, for dimension $d$. 
These coefficients control which spectral components of the graph signal are emphasized. Low-pass filters aim to retain low-frequency components, which correspond to smooth variations across the graph. These components are represented by lower eigenvalues. On the other hand, high-pass filters are designed to capture high-frequency components, which correspond to rapid variations between neighboring nodes. These rapid variations are captured by higher eigenvalues. Intuitively, low-pass filters capture homophilic relations, while high-pass filters capture heterophilic relations.

Similar to \cite{defferrard2016convolutional}, it is possible to optimize the coefficients $\theta_k^{\mathcal{L}, d}$ and $\theta_k^{\mathcal{H}, d}$ via gradient descent.
However, the unconstrained coefficients might not capture the expressive power of Chebyshev coefficients and can lead to overfitting \cite{he2022convolutional}. We aim to approximate arbitrary low-pass and high-pass filters that could adapt to the specific characteristics of the multiplex graph. To achieve this, we adopt a double reparametrization approach. Given $\mathcal{F} \in \left\{\mathcal{L}, \mathcal{H} \right \}$, we first reparameterize 
$\theta_k^{\mathcal{F},d}$ by a vector $\gamma^{\mathcal{F}, d} \in \mathbb{R}^{K+1}$ to capture the characteristics of Chebyshev coefficients, following the formulation in \cite{he2022convolutional}:
\begin{equation}\label{eq:theta_reparam}
    \theta_k^{\mathcal{F}, d} = \frac{2}{K + 1} \sum_{j=0}^K \gamma_j^{\mathcal{F}, d} \:\: T_k\left(cos\left(\frac{j + 1/2}{K+1} \pi\right)\right).
\end{equation}
Constraining the coefficients allows to polynomially approximate an arbitrary spectral filter with an optimal convergence rate \cite{he2022convolutional}. Second, we further reparametrize $\gamma^{\mathcal{F}, d}$ using prefix difference and prefix sum \cite{chen2024polygcl}:
\begin{equation}\label{eq:gamma_high_low}
    \gamma_i^{\mathcal{L}, d} = \gamma_0^d - \sum_{j=1}^i \gamma_j^d, \ \ \ \ \gamma_i^{\mathcal{H}, d} = \sum_{j=0}^i\gamma_j^d,
\end{equation}
where $\gamma_0^{\mathcal{F}, d} = \gamma_0^d$ is a predefined initial value and $\gamma^d = \left[ \gamma^d_1, \dots,  \gamma^d_K \right]$ is a vector of non-negative learnable parameters. By substituting $\gamma_i^{\mathcal{L}, d}$ and $\gamma_i^{\mathcal{H}, d}$ into Eq. (\ref{eq:theta_reparam}), we derive the low-pass and high-pass filters, respectively. This process ensures that the Chebyshev polynomials $f_d^{\mathcal{L}}$ and $f_d^{\mathcal{H}}$ effectively approximate filters with low-pass and high-pass properties.

The filters $f_d^{\mathcal{L}}(\tilde{L}_d)$ and $f_d^{\mathcal{H}}(\tilde{L}_d)$ capture two distinct perspectives for each dimension: one focusing on homophilic relations, the other on heterophily. 
Since we aim to elaborate an adaptive approach, it is essential to combine these filters. Previous work \cite{liu2023beyond, chen2024polygcl, duan2024unifying} has explored linear combinations and concatenations to fuse filters. In this work, we propose an alternative method by utilizing the product of $f_d^{\mathcal{L}}(\tilde{L}_d)$ and $f_d^{\mathcal{H}}(\tilde{L}_d)$. The product is derived from the composition of the two filters as explained in the Prop. \ref{prop.A}.

\begin{proposition} [Spectral response of filter composition. Proof in \ref{proof:prop1}] \label{prop.A}
The application of a low-pass filter $f^{\mathcal{L}}(L)$ followed by a high-pass filter $f^{\mathcal{H}}(L)$ to a graph signal $x$ is equivalent to the application of a filter $f(L)$ whose eigenvalues are equal to the element-wise product of the eigenvalues of $f^{\mathcal{L}}(L)$ and $f^{\mathcal{H}}(L)$. Formally, the filter output is given by:
\[
y = f(L) \, x = U \left( f^{\mathcal{H}}(\Lambda) \cdot f^{\mathcal{L}}(\Lambda) \right) U^{\top} x,
\]
where $y$ is the filter output, $\Lambda$ is the eigenvalue matrix of $L$, and $U$ contains the corresponding eigenvectors.
\end{proposition}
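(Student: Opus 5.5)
The plan is to exploit the fact that both filters are functions of the same symmetric matrix $L$. Both are therefore diagonalized by the same orthonormal eigenbasis $U$. First, I will record that the normalized Laplacian $L$ (and likewise the rescaled $\tilde{L} = 2L/\lambda^{\max} - I$) is real symmetric, since $\Delta^{-1/2}(A+I)\Delta^{-1/2}$ is symmetric. By the spectral theorem, $L = U\Lambda U^\top$ with $U^\top U = UU^\top = I$. The rescaling only shifts and scales the eigenvalues, so $\tilde{L} = U\tilde{\Lambda}U^\top$ with the same $U$. Hence it suffices to argue for a generic symmetric $L$, which is exactly how the statement is phrased.

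Second, I will show that any polynomial filter $f(L)=\sum_{k=0}^K \theta_k T_k(L)$ satisfies $f(L) = U f(\Lambda) U^\top$, where $f(\Lambda)=\mathrm{diag}(f(\lambda^{(1)}),\dots,f(\lambda^{(N)}))$. The argument is an induction on the Chebyshev recurrence. The base cases are immediate: $T_0(L)=I=UU^\top$ and $T_1(L)=U\Lambda U^\top$. If $T_k(L)=UT_k(\Lambda)U^\top$ for indices $k$ and $k-1$, then
\[
T_{k+1}(L)=2LT_k(L)-T_{k-1}(L)=U\bigl(2\Lambda T_k(\Lambda)-T_{k-1}(\Lambda)\bigr)U^\top=UT_{k+1}(\Lambda)U^\top,
\]
where the middle step uses $U^\top U=I$. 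Linearity in the coefficients $\theta_k$ then gives the claim for $f^{\mathcal{L}}$ and $f^{\mathcal{H}}$.

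One point needs care here. The preliminaries say the recurrence is applied "element-wise" to matrices. I will interpret this as the matrix polynomial recurrence, since that is what is used in spectral convolution, and I will note this interpretation explicitly.

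Third, I will compose the two filters. Applying the low-pass filter first and then the high-pass filter gives $y = f^{\mathcal{H}}(L)\bigl(f^{\mathcal{L}}(L)x\bigr)$. Substituting the spectral forms gives
\[
y = U f^{\mathcal{H}}(\Lambda)U^\top U f^{\mathcal{L}}(\Lambda)U^\top x = U\bigl(f^{\mathcal{H}}(\Lambda)f^{\mathcal{L}}(\Lambda)\bigr)U^\top x.
\]
A product of diagonal matrices is diagonal, with entries $f^{\mathcal{H}}(\lambda^{(i)})f^{\mathcal{L}}(\lambda^{(i)})$. So $f(L):=f^{\mathcal{H}}(L)f^{\mathcal{L}}(L)$ is itself diagonalized by $U$, and its eigenvalues are the element-wise products of the eigenvalues of the two filters. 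This is exactly the statement of the proposition.

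I will also note that diagonal matrices commute. Consequently $f^{\mathcal{H}}(L)f^{\mathcal{L}}(L)=f^{\mathcal{L}}(L)f^{\mathcal{H}}(L)$, which reconciles the statement with the definition $\hat{L}_d = f_d^{\mathcal{L}}(\tilde{L}_d)f_d^{\mathcal{H}}(\tilde{L}_d)$ used in the paper, where the factors appear in the opposite order.

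The only genuine obstacle is the second step: justifying that the Chebyshev matrix polynomial is diagonalized by $U$, including the element-wise versus matrix-recurrence ambiguity noted above. Everything after that is direct algebra using orthogonality of $U$.
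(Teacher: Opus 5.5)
Your proposal is correct and follows essentially the same route as the paper: write both filters in the common eigenbasis as $U f(\Lambda) U^\top$, cancel $U^\top U = I$ in the composition, and read off the diagonal product $f^{\mathcal{H}}(\Lambda) f^{\mathcal{L}}(\Lambda)$. The paper simply takes $f(L)=U f(\Lambda)U^\top$ as the definition of a spectral filter (Sec.~3.4). Your induction on the Chebyshev recurrence, and your choice to read the ``element-wise'' wording as the matrix polynomial, are extra justification rather than a different argument.
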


\begin{corollary} [Order-invariance of filter composition. Proof in \ref{proof:coro1}] \label{coro.1}
The composition of a low-pass filter $f^{\mathcal{L}}(L)$ followed by a high-pass filter $f^{\mathcal{H}}(L)$ is order-invariant as expressed in:
\[
f^{\mathcal{H}}(L) \, \cdot \, f^{\mathcal{L}}(L) = f^{\mathcal{L}}(L) \, \cdot \, f^{\mathcal{H}}(L).
\]
\end{corollary}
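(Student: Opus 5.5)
The plan is to derive Corollary~\ref{coro.1} directly from Proposition~\ref{prop.A} by passing to the spectral domain, where both filters are diagonal and therefore commute. The only structural fact needed is that $L$ is real symmetric. Both the normalized Laplacian $L_d$ and its rescaling $\tilde{L}_d = 2L_d/\lambda_d^{\max} - I$ are symmetric. Hence there is an orthonormal eigendecomposition $L = U\Lambda U^\top$ with $U^\top U = UU^\top = I$.

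First, I would record that any polynomial filter, in particular the Chebyshev forms in Eq.~(\ref{eq:low_high_polynomials}), satisfies $f(L) = U f(\Lambda) U^\top$. This holds because $L^k = U\Lambda^k U^\top$ by induction, using $U^\top U = I$. Linearity then extends it to $\sum_k \theta_k T_k(L)$, since each $T_k$ is a polynomial. This is exactly the representation Proposition~\ref{prop.A} relies on, so I may simply invoke it. The proposition gives $f^{\mathcal{H}}(L)\, f^{\mathcal{L}}(L) = U\, f^{\mathcal{H}}(\Lambda) f^{\mathcal{L}}(\Lambda)\, U^\top$, and the middle factor $U^\top U$ collapses to $I$.

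Second, I would apply the same computation with the order reversed. This yields $f^{\mathcal{L}}(L)\, f^{\mathcal{H}}(L) = U\, f^{\mathcal{L}}(\Lambda) f^{\mathcal{H}}(\Lambda)\, U^\top$. The matrices $f^{\mathcal{L}}(\Lambda)$ and $f^{\mathcal{H}}(\Lambda)$ are diagonal, with entries $f^{\mathcal{L}}(\lambda^{(i)})$ and $f^{\mathcal{H}}(\lambda^{(i)})$. Diagonal matrices commute, because their product is the entrywise product of scalars, which is commutative. So the two middle factors agree, and conjugating by $U$ gives the claimed equality.

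There is no real obstacle here; the one point requiring care is the justification that $U$ is orthogonal, i.e., that $L$ is symmetric. As an alternative route avoiding eigendecompositions, I would note that both filters are polynomials in the same matrix $L$, and any two polynomials $p(L)$, $q(L)$ commute, since powers of $L$ commute with each other. This gives a one-line backup argument that does not even need symmetry.
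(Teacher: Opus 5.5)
Your proposal is correct and follows essentially the same route as the paper: write both filters in the common orthonormal eigenbasis $U$ of $L$ (as in Proposition~\ref{prop.A}), note that the diagonal spectral responses $f^{\mathcal{L}}(\Lambda)$ and $f^{\mathcal{H}}(\Lambda)$ commute, and conjugate back by $U$. Your backup argument, that any two polynomials in the same matrix commute, is a valid one-line alternative the paper does not use; for the Chebyshev filters actually used in the model, it removes the need to assume $L$ is symmetric, i.e., that the graph is undirected.
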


The Corollary implies that the output of the filter is the same regardless of whether the high-pass or low-pass filter is applied first, as the operation is commutative. 
Compared with the linear combination, the product can capture non-linear interactions between the low-frequency and high-frequency components and introduces higher-order Chebyshev terms, which allow for capturing more complex interactions. However, computing the product of $N \times N$ matrices is computationally expensive. Moreover, applying gradient descent and backpropagation through the products of large matrices incurs substantial memory costs. To mitigate this issue, we express the product of Chebyshev polynomials as a linear combination of Chebyshev polynomials weighted by the coefficients $\theta_i^{\mathcal{L}, d} \, \theta_i^{\mathcal{H}, d}$, as explained in Prop. \ref{prop:2}.

\begin{proposition} [Chebyshev product-to-sum expansion. Proof in \ref{proof:prop2}] \label{prop:2}
Given the low-pass Chebyshev filter $f_d^{\mathcal{L}}(\tilde{L}_d)$ and the high-pass Chebyshev filter $f_d^{\mathcal{H}}(\tilde{L}_d)$, the product of these filters $\hat{L}_d = f_d^{\mathcal{L}}(\tilde{L}_d) \cdot f_d^{\mathcal{H}}(\tilde{L}_d)$ can be expressed as a sum of Chebyshev polynomials weighted by $\theta_i^{\mathcal{L}, d} \: \theta_j^{\mathcal{H}, d}$:

\begin{equation*}\label{eq:product_to_sum}
    \hat{L}_d = \frac{1}{2} \sum_{i=0}^K \sum_{j=0}^K \theta_i^{\mathcal{L}, d} \, \theta_j^{\mathcal{H}, d} \left[T_{i+j}\left(\tilde{L}_d\right) + T_{|i-j|}\left(\tilde{L}_d\right)\right].
\end{equation*}
\end{proposition}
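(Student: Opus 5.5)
The plan is to reduce the matrix statement to the scalar product-to-sum identity for Chebyshev polynomials, $T_i(x)\,T_j(x) = \tfrac12\bigl[T_{i+j}(x) + T_{|i-j|}(x)\bigr]$, and then lift it to $\tilde{L}_d$. I would first expand the product using the definitions in Eq. (\ref{eq:low_high_polynomials}) and bilinearity of matrix multiplication:
\[
\hat{L}_d = \sum_{i=0}^K \sum_{j=0}^K \theta_i^{\mathcal{L}, d}\, \theta_j^{\mathcal{H}, d}\, T_i(\tilde{L}_d)\, T_j(\tilde{L}_d).
\]
This is purely algebraic and needs no commutativity, since the scalar coefficients factor out. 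It therefore suffices to show $T_i(\tilde{L}_d)\,T_j(\tilde{L}_d) = \tfrac12\bigl[T_{i+j}(\tilde{L}_d) + T_{|i-j|}(\tilde{L}_d)\bigr]$ for all $i,j \ge 0$.

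For the scalar identity, I would take the shortest route. On $[-1,1]$, write $x=\cos\phi$. It follows by induction from the recurrence that $T_k(\cos\phi)=\cos(k\phi)$, because $2\cos\phi\cos(k\phi)-\cos((k-1)\phi)=\cos((k+1)\phi)$. The product-to-sum trigonometric formula $\cos(i\phi)\cos(j\phi)=\tfrac12[\cos((i+j)\phi)+\cos((i-j)\phi)]$, together with evenness of cosine, then gives the identity on $[-1,1]$. Both sides are polynomials that agree on infinitely many points, so they agree as polynomials in $\mathbb{R}[x]$. As an alternative that avoids trigonometry, I could argue by induction on $\min(i,j)$ directly from the recurrence, since $T_{i}T_{j+1} = 2xT_iT_j - T_iT_{j-1}$.

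Lifting the identity to matrices is the step needing care. The paper states that the recurrence is ``applied element-wise'', but the intended meaning, consistent with Proposition \ref{prop.A}, is matrix polynomial evaluation: $T_0(\tilde L_d)=I$, $T_1(\tilde L_d)=\tilde L_d$, and $T_{k+1}(\tilde L_d)=2\tilde L_d T_k(\tilde L_d)-T_{k-1}(\tilde L_d)$. Under this reading, the map $p\mapsto p(\tilde L_d)$ from $\mathbb{R}[x]$ to matrices is a ring homomorphism, because all powers of a single matrix commute. A polynomial identity $p q = r$ therefore yields $p(\tilde L_d)q(\tilde L_d)=r(\tilde L_d)$. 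Equivalently, since $\tilde L_d = U_d(2\Lambda_d/\lambda_d^{\max}-I)U_d^\top$ is symmetric, one can diagonalize and apply the scalar identity eigenvalue-wise, exactly as in Proposition \ref{prop.A}. This route has one detail to check: if the scalar identity is proved only on $[-1,1]$, the eigenvalues of $\tilde L_d$ must lie in $[-1,1]$, which holds by the rescaling by $\lambda_d^{\max}$. The polynomial-identity argument removes even that requirement.

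Substituting the matrix identity into the double sum gives the claimed expansion. The only genuine obstacle is fixing the interpretation of $T_k(\tilde L_d)$ as a matrix function rather than an entrywise map, since the claimed identity fails for entrywise evaluation. Everything else is routine.
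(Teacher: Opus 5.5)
Your proposal is correct and follows the paper's proof: expand the product bilinearly into the double sum, then apply the scalar identity $T_i(x)T_j(x)=\tfrac12\left[T_{i+j}(x)+T_{|i-j|}(x)\right]$, obtained from $x=\cos\phi$ and the cosine product-to-sum formula. Your extra step lifting that identity to $\tilde L_d$ makes explicit something the paper passes over when it applies its scalar lemma (proved via $\arccos x$) directly to the matrix. That step uses agreement as polynomials plus the evaluation homomorphism, under the matrix-polynomial rather than entrywise reading of $T_k(\tilde L_d)$.
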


Prop. \ref{prop:2} allows to reduce the computational and memory overhead. Rather than performing direct matrix multiplication, the matrix terms $T_k(\tilde{L}_d)$ are summed and multiplied by the scalar coefficients $\theta_i^{\mathcal{L}, d}$ and $\theta_i^{\mathcal{H},d}$.

\subsection{Stability of the Composed Chebyshev Filter}
\label{sec:stability_product_filter}

We now provide a stability analysis of the proposed composed operator $\hat{L}_d = f_d^{\mathcal{L}}(\tilde{L}_d)\, f_d^{\mathcal{H}}(\tilde{L}_d)$, where $\tilde{L}_d = 2\,L_d / \lambda_{d}^{\text{max}} - I$ is the rescaled Laplacian used in Sec.~\ref{sec:product_filter}. Throughout, $\|\cdot\|_2$ denotes the spectral norm (largest singular value) and $\|\cdot\|_F$ denotes the Frobenius norm.

\paragraph{Why rescaling matters} When $L_d$ is symmetric (e.g., undirected graphs with symmetric normalization), it admits an eigendecomposition $L_d = U_d \Lambda_d U_d^\top$ with real eigenvalues $\lambda_{d}^{(i)}\in[0,\lambda_{d}^{\text{max}}]$. Therefore, $\tilde{L}_d = 2\,L_d/\lambda_{d}^{\text{max}} - I$ has eigenvalues $\tilde{\lambda}_{d}^{(i)} = 2\,\lambda_{d}^{(i)}/\lambda_{d}^{\text{max}} - 1 \in [-1,1]$. This is the classical regime where Chebyshev polynomials satisfy $|T_k(x)|\le 1$ for $x\in[-1,1]$, preventing the exponential growth that occurs when $|x|>1$.

\begin{definition}[BIBO stability]
\label{def:bibo}
Fix a dimension $d\in\{1,\dots,D\}$. We say that a linear graph operator $\mathcal{T}_d\in\mathbb{R}^{N\times N}$ is \emph{bounded-input bounded-output (BIBO) stable} with respect to the Frobenius norm if there exists a constant $C_d<\infty$ such that, for every matrix-valued graph signal $S\in\mathbb{R}^{N\times C}$, we have:
\[
\|\mathcal{T}_d\,S\|_F \le C_d\,\|S\|_F.
\]
In \methodname, the relevant operators are Chebyshev polynomial filters $f_d(\tilde{L}_d)$ and the composed operator $\hat{L}_d=f_d^{\mathcal{L}}(\tilde{L}_d)\,f_d^{\mathcal{H}}(\tilde{L}_d)$ acting on the initial predictions $\hat{Y}_0\in\mathbb{R}^{N\times C}$.
\end{definition}

\begin{proposition}[Bounded Chebyshev bases. Proof in Appendix~\ref{proof:prop_cheb_basis_bound}]
\label{prop:cheb_basis_bound}
Let $\tilde{L}_d\in\mathbb{R}^{N\times N}$ be symmetric and admit the eigendecomposition $\tilde{L}_d = U_d \tilde{\Lambda}_d U_d^\top$, where $\tilde{\Lambda}_d=\mathrm{diag}(\tilde{\lambda}_{d}^{(1)},\dots,\tilde{\lambda}_{d}^{(N)})$ satisfies $\tilde{\lambda}_{d}^{(i)}\in[-1,1]$ for all $i\in\{1,\dots,N\}$.
Then for all integers $k\ge 0$, we have:
\[
\|T_k(\tilde{L}_d)\|_2 \le 1.
\]
\end{proposition}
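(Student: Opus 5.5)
The plan is to diagonalize the matrix polynomial and then bound the scalar Chebyshev polynomials on $[-1,1]$.

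\emph{Step 1: diagonalize.} I will first show by induction on $k$ that $T_k(\tilde{L}_d) = U_d\, T_k(\tilde{\Lambda}_d)\, U_d^\top$, where $T_k(\tilde{\Lambda}_d)=\mathrm{diag}\bigl(T_k(\tilde{\lambda}_d^{(1)}),\dots,T_k(\tilde{\lambda}_d^{(N)})\bigr)$. Here $T_k$ of a matrix is understood through the matrix recurrence $T_{k+1}(M)=2M\,T_k(M)-T_{k-1}(M)$, with $T_0(M)=I$ and $T_1(M)=M$. The base cases hold because $U_d U_d^\top = I$ and $\tilde{L}_d=U_d\tilde{\Lambda}_dU_d^\top$. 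For the inductive step, I substitute into the recurrence and use $U_d^\top U_d=I$:
\[
2\,U_d\tilde{\Lambda}_dU_d^\top\, U_d T_k(\tilde{\Lambda}_d)U_d^\top - U_dT_{k-1}(\tilde{\Lambda}_d)U_d^\top = U_d\bigl(2\tilde{\Lambda}_dT_k(\tilde{\Lambda}_d)-T_{k-1}(\tilde{\Lambda}_d)\bigr)U_d^\top .
\]
Since diagonal matrices multiply entrywise on the diagonal, the bracket equals $T_{k+1}(\tilde{\Lambda}_d)$. This is essentially the same spectral argument that underlies Proposition~\ref{prop.A}.

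\emph{Step 2: spectral norm of a symmetric matrix.} The matrix $T_k(\tilde{L}_d)$ is symmetric and orthogonally diagonalized by $U_d$. Its spectral norm is therefore the largest absolute eigenvalue, $\max_i |T_k(\tilde{\lambda}_d^{(i)})|$, because orthogonal transformations preserve the $2$-norm.

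\emph{Step 3: scalar bound.} It remains to show that $|T_k(x)|\le 1$ for $x\in[-1,1]$. I will set $x=\cos\phi$ with $\phi\in[0,\pi]$ and prove by induction that $T_k(\cos\phi)=\cos(k\phi)$. The inductive step uses the identity
\[
\cos((k+1)\phi)+\cos((k-1)\phi)=2\cos\phi\cos(k\phi),
\]
which is exactly the defining recurrence. Hence $|T_k(x)|=|\cos(k\phi)|\le 1$. Because every $\tilde{\lambda}_d^{(i)}$ lies in $[-1,1]$ by hypothesis, Step 2 gives the claimed bound $\|T_k(\tilde{L}_d)\|_2\le 1$.

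\emph{Main obstacle.} The argument has no real obstacle. The only point needing care is Step 1: the paper says the recurrence is applied ``element-wise'' to matrices, and this must be read as the matrix-polynomial recurrence, not entrywise application. I will state that interpretation explicitly. Under it, the proof reduces to the cosine identity.
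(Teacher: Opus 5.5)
Your proposal is correct and follows the paper's proof essentially step for step. Like the paper, you diagonalize $T_k(\tilde{L}_d)=U_d\,T_k(\tilde{\Lambda}_d)\,U_d^\top$, identify the spectral norm with $\max_i |T_k(\tilde{\lambda}_d^{(i)})|$, and invoke $|T_k(x)|\le 1$ on $[-1,1]$; you simply supply the two facts the paper takes for granted (the induction on the matrix recurrence and $T_k(\cos\phi)=\cos(k\phi)$), and your remark that ``element-wise'' must be read as the matrix-polynomial recurrence rather than entrywise application is well taken.
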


Prop.~\ref{prop:cheb_basis_bound} implies that finite-order Chebyshev filters are bounded by the $\ell_1$ magnitude of their coefficients, yielding a BIBO-type stability guarantee.

\begin{proposition}[BIBO stability of Chebyshev polynomial filters. Proof in Appendix~\ref{proof:prop_bibo_cheb}]
\label{prop:bibo_cheb}
Let $f_d(\tilde{L}_d)=\sum_{k=0}^{K}\alpha_k \,T_k(\tilde{L}_d)$, where $\tilde{L}_d$ is symmetric and its eigenvalues satisfy $\tilde{\lambda}_{d}^{(i)}\in[-1,1]$ for all $i$. Then
\[
\|f_d(\tilde{L}_d)\|_2 \le \sum_{k=0}^{K}|\alpha_k|
\quad\text{and}\quad
\|f_d(\tilde{L}_d)\,S\|_F \le \Big(\sum_{k=0}^{K}|\alpha_k|\Big)\,\|S\|_F,
\quad \forall S\in\mathbb{R}^{N\times C}.
\]
\end{proposition}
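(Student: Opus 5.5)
The plan is to combine the triangle inequality for the spectral norm with Proposition~\ref{prop:cheb_basis_bound}, and then pass from the operator bound to the Frobenius bound column by column.

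\textbf{Spectral-norm bound.} Since $\|\cdot\|_2$ is a norm on $\mathbb{R}^{N\times N}$, it is subadditive and absolutely homogeneous. Applying this to $f_d(\tilde{L}_d)=\sum_{k=0}^{K}\alpha_k T_k(\tilde{L}_d)$ gives
\[
\|f_d(\tilde{L}_d)\|_2 \le \sum_{k=0}^{K}|\alpha_k|\,\|T_k(\tilde{L}_d)\|_2 .
\]
Proposition~\ref{prop:cheb_basis_bound} applies under exactly the stated hypotheses (symmetric $\tilde{L}_d$ with spectrum in $[-1,1]$) and gives $\|T_k(\tilde{L}_d)\|_2\le 1$ for every $k$. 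This yields the first inequality.

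As a sanity check, there is an alternative spectral route. Write $f_d(\tilde{L}_d)=U_d\, f_d(\tilde{\Lambda}_d)\,U_d^\top$, so that the spectral norm equals $\max_i |f_d(\tilde{\lambda}_d^{(i)})|$. This is bounded by $\sum_k|\alpha_k|$ because $|T_k(x)|\le 1$ on $[-1,1]$. I would mention this only as a remark; the triangle-inequality argument is cleaner given the earlier proposition.

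\textbf{Frobenius bound.} Write $S=[s_1,\dots,s_C]$ by columns. Then
\[
\|f_d(\tilde{L}_d)S\|_F^2=\sum_{c}\|f_d(\tilde{L}_d)s_c\|_2^2\le \|f_d(\tilde{L}_d)\|_2^2\sum_c\|s_c\|_2^2=\|f_d(\tilde{L}_d)\|_2^2\,\|S\|_F^2 .
\]
This is the standard inequality $\|AB\|_F\le\|A\|_2\|B\|_F$. Taking square roots and inserting the first bound gives the second claim, with BIBO constant $C_d=\sum_k|\alpha_k|$ in the sense of Definition~\ref{def:bibo}.

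There is no real obstacle here. The only point needing care is to check that the hypotheses of Proposition~\ref{prop:cheb_basis_bound} hold: in particular, that $\tilde{L}_d$ is symmetric, so that the eigendecomposition, and hence the bound $\|T_k(\tilde{L}_d)\|_2\le 1$, is valid.
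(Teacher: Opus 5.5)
Your proof is correct and takes essentially the same route as the paper: the triangle inequality plus Proposition~\ref{prop:cheb_basis_bound} gives the spectral-norm bound, and $\|AB\|_F\le\|A\|_2\|B\|_F$ gives the Frobenius bound. The only cosmetic difference is the order of steps: the paper applies the triangle inequality directly to $\|\sum_k\alpha_k T_k(\tilde{L}_d)S\|_F$ and bounds each term by $\|T_k(\tilde{L}_d)\|_2\|S\|_F$, whereas you first obtain the operator bound and then invoke the Frobenius inequality once, which you verify column by column.
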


\begin{corollary}[Stability of the product-composed filter. Proof in Appendix~\ref{proof:coro_product_stability}]
\label{coro:product_stability}
For each dimension $d$, let $f_d^{\mathcal{L}}(\tilde{L}_d)=\sum_{k=0}^{K}\theta_k^{\mathcal{L},d}\,T_k(\tilde{L}_d)$ and $f_d^{\mathcal{H}}(\tilde{L}_d)=\sum_{k=0}^{K}\theta_k^{\mathcal{H},d}\,T_k(\tilde{L}_d)$. Define $\hat{L}_d=f_d^{\mathcal{L}}(\tilde{L}_d)\, f_d^{\mathcal{H}}(\tilde{L}_d)$. Then
\begin{align*}
\|\hat{L}_d\|_2
&\le
\Big(\sum_{k=0}^{K}|\theta_k^{\mathcal{L},d}|\Big)\,
\Big(\sum_{k=0}^{K}|\theta_k^{\mathcal{H},d}|\Big), \\
\|\hat{L}_d\,S\|_F
&\le
\Big(\sum_{k=0}^{K}|\theta_k^{\mathcal{L},d}|\Big)\,
\Big(\sum_{k=0}^{K}|\theta_k^{\mathcal{H},d}|\Big)\,
\|S\|_F,
\qquad \forall S\in\mathbb{R}^{N\times C}.
\end{align*}
Moreover, by Prop.~\ref{prop:2}, the product $\hat{L}_d$ can be written as a \emph{single} Chebyshev polynomial of degree $2K$, i.e.,
\[
\hat{L}_d
=
\sum_{r=0}^{2K} \bar{\theta}_{r}^{\,d}\, T_r(\tilde{L}_d),
\]
where $\bar{\theta}^{\,d}=[\bar{\theta}_{0}^{\,d},\dots,\bar{\theta}_{2K}^{\,d}]^\top\in\mathbb{R}^{2K+1}$ denotes the resulting coefficient vector. This induced vector satisfies
\[
\|\bar{\theta}^{\,d}\|_1 \le \|\theta^{\mathcal{L},d}\|_1 \, \|\theta^{\mathcal{H},d}\|_1,
\]
with $\theta^{\mathcal{L},d}=[\theta_0^{\mathcal{L},d},\dots,\theta_K^{\mathcal{L},d}]^\top$ and similarly for $\theta^{\mathcal{H},d}$.
\end{corollary}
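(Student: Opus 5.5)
The corollary has three claims, and I will take them in order: the spectral-norm bound, the Frobenius bound, and the coefficient bound.

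\textbf{Spectral-norm bound.} I will use submultiplicativity of the spectral norm:
\[
\|\hat{L}_d\|_2 \le \|f_d^{\mathcal{L}}(\tilde{L}_d)\|_2\,\|f_d^{\mathcal{H}}(\tilde{L}_d)\|_2 .
\]
Each factor is a Chebyshev polynomial filter in the symmetric matrix $\tilde{L}_d$, whose spectrum lies in $[-1,1]$. Prop.~\ref{prop:bibo_cheb} therefore applies to each factor separately. Taking $\alpha_k=\theta_k^{\mathcal{L},d}$ gives the low-pass bound, and taking $\alpha_k=\theta_k^{\mathcal{H},d}$ gives the high-pass bound. Multiplying the two bounds yields the first inequality.

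\textbf{Frobenius bound.} For any $S\in\mathbb{R}^{N\times C}$ I will use the standard inequality $\|MS\|_F\le \|M\|_2\|S\|_F$. One way to see it is to write $\|MS\|_F^2=\sum_c\|M s_c\|_2^2$ over the columns $s_c$ of $S$ and bound each term by $\|M\|_2^2\|s_c\|_2^2$. Setting $M=\hat{L}_d$ and inserting the spectral-norm bound gives the second inequality.

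\textbf{Single-polynomial representation.} Prop.~\ref{prop:2} writes $\hat{L}_d$ as
\[
\tfrac12\sum_{i,j}\theta_i^{\mathcal{L},d}\theta_j^{\mathcal{H},d}\bigl[T_{i+j}(\tilde{L}_d)+T_{|i-j|}(\tilde{L}_d)\bigr].
\]
The indices satisfy $0\le i+j\le 2K$ and $0\le |i-j|\le K$, so collecting terms by index $r$ gives a Chebyshev polynomial of degree at most $2K$ with
\[
\bar{\theta}_r^{\,d}=\tfrac12\sum_{i,j}\theta_i^{\mathcal{L},d}\theta_j^{\mathcal{H},d}\bigl(\mathbf{1}[i+j=r]+\mathbf{1}[|i-j|=r]\bigr).
\]

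\textbf{Coefficient bound.} Applying the triangle inequality to each $\bar{\theta}_r^{\,d}$ and summing over $r$ gives
\[
\|\bar{\theta}^{\,d}\|_1\le \tfrac12\sum_{i,j}|\theta_i^{\mathcal{L},d}||\theta_j^{\mathcal{H},d}|\sum_{r}\bigl(\mathbf{1}[i+j=r]+\mathbf{1}[|i-j|=r]\bigr).
\]
For each fixed pair $(i,j)$ the inner sum over $r$ equals exactly $2$, since exactly one $r$ matches $i+j$ and exactly one matches $|i-j|$. This holds even when $i=j$, or when $i$ or $j$ is $0$: then the two indicators may select the same $r$, contributing $2$ there, and the total is still $2$. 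The factor $\tfrac12$ cancels this, so the bound becomes $\sum_{i,j}|\theta_i^{\mathcal{L},d}||\theta_j^{\mathcal{H},d}|=\|\theta^{\mathcal{L},d}\|_1\|\theta^{\mathcal{H},d}\|_1$.

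The only step that needs care is this index bookkeeping in the coefficient bound, specifically the coinciding-index cases. The reindexing must count each product $\theta_i^{\mathcal{L},d}\theta_j^{\mathcal{H},d}$ with total weight exactly $\tfrac12\cdot 2=1$, with no double counting. The norm bounds themselves are routine.

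As a consistency check, Prop.~\ref{prop:bibo_cheb} applied to this single degree-$2K$ expansion gives $\|\hat{L}_d\|_2\le\|\bar{\theta}^{\,d}\|_1$. Combined with the coefficient bound, this recovers the first inequality by a second route.
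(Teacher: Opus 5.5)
Your proposal is correct and follows the same route as the paper. It uses submultiplicativity plus Prop.~\ref{prop:bibo_cheb} on each factor, the inequality $\|MS\|_F\le\|M\|_2\|S\|_F$, and a triangle-inequality bound on the coefficients collected from Prop.~\ref{prop:2}. Your explicit indicator bookkeeping, in which each pair $(i,j)$ receives total weight $\tfrac12\cdot 2=1$, is simply a more careful rendering of the paper's ``each pair contributes to at most two terms'' step.
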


\paragraph{Stability of the full update rule} Recall that \methodname\ predicts, for each dimension $d$, the score matrix $S_d = \hat{L}_d\,\hat{Y}_0\,H_d\in\mathbb{R}^{N\times C}$ and then applies a \emph{row-wise} softmax (Eq.~(\ref{eq:gnn_update_rule})) to obtain $\hat{Y}_d=\mathrm{softmax}(S_d)$. Using the submultiplicativity of induced norms, we obtain the bound:
\begin{equation}
\|S_d\|_F \le \|\hat{L}_d\|_2 \, \|\hat{Y}_0\|_F \, \|H_d\|_2.
\label{eq:stability_score_bound}
\end{equation}
Combining \eqref{eq:stability_score_bound} with Cor.~\ref{coro:product_stability} shows that the pre-softmax logits remain bounded whenever the coefficient $\ell_1$ norms and $\|H_d\|_2$ are controlled.

\begin{proposition}[Stability of the row-wise softmax. Proof in Appendix~\ref{proof:prop_softmax_stability}]
\label{prop:softmax_stability}
Let $S,S'\in\mathbb{R}^{N\times C}$ and define $Y=\mathrm{softmax}(S)$ and $Y'=\mathrm{softmax}(S')$,
where $\mathrm{softmax}(\cdot)$ is applied row-wise.
Then
\[
\|Y\|_F \le \sqrt{N}
\quad\text{and}\quad
\|Y-Y'\|_F \le \frac{1}{2}\,\|S-S'\|_F.
\]
\end{proposition}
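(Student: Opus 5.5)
The plan is to treat the two claims separately and to work row by row throughout, since the softmax acts independently on each row. We write $s_i, s_i'\in\mathbb{R}^C$ for the $i$-th rows of $S,S'$ and $y_i=\sigma(s_i)$, $y_i'=\sigma(s_i')$ for the corresponding rows of $Y,Y'$, where $\sigma:\mathbb{R}^C\to\mathbb{R}^C$ is the vector softmax.

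\textbf{Norm bound.} Each $y_i$ is a probability vector: its entries are nonnegative and sum to one. Hence $\|y_i\|_2^2=\sum_c y_{ic}^2\le \sum_c y_{ic}=1$. Summing over the $N$ rows gives $\|Y\|_F^2\le N$, which is the first inequality.

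\textbf{Lipschitz bound, reduction to the Jacobian.} We show that the vector softmax $\sigma$ is $\tfrac12$-Lipschitz in $\|\cdot\|_2$. Squaring this row bound and summing over $i$ then yields $\|Y-Y'\|_F\le \tfrac12\|S-S'\|_F$. The softmax is smooth, so by the fundamental theorem of calculus along the segment $s(t)=s_i'+t(s_i-s_i')$,
\[
\sigma(s_i)-\sigma(s_i')=\int_0^1 J(s(t))\,(s_i-s_i')\,dt .
\]
It therefore suffices to prove $\|J(s)\|_2\le \tfrac12$ uniformly in $s$. A direct computation gives
\[
J(s)=\mathrm{diag}(p)-pp^\top, \qquad p=\sigma(s).
\]

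\textbf{Main step: bounding the Jacobian.} This is where the work lies. $J$ is symmetric, so $\|J\|_2=\max_{\|v\|_2=1}|v^\top J v|$. For a unit vector $v$,
\[
v^\top J v=\sum_c p_c v_c^2-\Big(\sum_c p_c v_c\Big)^2=\mathrm{Var}_p(V),
\]
where $V$ is the random variable taking the value $v_c$ with probability $p_c$. This variance is nonnegative, so $J\succeq 0$. To bound it from above, apply Popoviciu's inequality $\mathrm{Var}(V)\le \tfrac14(\max_c v_c-\min_c v_c)^2$. For any $a,b$ we have $(v_a-v_b)^2\le 2(v_a^2+v_b^2)\le 2\|v\|_2^2=2$. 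Hence $\mathrm{Var}_p(V)\le \tfrac12$, and so $\|J(s)\|_2\le\tfrac12$ for every $s$.

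\textbf{Conclusion.} Combining this with the integral representation gives $\|y_i-y_i'\|_2\le\tfrac12\|s_i-s_i'\|_2$ for every row $i$. Summing the squares over $i$ completes the proof of the Lipschitz bound.
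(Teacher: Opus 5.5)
Your proof is correct and follows essentially the same route as the paper. Both arguments bound $\|Y\|_F$ row by row using that each row is a probability vector, write the Jacobian $\mathrm{diag}(p)-pp^\top$ as a variance, apply Popoviciu's inequality with $(v_a-v_b)^2\le 2$, and sum the squared row bounds. Two of your steps are slightly more careful than the paper's: you use the symmetry and positive semidefiniteness of $J$ to pass from the quadratic-form bound to $\|J\|_2$, and you use the integral form of the fundamental theorem of calculus rather than a vector-valued mean value theorem.
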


As an immediate consequence, small perturbations of the score matrix translate into controlled changes in the predicted probabilities. In particular, for two initial predictions $\hat{Y}_0$ and $\hat{Y}'_0$, letting
$S_d=\hat{L}_d\hat{Y}_0H_d$ and $S_d'=\hat{L}_d\hat{Y}_0'H_d$, we obtain:
\[
\|\hat{Y}_d-\hat{Y}_d'\|_F
=
\|\mathrm{softmax}(S_d)-\mathrm{softmax}(S_d')\|_F
\le
\frac{1}{2}\,\|\hat{L}_d\|_2\,\|H_d\|_2\,\|\hat{Y}_0-\hat{Y}_0'\|_F.
\]

\paragraph{Why the product is robust to high-frequency noise} Applying Prop.~\ref{prop.A} to $L=\tilde{L}_d$, the composed spectral response satisfies $f_d(\tilde{\lambda}) = f_d^{\mathcal{L}}(\tilde{\lambda})\, f_d^{\mathcal{H}}(\tilde{\lambda})$. Hence, a frequency component is amplified only if \emph{both} branches assign it a large gain, yielding a soft gating effect. The following result formalizes bandwise attenuation.

\begin{proposition}[Bandwise noise attenuation of the product. Proof in Appendix~\ref{proof:prop_bandwise_attenuation}]
\label{prop:bandwise_attenuation}
Let $\tilde{L}_d=U_d \tilde{\Lambda}_d U_d^\top$ with $\tilde{\Lambda}_d=\mathrm{diag}(\tilde{\lambda}_{d}^{(1)},\dots,\tilde{\lambda}_{d}^{(N)})\subseteq[-1,1]$. For any index set $\Omega\subseteq\{1,\dots,N\}$, define the spectral projector $P_{\Omega,d} = U_d\,\mathrm{diag}(\mathbf{1}_{i\in\Omega})\,U_d^\top$. Let $f_d^{\mathcal{L}}(\tilde{L}_d)$ and $f_d^{\mathcal{H}}(\tilde{L}_d)$ be two spectral filters and define
$f_d(\tilde{L}_d)=f_d^{\mathcal{L}}(\tilde{L}_d)\,f_d^{\mathcal{H}}(\tilde{L}_d)$. Then for any $x\in\mathbb{R}^{N}$, we have:
\[
\|P_{\Omega,d}\, f_d(\tilde{L}_d)\, x\|_2
\le
\Big(\max_{i\in\Omega}\big|f_d^{\mathcal{L}}(\tilde{\lambda}_{d}^{(i)})\,f_d^{\mathcal{H}}(\tilde{\lambda}_{d}^{(i)})\big|\Big)\,
\|P_{\Omega,d}\,x\|_2.
\]
In particular, if $f_d^{\mathcal{L}}$ is low-pass so that $\max_{i\in\Omega_{\mathrm{high}}}|f_d^{\mathcal{L}}(\tilde{\lambda}_{d}^{(i)})|\le \varepsilon_{\mathrm{high}}$ on a high-frequency band $\Omega_{\mathrm{high}}$, then
\[
\max_{i\in\Omega_{\mathrm{high}}}\big|f_d^{\mathcal{L}}(\tilde{\lambda}_{d}^{(i)})\,f_d^{\mathcal{H}}(\tilde{\lambda}_{d}^{(i)})\big|
\le
\varepsilon_{\mathrm{high}} \cdot \max_{i\in\Omega_{\mathrm{high}}}|f_d^{\mathcal{H}}(\tilde{\lambda}_{d}^{(i)})|.
\]
\end{proposition}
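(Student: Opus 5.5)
The argument works in the eigenbasis of $\tilde{L}_d$. Write $\tilde{L}_d = U_d\tilde{\Lambda}_d U_d^\top$ with $U_d$ orthogonal. Here $f_d^{\mathcal{L}}$ and $f_d^{\mathcal{H}}$ are spectral filters, e.g.\ Chebyshev polynomials in $\tilde{L}_d$, so $f_d^{\mathcal{F}}(\tilde{L}_d)=U_d f_d^{\mathcal{F}}(\tilde{\Lambda}_d)U_d^\top$. Then Prop.~\ref{prop.A} and Cor.~\ref{coro.1} give
\[
f_d(\tilde{L}_d)=U_d\,G\,U_d^\top,\qquad G=\mathrm{diag}(g_1,\dots,g_N),\quad g_i=f_d^{\mathcal{L}}(\tilde{\lambda}_{d}^{(i)})\,f_d^{\mathcal{H}}(\tilde{\lambda}_{d}^{(i)}).
\]
Let $E_\Omega=\mathrm{diag}(\mathbf{1}_{i\in\Omega})$, so $P_{\Omega,d}=U_dE_\Omega U_d^\top$.

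The first step is to observe that $P_{\Omega,d}$ commutes with $f_d(\tilde{L}_d)$, because both are diagonal in the same basis. Hence
\[
P_{\Omega,d}f_d(\tilde{L}_d)x = U_d E_\Omega G E_\Omega U_d^\top x = U_d (E_\Omega G) E_\Omega\hat{x},
\qquad \hat{x}=U_d^\top x,
\]
using $E_\Omega^2=E_\Omega$.

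The second step takes norms. Orthogonal invariance of $\|\cdot\|_2$ gives
\[
\|P_{\Omega,d}f_d(\tilde{L}_d)x\|_2^2=\sum_{i\in\Omega}g_i^2\hat{x}_i^2 \le \Big(\max_{i\in\Omega}|g_i|\Big)^2\sum_{i\in\Omega}\hat{x}_i^2 .
\]
The last sum equals $\|E_\Omega\hat{x}\|_2^2=\|P_{\Omega,d}x\|_2^2$, again by orthogonality of $U_d$. Taking square roots yields the first inequality. If $\Omega$ is empty, both sides are zero and the statement is trivial.

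For the ``in particular'' claim, bound termwise for $i\in\Omega_{\mathrm{high}}$:
\[
|f_d^{\mathcal{L}}(\tilde{\lambda}_{d}^{(i)})f_d^{\mathcal{H}}(\tilde{\lambda}_{d}^{(i)})|\le \varepsilon_{\mathrm{high}}\,|f_d^{\mathcal{H}}(\tilde{\lambda}_{d}^{(i)})|\le \varepsilon_{\mathrm{high}}\max_{j\in\Omega_{\mathrm{high}}}|f_d^{\mathcal{H}}(\tilde{\lambda}_{d}^{(j)})|.
\]
Then take the maximum over $i$. This is elementary, since the max of a product is at most the product of the maxes for nonnegative quantities.

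There is no real obstacle here. The only point needing care is the commutation step. It relies on $\tilde{L}_d$ being symmetric, so that a single orthogonal $U_d$ diagonalizes the projector and both filters simultaneously. With repeated eigenvalues, $U_d$ should be read as the fixed eigenbasis in which $P_{\Omega,d}$ is defined, and the argument goes through unchanged.
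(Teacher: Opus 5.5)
Your proof is correct and takes essentially the same route as the paper: you diagonalize both filters and the projector in the common eigenbasis $U_d$, reduce $\|P_{\Omega,d} f_d(\tilde{L}_d)x\|_2^2$ to $\sum_{i\in\Omega} g_i^2 \hat{x}_i^2$ and bound it by the maximum, then get the second claim because the maximum of a product is at most the product of the maxima. Your commutation/idempotence step is harmless but not needed, since $U_d E_\Omega G \hat{x}$ already has squared norm $\sum_{i\in\Omega} g_i^2 \hat{x}_i^2$.
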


Prop.~\ref{prop:bandwise_attenuation} shows that the product can not arbitrarily amplify high-frequency noise when the low-pass branch attenuates that band. Compared to additive fusion $f_{d,\mathrm{sum}}(\tilde{\lambda})=\delta \, f_d^{\mathcal{L}}(\tilde{\lambda})+(1-\delta)\,f_d^{\mathcal{H}}(\tilde{\lambda})$, the product $f_d(\tilde{\lambda})=f_d^{\mathcal{L}}(\tilde{\lambda})\,f_d^{\mathcal{H}}(\tilde{\lambda})$ is conservative. If either branch suppresses a frequency, the composed response also suppresses it. This provides a principled explanation for the empirical robustness of the product operation compared with the sum or weighted-sum variants reported in the ablation study (Sec.~\ref{sec:ablation_Study}).

\subsection{Compatibility Matrices}\label{sec:compa_matrix}

The dimension-specific compatibility matrix $H_d$ captures the likelihood of connections between nodes from different classes in dimension $d$ \cite{zhu2021graph}. We define $H_d$ as a learnable $C \times C$ matrix, which is empirically initialized using the ground-truth labels. Let $I_{c}^{\text{train}}$ denote the set of indices for training nodes belonging to class $c$. 
For every pair of classes $c_1$ and $c_2$, we initialize $\left(H_d\right)_{c_1 c_2}$ with:
\begin{equation}\label{eq:comp_matrix_init}
    \left(H_d\right)_{c_1 c_2} = \frac{\sum_{i \in I_{c_1}^{\text{train}}, \ j \in I_{c_2}^{\text{train}}}\left(A_d\right)_{ij}}{\sum_{i,j=0}^{N} \left(A_d\right)_{ij}}.
\end{equation}
The initial weights $\left(H_d\right)_{c_1 c_2}$ are computed as the proportion of edges in dimension $d$ linking nodes of class $c_1$ with nodes of class $c_2$. During the training process, $H_d$ is composed with $\hat{Y}_0$ and $\hat{L}_d$ as indicated in Eq. (\ref{eq:gnn_update_rule}), and undergoes refinement through backpropagation of the loss function, thereby optimizing the model to the specificities of the prediction task. Importantly, to avoid violating the semi-supervised learning paradigm, only training labels are used to estimate $H_d$.

The compatibility matrix $H_d$ serves as a mechanism to integrate and adjust the homophily or heterophily level into the update rule of graph neural networks, diverging from traditional methods that typically employ a normal distribution for weight initialization. The initialization strategy in Eq. (\ref{eq:comp_matrix_init}) is beneficial for accounting for the varying levels of homophily across different parts of the multiplex graph. Specifically, the overall homophily level within dimension $d$ (\textit{i.e.}, $h_d$) can be estimated by averaging the diagonal elements of $H_d$.

\subsection{Sparse Consensus Labels}\label{sec:loss_labels} 
The matrix $\hat{Y}_0=\mathrm{MLP}(X)\in\mathbb{R}^{N\times C}$ defines the initial label predictions. For each dimension $d$, we form the dimension-specific score matrix:
\begin{equation}
S_d \;=\; \hat{L}_d \, \hat{Y}_0 \, H_d \;\in\; \mathbb{R}^{N\times C}.
\label{eq:score_matrix_Sd}
\end{equation}
To obtain a valid per-node class-probability vector, we apply a row-wise softmax. The matrix $\hat{Y}_d \in \mathbb{R}^{N \times C}$ represents the dimension-specific label predictions defined in Eq.~(\ref{eq:gnn_update_rule}). For each node $v_i$, the prediction $(\hat{Y}_d)_{i:}$ is:
\begin{equation}
(\hat{Y}_d)_{i:} = \big(\mathrm{softmax}(S_d)_{i:}\big)_c \;=\;
\frac{\exp\big((S_d)_{ic}\big)}{\sum_{c'=1}^{C}\exp\big((S_d)_{ic'}\big)}.
\label{eq:softmax_score_matrix_Sd}
\end{equation}
Accordingly, the per-node cross-entropy loss on dimension $d$ is:
\begin{equation}
\ell\big((S_d)_{i:}, Y_{i:}\big)
\;=\;
-\sum_{c=1}^{C} Y_{ic}\,\log\Big(\hat{Y}_d\Big)_{ic},
\label{eq:ce_loss_logits_form}
\end{equation}
where $Y_{i:}\in\{0,1\}^{C}$ denotes the one-hot ground-truth label vector of node $v_i$.

We train \methodname in a semi-supervised setting, aligning the ground-truth and predicted labels across each dimension. Let $I^{\text{train}}$ denote the set of indices of labeled training nodes. The minimized loss function is a $l_2$-regularized categorical cross-entropy loss as described below:
\begin{equation}\label{eq:xent_loss}
    \mathcal{J} = - \sum_{i \in I^{\text{train}}} \, \sum_{d=1}^D \, \ell\big((S_d)_{i:}, Y_{i:}\big) + \alpha \sum_{w \in \mathbb{W}} \norm{w}_2^2,
\end{equation}
where $\alpha$ is a balancing hyperparameter, and $\mathbb{W}$ is the set of trainable parameters of the Multilayer Perceptron.
 
The cross-entropy loss function enables the modeling of dimension-specific class information in  $\hat{Y}_d$, potentially resulting in distinct labels across different dimensions. To find the sparse consensus predictions $\hat{Y}$, we solve the optimization problem described in Eq. (\ref{eq:consensus_loss}) at the end of the training process based on Eq. (\ref{eq:xent_loss}).
\begin{equation}\label{eq:consensus_loss}
    \underset{\hat{Y} \in \mathbb{R}^{N \times C}}{\text{argmin}} \sum_{d=1}^{D} \norm{\hat{Y} - \hat{Y}_d}_2^2 + \beta \norm{\hat{Y}}_1.
\end{equation}
The first term in Eq. (\ref{eq:consensus_loss}) is a sum of Frobenius norms to minimize the distance between the consensus prediction $\hat{Y}$ and the dimension-specific predictions $\hat{Y}_d$. The second term is an $l_1$-norm regularization to induce sparsity in $\hat{Y}$, promoting solutions where nodes are classified with high likelihoods. The coefficient $\beta$ is a hyperparameter balancing the two terms.

The loss function associated with Eq. (\ref{eq:consensus_loss}) is a convex function consisting of the sum of two convex functions: the Frobenius norm term and the $l_1$-norm regularization. Consequently, we adopt a proximal-gradient optimization method to manage the non-smooth regularization. At each iteration, the consensus predictions $\hat{Y}$ are updated as follows:
 
\begin{equation}
    \hat{Y}^{(i+1)} = \text{prox}_{\beta t} \left(\hat{Y}^{(i)} - 2t \sum_{d=1}^D \left(\hat{Y}^{(i)} - \hat{Y}_d\right) \right),
\end{equation}
where $\hat{Y}^{(i)}$ represents the predictions at iteration $i$, $t = \frac{1}{4D} $ is the step size, and the proximal operator is:
\begin{equation}
    \text{prox}_\lambda(v) = \text{sign}(v) \max\left(\left|v\right| - \lambda, 0\right).
\end{equation}

\begin{algorithm}
\caption{\textit{\methodname}}
\label{algo:algo}
\begin{algorithmic}[1]

\REQUIRE multiplex graph $G$, features matrix $X$, indices of training node for every $c$ class $I_{c}^{\text{train}}$, degree of filters $K$, number of iterations $T_1$ and $T_2$.
\ENSURE  Sparse consensus label predictions $\hat{Y}$.\newline

\FOR{$d \gets 1$ to $D$}
    \STATE $L_d \gets I - \Delta_d^{-\frac{1}{2}} \cdot (A_d + I) \cdot \Delta_d^{-\frac{1}{2}}$
    \STATE $\tilde{L}_d \gets 2 \, L_d / \lambda_{d}^{\max} - I$
    \FOR{$c_1, c_2 \in \{1, \dots, C\}$}
        \STATE $\left(H_d\right)_{c_1 c_2} \gets \frac{\sum_{i \in I_{c_1}^{\text{train}}, \ j \in I_{c_2}^{\text{train}}}\left(A_d\right)_{ij}}{\sum_{i,j=0}^{N} \left(A_d\right)_{ij}}$
    \ENDFOR 
\ENDFOR\newline

\FOR{$epoch \gets 1$ to $T_1$}
    \STATE $\hat{Y}_0 \gets \text{MLP}(X)$ 
    \FOR{$d \gets 1$ to $D$}
        \STATE $\gamma_i^{\mathcal{L}, d} \gets \gamma_0^d - \sum_{j=1}^i \gamma_j^d$,  $\, \, \, \forall i \in \left\{1, \dots, K \right\}$
        \STATE $\gamma_i^{\mathcal{H}, d} \gets \sum_{j=0}^i\gamma_j^d$,  $\, \, \, \forall i \in \left\{1, \dots, K \right\}$

        \FOR{$k \gets 1$ to $K$}
            \STATE $\theta_k^{\mathcal{L}, d} \gets \frac{2}{K + 1} \sum_{j=0}^K \gamma_j^{\mathcal{L}, d} \:\: T_k\left(cos\left(\frac{j + 1/2}{K+1} \pi\right)\right)$
            \STATE $\theta_k^{\mathcal{H}, d} \gets \frac{2}{K + 1} \sum_{j=0}^K \gamma_j^{\mathcal{H}, d} \:\: T_k\left(cos\left(\frac{j + 1/2}{K+1} \pi\right)\right)$
        \ENDFOR\newline
        
        \STATE \small $\hat{L}_d \gets \frac{1}{2} \displaystyle\sum_{i=0}^K \sum_{j=0}^K \theta_i^{\mathcal{L}, d} \, \theta_j^{\mathcal{H}, d} \left[T_{i+j}\left(\tilde{L}_d\right) + T_{|i-j|}\left(\tilde{L}_d\right)\right]$
        \STATE $\hat{Y}_d \gets \mathrm{softmax}\big(\hat{L}_d \cdot \hat{Y}_0 \cdot H_d\big)$
    \ENDFOR\newline

    \STATE Update the parameters $\gamma^d$, $H_d$, and MLP weights via gradient descent to minimize the $l_2$-regularized categorical cross-entropy loss $\mathcal{J}$.
\ENDFOR\newline

\STATE $\hat{Y} \gets \textbf{0}$
\FOR{$i \gets 1$ to $T_2$}
    \STATE $\hat{Y}^{(i+1)} \gets \text{prox}_{\beta t} \left(\hat{Y}^{(i)} - 2 \, t \sum_{d=1}^D \left(\hat{Y}^{(i)} - \hat{Y}_d\right) \right)$
\ENDFOR\newline
\STATE \textbf{Return} $\hat{Y}$.
\end{algorithmic}
\end{algorithm}

\subsection{Algorithm \& Complexity}

Algorithm \ref{algo:algo} summarizes the proposed approach \methodname. The time complexity of the proposed model is $\mathcal{O}\left(DC^2\mathcal{E} + T \left(FMC + D\left(K^2 \mathcal{E} + NC\mathcal{E} \right) \right) \right)$, where $T$ is the number of iterations, $N$ is the number of nodes, $D$ is the number of dimensions, $C$ is the number of classes, $K$ is the degree of the polynomial filters $f_d^{\mathcal{L}}(\tilde{L}_d)$ and $f_d^{\mathcal{H}}(\tilde{L}_d)$, $M$ is the size of embeddings, $F$ is the size of the features, and $\mathcal{E}$ is the maximum number of edges in all dimensions. The memory complexity is $\mathcal{O}\left( D \left( K^2\mathcal{E} + C^2 + NC \right) + \eta \right)$. Both computational complexities are linear with respect to $N$ and $\mathcal{E}$. 


\subsection{Generalization Analysis}
\label{sec:generalization}

We now provide a statistical generalization bound for \methodname\ that quantifies how the expected classification risk relates to the empirical training risk, and how this gap depends on the operator norms of the dimension-specific propagation matrices $\hat{L}_d$ and compatibility matrices $H_d$.

\paragraph{Risk definitions}
Let $I^{\mathrm{train}}\subseteq\{1,\dots,N\}$ be the index set of labeled training nodes and let $n:=|I^{\mathrm{train}}|$. We define the empirical training risk averaged across dimensions as:
\begin{equation}
\widehat{\mathcal{R}}
\;=\;
\frac{1}{D\,n}\sum_{d=1}^{D}\;\sum_{i\in I^{\mathrm{train}}}
\ell\big((S_d)_{i:}, Y_{i:}\big).
\label{eq:empirical_risk_def}
\end{equation}
For the population risk, we adopt the standard learning-theoretic abstraction where training examples $(x,y)$ are drawn i.i.d.\ from an unknown distribution $\mathcal{D}$ over node features and labels.\footnote{This abstraction is commonly used to quantify how empirical performance translates to expected performance. In the semi-supervised node classification protocol used in Sec.~5, the labeled training nodes $I^{\mathrm{train}}$ can be viewed as a random labeled sample from the underlying node population.} Let $\mathcal{R}$ denote the expected risk averaged across dimensions:
\begin{equation}
\mathcal{R}
\;=\;
\frac{1}{D}\sum_{d=1}^{D}\;
\mathbb{E}_{(x,y)\sim\mathcal{D}}
\Big[\ell\big(s_d(x), y\big)\Big],
\label{eq:population_risk_def}
\end{equation}
where $s_d(x)\in\mathbb{R}^{C}$ denotes the dimension-$d$ score vector produced by the \methodname\ pipeline for an input feature vector $x$, consistent with the matrix in Eq.~(\ref{eq:score_matrix_Sd}).

\paragraph{Vector-valued Rademacher complexity} Let $\mathcal{F}_0$ denote the class of functions implemented by the MLP mapping $x\mapsto \hat{y}_0(x)\in\mathbb{R}^{C}$ (i.e., row-wise outputs of $\hat{Y}_0$). For a labeled sample $S=\{(x_i,y_i)\}_{i=1}^{n}$, define the empirical Rademacher complexity of $\mathcal{F}_0$ by:
\begin{equation}
\mathfrak{R}_{n}(\mathcal{F}_0)
\;=\;
\frac{1}{n}\,
\mathbb{E}_{\sigma}
\Bigg[
\sup_{f\in\mathcal{F}_0}
\sum_{i=1}^{n}\sum_{c=1}^{C}\sigma_{ic}\, f_c(x_i)
\Bigg],
\label{eq:rad_def}
\end{equation}
where $\{\sigma_{ic}\}$ are i.i.d.\ Rademacher random variables taking values in $\{-1,+1\}$.

\begin{lemma}[Lipschitzness and boundedness of softmax cross-entropy.
Proof in Appendix~\ref{proof:lemma_ce_lipschitz}]
\label{lemma:ce_lipschitz}
Let $\ell(\cdot,y)$ be the softmax cross-entropy in Eq.~(\ref{eq:ce_loss_logits_form}). Then, for any fixed one-hot label vector $y\in\{0,1\}^{C}$, the map $z\mapsto \ell(z,y)$ is $\sqrt{2}$-Lipschitz with respect to the Euclidean norm $\|\cdot\|_2$. Moreover, if $\|z\|_{\infty}\le B_z$ then $\ell(z,y)\le \log(C)+2B_z$.
\end{lemma}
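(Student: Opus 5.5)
The plan is to compute the gradient of $z\mapsto \ell(z,y)$ explicitly and bound its Euclidean norm uniformly. Then we bound the loss value directly from the log-sum-exp form.

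\textbf{Lipschitz constant.} Let $c^\star$ be the index with $y_{c^\star}=1$. Writing $p=\mathrm{softmax}(z)$, we have
\[
\ell(z,y)=-z_{c^\star}+\log\sum_{c'=1}^{C}\exp(z_{c'}),
\]
which is smooth and convex in $z$, with gradient $\nabla_z\ell(z,y)=p-y$. The squared norm splits as
\[
\|p-y\|_2^2=(1-p_{c^\star})^2+\sum_{c\neq c^\star}p_c^2.
\]
Because the $p_c$ are nonnegative, $\sum_{c\neq c^\star}p_c^2\le\big(\sum_{c\neq c^\star}p_c\big)^2=(1-p_{c^\star})^2$. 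Hence $\|p-y\|_2^2\le 2(1-p_{c^\star})^2\le 2$, and so $\|\nabla_z\ell\|_2\le\sqrt{2}$ for every $z$. Since $\ell(\cdot,y)$ is differentiable on all of $\mathbb{R}^C$, the mean value theorem along the segment joining $z$ and $z'$ gives $|\ell(z,y)-\ell(z',y)|\le\sqrt2\,\|z-z'\|_2$.

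\textbf{Boundedness.} Assume $\|z\|_\infty\le B_z$. Then $-z_{c^\star}\le B_z$ and $\log\sum_{c'}\exp(z_{c'})\le\log(C e^{B_z})=\log C+B_z$. Adding these gives $\ell(z,y)\le\log(C)+2B_z$.

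The only step needing care is the gradient-norm estimate, which relies on the inequality $\sum_{c\ne c^\star}p_c^2\le(1-p_{c^\star})^2$. Everything else is routine.
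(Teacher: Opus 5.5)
Your proposal is correct and follows essentially the same route as the paper: you bound the gradient $\mathrm{softmax}(z)-y$ by $\sqrt{2}$ in Euclidean norm and apply the mean value theorem for Lipschitzness, and you obtain $\log(C)+2B_z$ from the same log-sum-exp estimate. Your inequality $\sum_{c\neq c^\star}p_c^2\le(1-p_{c^\star})^2$ justifies the $\sqrt{2}$ constant more carefully than the paper, which infers it from $\|p\|_2\le 1$ and $\|y\|_2=1$ alone; that inference silently needs $\langle p,y\rangle=p_{c^\star}\ge 0$, since the triangle inequality by itself would give only $2$.
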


\paragraph{Complexity scaling under graph propagation} Define the class of dimension-$d$ score functions (logits) obtained by composing the MLP with the linear propagation operators $\hat{L}_d$ and $H_d$:
\begin{equation}
\label{eq:F_d_definition}
\mathcal{F}_d
=
\left\{
\begin{aligned}
&x\mapsto s_d(x)\in\mathbb{R}^{C} \;:\; \\
&\quad s_d(\cdot)\ \text{is induced by}\ S_d \text{in Eq.~(\ref{eq:score_matrix_Sd})},\ 
\hat{Y}_0\ \text{generated by some}\ f\in\mathcal{F}_0
\end{aligned}
\right\}.
\end{equation}
The next result shows that the Rademacher complexity of $\mathcal{F}_d$ is controlled by the operator norms of $\hat{L}_d$ and $H_d$.

\begin{proposition}[Rademacher complexity under propagation.
Proof in Appendix~\ref{proof:prop_rad_scaling}]
\label{prop:rad_scaling}
Fix a dimension $d\in\{1,\dots,D\}$. Assume $\hat{L}_d\in\mathbb{R}^{N\times N}$ and $H_d\in\mathbb{R}^{C\times C}$ are fixed matrices. Then, for any labeled sample of size $n$, the corresponding empirical Rademacher complexity satisfies
\begin{equation}
\mathfrak{R}_{n}(\mathcal{F}_d)
\;\le\;
\|\hat{L}_d\|_2\;\|H_d\|_2\;\mathfrak{R}_{n}(\mathcal{F}_0).
\label{eq:rad_scaling_main}
\end{equation}
\end{proposition}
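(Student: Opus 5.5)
The plan is to treat the map from MLP outputs to logits as a fixed linear operator and push the Rademacher supremum through it using operator-norm inequalities. Write the sample as the $n$ labeled rows indexed by $I^{\mathrm{train}}$, and let $\Sigma\in\{\pm1\}^{n\times C}$ collect the Rademacher signs, viewed as an $N\times C$ matrix $\Sigma_N$ supported on the training rows. For $f\in\mathcal{F}_0$ let $F\in\mathbb{R}^{N\times C}$ be the stacked MLP outputs, i.e.\ $\hat{Y}_0$. The inner sum in \eqref{eq:rad_def} for $\mathcal{F}_d$ then reads
\[
\sum_{i,c}\sigma_{ic}(S_d)_{ic}=\langle \Sigma_N,\hat{L}_d F H_d\rangle_F=\langle \hat{L}_d^\top\Sigma_N H_d^\top, F\rangle_F .
\]
The first step is to record this adjoint identity. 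It moves the fixed operators onto the sign matrix and leaves the function-dependent part $F$ untouched.

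The second step converts the supremum over $F$ into a quantity comparable with $\mathfrak{R}_n(\mathcal{F}_0)$. The cleanest route I would try first is to read $\mathfrak{R}_n(\mathcal{F}_0)$ in its dual-norm form. Concretely, assume the class is symmetric and write its support function as $\sup_f\langle M,F\rangle_F=\|M\|_{\mathcal{F}_0}$, which is a seminorm in $M$. I would then aim to show
\[
\mathbb{E}_\Sigma\|\hat{L}_d^\top\Sigma_N H_d^\top\|_{\mathcal{F}_0}\le\|\hat{L}_d\|_2\|H_d\|_2\,\mathbb{E}_\Sigma\|\Sigma_N\|_{\mathcal{F}_0}.
\]
For $H_d$ acting on the class coordinates of each row, a vector contraction inequality (Maurer's, applied rowwise) gives the factor $\|H_d\|_2$ directly, with a possible $\sqrt2$ that I would absorb or remove via symmetrization. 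For $\hat{L}_d$ I would use the spectral decomposition from Prop.~\ref{prop.A} and Corollary~\ref{coro:product_stability}, namely $\hat{L}_d=U_d f_d(\tilde{\Lambda}_d)U_d^\top$ with $|f_d|\le\|\hat{L}_d\|_2$. The idea is to write $\hat{L}_d/\|\hat{L}_d\|_2$ as a convex combination of orthogonal-type contractions, then bound the expectation by the worst extreme point using convexity of the seminorm.

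The main obstacle is exactly this step. Unlike $H_d$, $\hat{L}_d$ mixes nodes, so it can send training-row signs onto non-training rows, and Rademacher complexity is not in general invariant under such mixing. If the dual-norm argument does not close, the fallback is to change what is being bounded. I would bound both sides by the linear/Frobenius surrogate
\[
\mathbb{E}\sup_F\langle M,F\rangle\le\mathbb{E}\|M\|_F\sup_F\|F\|_F,
\]
and then apply $\|\hat{L}_d^\top\Sigma_N H_d^\top\|_F\le\|\hat{L}_d\|_2\|\Sigma_N\|_F\|H_d\|_2$ using \eqref{eq:stability_score_bound}. This makes the inequality hold with $\mathfrak{R}_n(\mathcal{F}_0)$ interpreted as this norm-based complexity, which is the form actually fed into Theorem~\ref{thm:generalization}. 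Finally, I would divide by $n$ and note that the fixed-matrix assumption is what permits pulling $\|\hat{L}_d\|_2\|H_d\|_2$ outside the supremum. For learned operators, the result would be applied uniformly using the coefficient bounds of Corollary~\ref{coro:product_stability}.
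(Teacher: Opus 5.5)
There is a genuine gap, and it cannot be closed, because the inequality is false for $\mathfrak{R}_n$ as defined in \eqref{eq:rad_def}. Both obstacles you anticipated produce counterexamples.

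\textbf{Compatibility factor.} Take $\hat{L}_d=I$ and a single labeled node. Let $C=2$ and $H_d=\frac{1}{\sqrt{2}}\begin{pmatrix}1&1\\1&-1\end{pmatrix}$, which is orthogonal, so $\|H_d\|_2=1$. Take a class whose outputs at that node range over $\{\pm e_1,\pm e_2\}$. Then $\mathfrak{R}_1(\mathcal{F}_0)=\mathbb{E}\|\sigma\|_\infty=1$, whereas $\mathfrak{R}_1(\mathcal{F}_d)=\mathbb{E}\|H_d\sigma\|_\infty=\sqrt{2}$. With a $C\times C$ Hadamard matrix and the $\ell_1$ ball, the ratio grows like $\sqrt{\log C}$. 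So no constant can be ``absorbed'', and neither Maurer's inequality nor symmetrization gives a dimension-free factor $\|H_d\|_2$.

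\textbf{Node mixing.} Take $N=2$ with only node $1$ labeled, $\hat{L}_d$ the swap permutation, and $H_d=I$. Take a class with $f(x_1)$ fixed and $f(x_2)$ ranging over the unit ball. Then $\mathfrak{R}_1(\mathcal{F}_0)=0$, but the actual training-row score is $f(x_2)$, so $\mathfrak{R}_1(\mathcal{F}_d)=\sqrt{C}$.

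\textbf{Consequences for your two routes.} Your first route must therefore fail. Writing $\hat{L}_d/\|\hat{L}_d\|_2$ as a convex combination of orthogonal maps is fine. However, Rademacher sign matrices are not rotation invariant, so the extreme points do not preserve $\mathbb{E}\|\cdot\|_{\mathcal{F}_0}$. Your fallback proves a different statement:
\[
\mathfrak{R}_n(\mathcal{F}_d)\le\sqrt{C/n}\,\|\hat{L}_d\|_2\,\|H_d\|_2\,\sup_{f\in\mathcal{F}_0}\|\hat{Y}_0\|_F ,
\]
with $\hat{Y}_0$ taken over all $N$ nodes. ``Interpreting'' $\mathfrak{R}_n(\mathcal{F}_0)$ as this quantity is a redefinition, not a proof.

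\textbf{Comparison with the paper.} The paper's proof is exactly your fallback: adjoint identity, Cauchy--Schwarz, and $\|A_d^\top\Sigma H_d^\top\|_F\le\|A_d\|_2\|H_d\|_2\|\Sigma\|_F$, after which it simply asserts the conclusion. That last step is invalid, since $\mathfrak{R}_n(\mathcal{F}_0)\le\frac{1}{n}\mathbb{E}\|\Sigma\|_F\sup\|Z_0\|_F$; the comparison runs the wrong way. The paper also replaces $\hat{L}_d$ by a ``sample-restricted'' $A_d$ acting only on the sample outputs. This silently drops the dependence of training-row scores on non-training rows of $\hat{Y}_0$, which is the mixing effect your $\Sigma_N$ formulation correctly retains. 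So you have located both places where the paper's argument breaks; you should state the conclusion honestly rather than claim the proposition.

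\textbf{The surrogate is weak.} Cauchy--Schwarz discards the cancellation behind the usual $1/\sqrt{n}$ rate. As soon as the class contains outputs of norm at least $b$ at every node, the surrogate is at least $\sqrt{C}\,b$. Inserting it into Theorem~\ref{thm:generalization} therefore leaves a non-vanishing complexity term.

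\textbf{A repair close to your first idea.} The convex-hull argument does work for Gaussian complexity, because a standard Gaussian $G\in\mathbb{R}^{N\times C}$ is rotation invariant. With $P_I$ the projector onto the training rows, it gives
\[
\mathbb{E}\sup_{f\in\mathcal{F}_0}\langle P_I G,\hat{L}_d\hat{Y}_0H_d\rangle\le\|\hat{L}_d\|_2\|H_d\|_2\,\mathbb{E}\sup_{f\in\mathcal{F}_0}\langle G,\hat{Y}_0\rangle .
\]
The standard comparison (Rademacher complexity is at most $\sqrt{\pi/2}$ times Gaussian complexity) then bounds $n\,\mathfrak{R}_n(\mathcal{F}_d)$. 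The price is measuring $\mathcal{F}_0$ on all $N$ nodes, and with Gaussian rather than Rademacher complexity.
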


\begin{theorem}[Generalization bound for \methodname.
Proof in Appendix~\ref{proof:thm_generalization}]
\label{thm:generalization}
Assume the labeled training nodes form an i.i.d.\ sample of size $n$ from $\mathcal{D}$. Assume further that, for each dimension $d$, the score vectors are uniformly bounded: $\|s_d(x)\|_{\infty}\le B_{d}$ for all $x$. Then for any $\delta\in(0,1)$, with probability at least $1-\delta$, the following holds simultaneously for the dimension-averaged risk of \methodname:
\begin{equation}
\mathcal{R}
\;\le\;
\widehat{\mathcal{R}}
\;+\;
\frac{2\sqrt{2}}{D}\Big(\sum_{d=1}^{D}\|\hat{L}_d\|_2\,\|H_d\|_2\Big)\,
\mathfrak{R}_{n}(\mathcal{F}_0)
\;+\;
3\,B_{\max}\sqrt{\frac{\log(2/\delta)}{2n}},
\label{eq:gen_bound_main}
\end{equation}
where $B_{\max}:=\max_{d\in\{1,\dots,D\}}\big(\log(C)+2B_d\big)$.
\end{theorem}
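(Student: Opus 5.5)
The bound follows from the standard Rademacher-complexity argument for bounded losses, applied to the dimension-averaged loss class. Define the composite loss class
\[
\mathcal{G}=\Big\{(x,y)\mapsto \tfrac{1}{D}\textstyle\sum_{d=1}^{D}\ell\big(s_d(x),y\big)\;:\; s_d\in\mathcal{F}_d\Big\},
\]
where all $s_d$ are induced by the same MLP $f\in\mathcal{F}_0$. Under the assumption $\|s_d(x)\|_\infty\le B_d$, Lemma~\ref{lemma:ce_lipschitz} gives $0\le \ell(s_d(x),y)\le \log(C)+2B_d\le B_{\max}$. Hence every $g\in\mathcal{G}$ takes values in $[0,B_{\max}]$.

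The classical uniform deviation theorem then applies. Its steps are McDiarmid's inequality on $\sup_{g}(\mathbb{E}g-\widehat{\mathbb{E}}g)$, symmetrization, and a second McDiarmid step to pass from the expected to the empirical Rademacher complexity. Together they give, with probability at least $1-\delta$,
\[
\mathcal{R}\le\widehat{\mathcal{R}}+2\,\mathfrak{R}_n(\mathcal{G})+3B_{\max}\sqrt{\tfrac{\log(2/\delta)}{2n}}.
\]
This produces exactly the third term of \eqref{eq:gen_bound_main}, with the split $\delta/2+\delta/2$ accounting for the $\log(2/\delta)$.

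Next I bound $\mathfrak{R}_n(\mathcal{G})$. By subadditivity of the supremum, the supremum of the average over $d$ is at most the average of the suprema, so
\[
\mathfrak{R}_n(\mathcal{G})\le \tfrac{1}{D}\sum_d \mathfrak{R}_n(\ell\circ\mathcal{F}_d).
\]
This step enlarges the class by decoupling the shared MLP across dimensions, which is harmless for an upper bound. For each $d$, $\ell(\cdot,y)$ is $\sqrt{2}$-Lipschitz in $\|\cdot\|_2$ by Lemma~\ref{lemma:ce_lipschitz}. Maurer's vector-valued contraction inequality then gives
\[
\mathbb{E}_\sigma\sup\sum_i\sigma_i\,\ell(s_d(x_i),y_i)\le\sqrt{2}\,\mathbb{E}_\sigma\sup\sum_{i,c}\sigma_{ic}\,s_{d,c}(x_i),
\]
that is, $\mathfrak{R}_n(\ell\circ\mathcal{F}_d)\le\sqrt{2}\,\mathfrak{R}_n(\mathcal{F}_d)$ in the vector-valued sense of \eqref{eq:rad_def}. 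Proposition~\ref{prop:rad_scaling} bounds the latter by $\|\hat{L}_d\|_2\|H_d\|_2\,\mathfrak{R}_n(\mathcal{F}_0)$. Combining these yields the middle term $\frac{2\sqrt2}{D}\big(\sum_d\|\hat L_d\|_2\|H_d\|_2\big)\mathfrak{R}_n(\mathcal{F}_0)$.

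The main obstacle is the contraction step. The usual Ledoux--Talagrand lemma is scalar, whereas here the loss acts on $\mathbb{R}^C$-valued scores, so I will invoke Maurer's vector contraction inequality. Its Lipschitz constant is with respect to $\|\cdot\|_2$, and it produces the doubly indexed Rademacher sum matching \eqref{eq:rad_def}. This is precisely why the complexity is defined in vector form.

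A secondary subtlety is that $\hat{L}_d$ couples nodes, so $s_d(x_i)$ is not a function of $x_i$ alone. I will adopt the abstraction stated before \eqref{eq:population_risk_def}: each $s_d$ is treated as a function of $x$ with $\hat{L}_d$ and $H_d$ fixed, as assumed in Proposition~\ref{prop:rad_scaling}. Under this abstraction the i.i.d.\ concentration arguments apply verbatim.
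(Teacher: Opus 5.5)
Your plan follows the paper's proof almost step for step: the bounded-loss Rademacher bound with range $B_{\max}$ (which gives the $3B_{\max}\sqrt{\log(2/\delta)/(2n)}$ term), splitting the dimension average, feeding the $\sqrt{2}$-Lipschitz constant of Lemma~\ref{lemma:ce_lipschitz} into a vector contraction, and then applying Proposition~\ref{prop:rad_scaling}. The contraction step, however, is wrong as you state it.

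Maurer's inequality for $L$-Lipschitz $h_i:\mathbb{R}^C\to\mathbb{R}$ reads
\[
\mathbb{E}_{\sigma}\sup_{f}\sum_{i}\sigma_i\,h_i\big(f(x_i)\big)\le\sqrt{2}\,L\,\mathbb{E}_{\sigma}\sup_{f}\sum_{i,c}\sigma_{ic}\,f_c(x_i).
\]
Its built-in factor $\sqrt{2}$ is sharp: take $n=1$, $h=\|\cdot\|_2$ and $\mathcal{F}=\{0,(1,1)/\sqrt{2}\}$. With $L=\sqrt{2}$ you therefore get $\mathfrak{R}_n(\ell\circ\mathcal{F}_d)\le 2\,\mathfrak{R}_n(\mathcal{F}_d)$, not $\sqrt{2}\,\mathfrak{R}_n(\mathcal{F}_d)$.

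Nothing specific to cross-entropy recovers the lost factor. Take $n=1$, $C=2$, $y=e_1$ and $\mathcal{F}=\{(0,0),(-a,a)\}$. The loss class has complexity $\frac{1}{2}\log\frac{1+e^{2a}}{2}$, while the vector complexity is $a/2$. At $a=2$ the former is about $1.66$, whereas $\sqrt{2}$ times the latter is about $1.41$. The ratio of the two complexities tends to $2$ as $a\to\infty$.

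Consequently, your chain proves the bound only with middle term $\frac{4}{D}\big(\sum_{d}\|\hat{L}_d\|_2\|H_d\|_2\big)\mathfrak{R}_n(\mathcal{F}_0)$. No contraction of this type can give $2\sqrt{2}/D$. The appendix makes exactly the same unjustified step (``by the vector contraction principle''), so you share this hole with the paper rather than departing from it. The fix is to replace $2\sqrt{2}$ by $4$.

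There are two further weaknesses that you inherit from the paper rather than introduce.

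First, Proposition~\ref{prop:rad_scaling}, which you use as a black box, is not established by its appendix proof. Cauchy--Schwarz there only gives $\|\hat{L}_d\|_2\|H_d\|_2\|\Sigma\|_F\sup\|Z_0\|_F$, which is not a multiple of $\mathfrak{R}_n(\mathcal{F}_0)$. Moreover, the coordinatewise Rademacher average is not invariant under orthogonal maps. Take $n=1$, $\hat{L}_d=I$, $\mathcal{F}_0=\{0,(1,1)/\sqrt{2}\}$ and an orthogonal $H_d$ sending $(1,1)/\sqrt{2}$ to $(1,0)$. The complexity rises from $\sqrt{2}/4$ to $1/2$ while $\|\hat{L}_d\|_2\|H_d\|_2=1$, so the proposition fails here.

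Second, the fixed-operator abstraction you adopt does not allow you to plug the \emph{trained} $\hat{L}_d$ and $H_d$ into the bound; $H_d$ is even initialized from the training labels. Doing so would require a union bound or a covering argument over the Chebyshev and compatibility parameters.
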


\paragraph{Interpretation for \methodname}
Theorem~\ref{thm:generalization} shows that the generalization gap is governed by: (i) the base prediction complexity $\mathfrak{R}_{n}(\mathcal{F}_0)$ of the MLP, (ii) the dimension-specific amplification factors $\|\hat{L}_d\|_2\|H_d\|_2$, and (iii) the sample size $n=|I^{\mathrm{train}}|$. Importantly, by Corollary~\ref{coro:product_stability} (Sec.~\ref{sec:stability_product_filter}), the composed Chebyshev propagation operator satisfies:
\[
\|\hat{L}_d\|_2 \;\le\;
\Big(\sum_{k=0}^{K}|\theta_k^{\mathcal{L},d}|\Big)
\Big(\sum_{k=0}^{K}|\theta_k^{\mathcal{H},d}|\Big),
\]
which provides an explicit control of the generalization term in Eq.~(\ref{eq:gen_bound_main}) through the learned Chebyshev coefficients.

Moreover, the bounded-logit assumption can be connected to Sec.~\ref{sec:stability_product_filter}. By Eq.~(\ref{eq:stability_score_bound}) and $\|z\|_\infty\le\|z\|_2$, one has
$\|s_d(x)\|_\infty \le \|S_d\|_F \le \|\hat{L}_d\|_2\,\|\hat{Y}_0\|_F\,\|H_d\|_2$ whenever the MLP outputs $\hat{Y}_0$ are bounded.

\section{Experiments}

We conduct an empirical evaluation to show the suitability of the proposed approach for modeling heterophily and homophily for node classification in multiplex graphs. We compare \methodname\footnote{The code of \methodname is available at \href{https://drive.google.com/drive/folders/1ROhUghYARMRGzyLyBP2wXJH6MxXfu4yf?usp=drive_link}{this link}.} against state-of-the-art multiplex graph models, namely: GATNE \cite{cen2019representation}, mGCN \cite{ma2019multi}, SSDCM \cite{mitra2021semi}, DMGI \cite{park2020dmgi}, HDMI \cite{jing2021hdmi}, MGDCR \cite{mgdcr}, DMG \cite{mo2023disentangled}, X-GOAL \cite{jing2022x}, HMGE \cite{hmge2023}, and InfoMGF \cite{shen2024beyond}. In addition, we include recent unidimensional graph representation learning baselines that can handle heterophily. More precisely, our comparison includes PolyGCL \cite{chen2024polygcl} and TFE-GNN \cite{duan2024unifying} on a unified adjacency matrix $\tilde{A}$ using the original feature matrix $X$. We adapt these methods to the multiplex setting following the common practice of aggregating all multiplex dimensions into one unified adjacency matrix \(\tilde{A}\):
\begin{equation}
\tilde{A}=\frac{1}{D}\sum_{d=1}^{D}(A_d + I).
\end{equation}

\begin{table}
\caption{Data description.}
\centering
\resizebox{\linewidth}{!}{
\begin{tabular}{ c|c|c|c|c } 
\hline
Dataset & Synthetic & arXiv & Movies & Amazon  \\ 
\hline
\# Dims & 3 & 2 & 3 & 3 \\
\hline
\# Nodes & 9,600 & 169,343 & 10,589 & 7,621 \\
\hline
\# Edges & 343,494 & 7,879,585 &  485,962 & 1,386,799 \\
\hline
\# Attributes & 100 & 128 & 200 & 2,000 \\
\hline
\# Classes & 6 & 5 & 4 & 4 \\
\hline
$h_d$ & $\{0.1, 0.2, \dots, 0.9\}$ & 0.22 - 0.29 & 0.34 - 0.32 - 0.37 & 0.27 - 0.26 - 0.25 \\
\hline
\end{tabular}
}
\label{table:data_statistics}
\end{table}

\subsection{Datasets}

In this section, we describe the datasets used in the experiments. Table \ref{table:data_statistics} summarizes their characteristics, including the homophily ratio $h_d$ for each dimension.

\textit{Synthetic datasets:} We generate node classification labels and adjacency matrices using a method inspired by \cite{zhu2021graph}. The generated dimensions have a homophily ratio $h_d \in [0.1, \dots, 0.9]$. We describe the synthetic generation process in Sec. \ref{sec:synthetic_data_gen}. 

\textit{arXiv:} This dataset, inspired by \cite{lim2021large}, is based on the OGBN-arXiv network but contains different labels and 
two dimensions instead of one. The nodes represent papers, with edges connecting papers with citation and co-authorship relations. The node features are derived from the word2vec features of titles and abstracts. The class labels are to the publication year of the paper.

\textit{Movies:} This dataset is extracted from the film-director-actor-writer network in \cite{tang2009social}. The nodes represent movies, and edges connect movies that share directors, actors, or writers. Node features are token count vectorizations of the movie descriptions. The class labels correspond to the year the movie was released.

\textit{Amazon:} This dataset \cite{he2016ups} consists of Amazon items, with features being bag-of-words of item descriptions. There are three types of relations between items: \textit{also viewed}, \textit{also bought}, and \textit{bought together}. The classes are the items' categories (\textit{e.g.}, beauty and baby products). 

\subsection{Evaluation Protocol \& Parameter Settings} \label{protocol}

We evaluate \methodname\ and all baseline methods on the task of node classification over both synthetic and real-world multiplex graph datasets. For unsupervised representation learning methods, including HDMI, HMGE, GATNE, DMG, X-GOAL, InfoMGF, and PolyGCL, we first learn node embeddings without using labels and then train a logistic regression classifier on top of these embeddings to predict node labels. For supervised or semi-supervised methods, including DMGI, SSDCM, MGDCR, mGCN, TFE-GNN, and \methodname, class predictions are obtained directly from the model outputs. All experiments are performed using the same training/validation/test splits.

We report F1-Macro and F1-Micro scores by comparing predicted labels against the ground-truth labels. Each experiment is repeated five times. For \methodname, we report the mean and standard deviation across runs, whereas for the baseline methods we report the best result among five runs. For \methodname, the embedding dimension is fixed to $64$. The degree of the Chebyshev polynomial filters $K$ is set to $\{5, 4, 2, 3\}$ for the synthetic datasets, arXiv, Movies, and Amazon, respectively. We optimize the model using the Adam optimizer with a learning rate of $0.001$ and an $\ell_2$ weight decay coefficient $\alpha = 10^{-5}$. Training is performed for a maximum of $1{,}000$ epochs with early stopping if the validation performance does not improve for $100$ consecutive epochs. The $\ell_1$ regularization parameter $\beta$ is set to $1.0$ and we optimize the loss in Eq.~(\ref{eq:consensus_loss}).

\subsection{Experiments on Synthetic Datasets}

We first evaluate \methodname against baseline methods on node classification tasks using synthetic datasets. The experiments are divided into two parts: \textbf{(i)} constant homophily ratios and \textbf{(ii)} variable homophily ratios across the dimensions of the multiplex graph. Before that, we describe the synthetic data generation process.

\subsubsection{Synthetic Data Generation}\label{sec:synthetic_data_gen}

We generate node classification labels and adjacency matrices using a method inspired by \cite{zhu2021graph}, which extends the Barabási-Albert model with configurable class settings. First, nodes are randomly assigned into $C$ classes, keeping a balanced distribution. After that, node features are attributed using the features of the \textit{ogbn-products} dataset from Open Graph Benchmark (OGBN) \cite{hu2020open}, which is a product co-purchasing graph. For the edges, we initialize a compatibility matrix $\mathcal{B}_d$ that controls the homophily and heterophily settings of each dimension $d$, resulting in an overall homophily ratio $\rho_d$. The diagonal elements of $\mathcal{B}_d$ are set to the same value $\rho_d$, while the off-diagonal elements are set following the approach in \cite{abu2019mixhop}. The matrices $\mathcal{B}_d$ are employed to sample edges. Let $v_i$ and $v_j$ be nodes of class $c_{v_i}$ and $c_{v_j}$. The edge $(v_i, v_j)$ is added to dimension $d$ with probability $\left(\mathcal{B}_d\right)_{c_{v_i} c_{v_j}}$. This process results in a multiplex graph with $D$ dimensions, such that each dimension has a homophily ratio equal to $\rho_d$. In our experiments, we generate \textbf{(1)} multiplex graphs where $\rho_d$ is the same for all dimensions and \textbf{(2)} multiplex graphs where $\rho_d$ varies from one dimension to another.

\begin{figure}
\centering
\begin{subfigure}{0.6\linewidth}
    \centering
    \includegraphics[width=\textwidth]{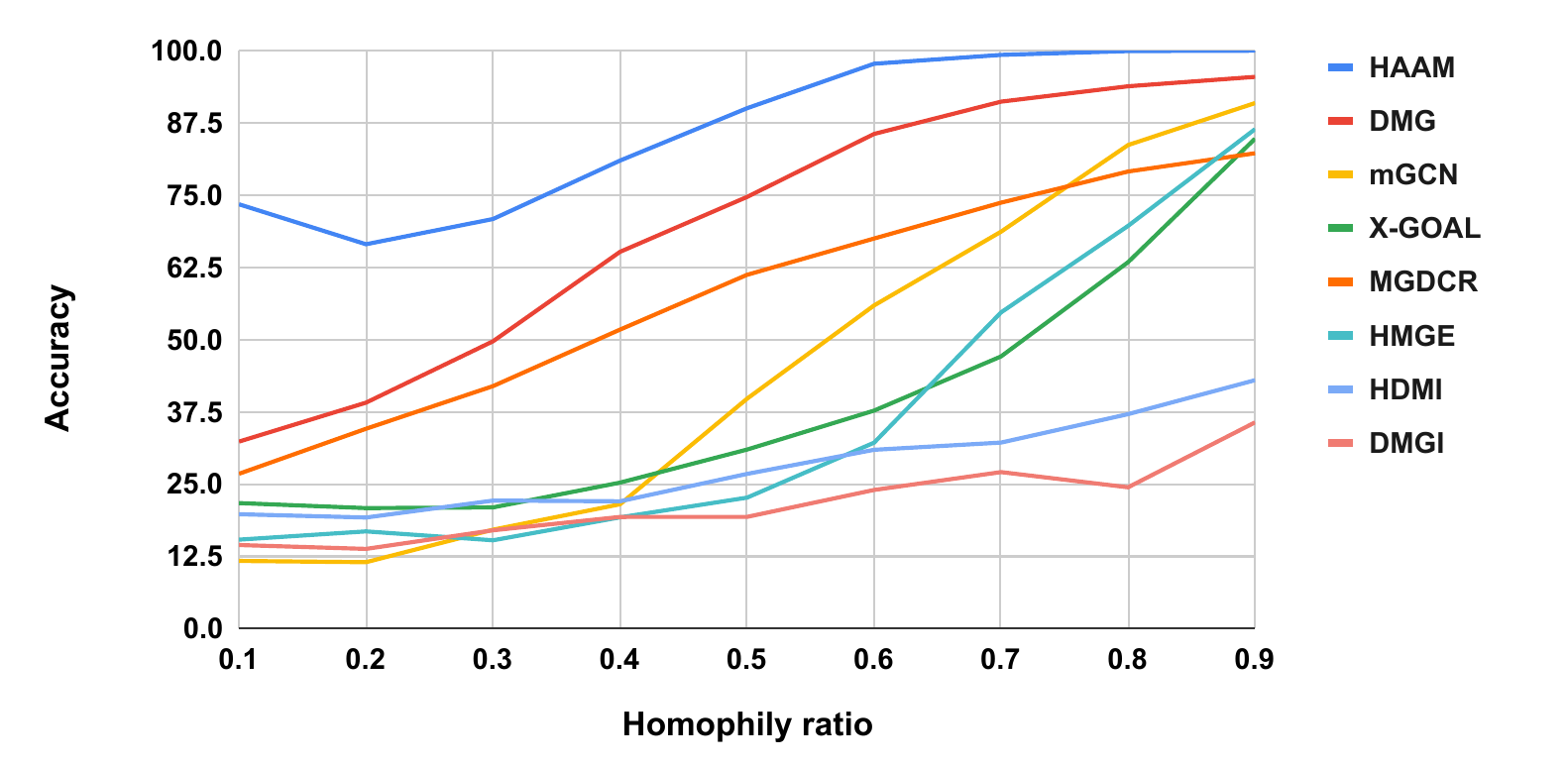}
    \caption{Constant homophily ratios across all dimensions.}\label{fig:synthetic_constant_h}
\end{subfigure}
\begin{subfigure}{0.6\linewidth}
    \centering
    \includegraphics[width=\textwidth]{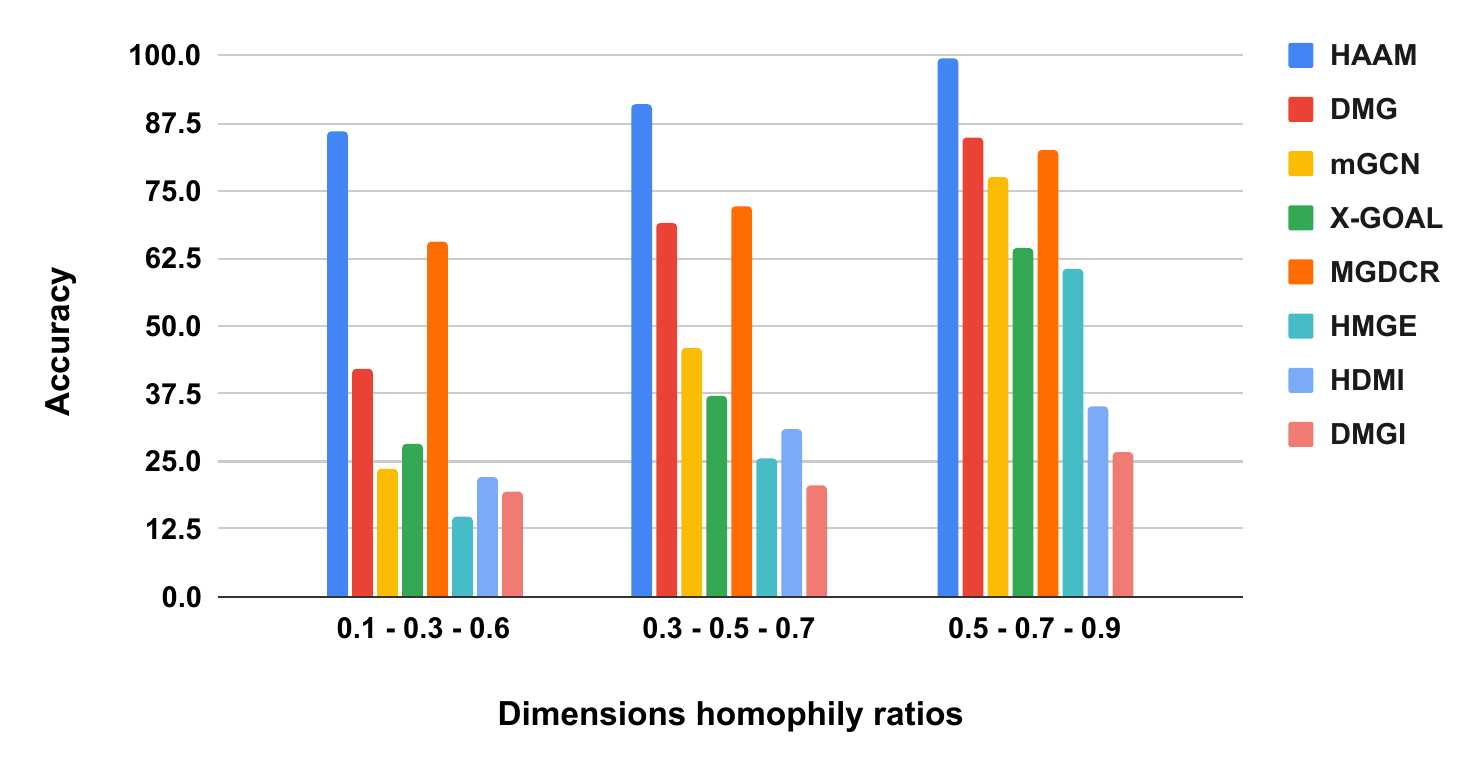}
    \caption{Variable homophily ratios in the same graph.}\label{fig:synthetic_variable_h}
\end{subfigure}

\caption{Results of node classification on synthetic datasets.}
\label{fig:synthetic_results}
\end{figure}

\subsubsection{Constant Homophily Ratios}

Figure~\ref{fig:synthetic_constant_h} shows a comparison of multiplex graph methods on datasets with increasing homophily ratios. For each value of $h \in {0.1, 0.2, \dots, 0.9}$, we generate a synthetic multiplex graph where $h_d = h$ in all dimensions $d$, following the methodology described in Sec.~\ref{sec:synthetic_data_gen}. We then train each method and measure accuracy on the resulting synthetic graphs. The results indicate that \methodname generally achieves higher accuracy than the compared methods across the entire range of $h$. As $h$ increases, the accuracy of \methodname also improves, approaching very high values when $h$ is close to $0.9$. Other methods such as HMGE, X-GOAL, and mGCN obtain lower performance, particularly at smaller homophily ratios. DMG and MGDCR show moderate improvements as homophily increases but tend to underperform compared to \methodname, and they face more difficulties when $h$ is low, which points to their limitations in settings with stronger heterophily. Overall, the results suggest that while some baselines can adapt to higher homophily, \methodname maintains strong performance across the full spectrum of $h$, making it a competitive approach in both low and high homophily regimes.

\subsubsection{Variable Homophily Ratios}

Fig. \ref{fig:synthetic_variable_h} groups results by the value of $h_d$ for each of the three dimensions of the graphs ($0.1-0.3-0.6$, $0.3-0.5-0.7$, and $0.5-0.7-0.9$). We follow the same generation protocol described in Sec. \ref{sec:synthetic_data_gen}, but assign different values of $h_d$ depending on the dimension. The objective is to evaluate competing algorithms when $h_d$ varies from one dimension to another. In this setting, \methodname remains competitive and consistent across all ranges, particularly in the lower homophily range $(0.1-0.3-0.6)$, where the performance gap becomes more evident. The combination of adaptive filter products and learnable dimension-specific compatibility matrices provides a flexible modeling approach, allowing \methodname to better capture varying levels of heterophily within the same multiplex graph. Moreover, consensus labels support the integration of information from different dimensions. Overall, \methodname demonstrates robustness across a wide range of homophily levels. In contrast, methods such as DMGI and HMGE tend to show weaker performance in low homophily settings, reflecting their limitations in modeling heterophily.

\subsection{Experiments on Real-World Datasets}

\begin{table}
\caption{Results of node classification on real-world datasets.}
\centering
\resizebox{\linewidth}{!}{
\begin{tabular}{ l|c|cc|cc|cc }
\hline
Method & Dimension & \multicolumn{2}{c|}{arXiv} & \multicolumn{2}{c|}{Movies} & \multicolumn{2}{c}{Amazon} \\
\cline{3-8}
       &      & F1-Macro & F1-Micro & F1-Macro & F1-Micro & F1-Macro & F1-Micro \\
\hline

\multicolumn{8}{l}{\textit{Unidimensional graph methods}} \\
PolyGCL & Single & 23.00 & 35.32 & 35.53 & 37.81 & 68.37 & 69.53 \\
TFE-GNN & Single & 18.71 & 21.95 & 39.44 & 41.22 & 20.10 & 29.63 \\
\hline

\multicolumn{8}{l}{\textit{Multiplex graph methods}} \\
DMGI   & Multiplex & 15.01 & 28.92 & 30.19 & 31.37 & 74.58 & 74.81 \\
HDMI   & Multiplex & OOM   & OOM   & \underline{40.25} & \underline{41.42} & 84.13 & 84.29 \\
SSDCM  & Multiplex & OOM   & OOM   & 30.14 & 30.53 & 82.20 & 82.40 \\
HMGE   & Multiplex & 31.64 & 44.03 & 33.86 & 37.56 & 24.84 & 37.80 \\
GATNE  & Multiplex & 23.56 & 34.28 & 29.62 & 31.40 & 19.11 & 30.90 \\
MGDCR  & Multiplex & 31.91 & 43.42 & 39.49 & 40.10 & 70.10 & 70.19 \\
DMG    & Multiplex & \underline{35.45} & 44.08 & 39.49 & 40.42 & 55.72 & 57.22 \\
X-GOAL & Multiplex & 28.72 & 40.17 & 39.55 & 40.57 & \underline{85.70} & \underline{85.79} \\
mGCN   & Multiplex & 33.03 & \underline{44.45} & 38.63 & 40.61 & 74.86 & 75.13 \\
InfoMGF& Multiplex & 23.11 & 36.49 & 39.28 & 40.54 & 66.12 & 65.87 \\
\hline

\methodname\ (mean) & Multiplex & \textbf{39.95} & \textbf{48.44} & \textbf{41.81} & \textbf{42.39} & \textbf{88.32} & \textbf{88.37} \\
\methodname\ (std)  & Multiplex & 0.26 & 0.09 & 0.37 & 0.37 & 0.25 & 0.31 \\
\hline
\end{tabular}
}
\label{table:node_classification}
\end{table}

Table~\ref{table:node_classification} reports the results of node classification on real-world datasets. The table is organized into two blocks to provide a comprehensive comparison: \textbf{(i)} \textit{unidimensional graph methods} (PolyGCL and TFE-GNN), which are originally designed for single graphs and are therefore evaluated on the aggregated adjacency matrix $\tilde{A}$, and \textbf{(ii)} \textit{native multiplex graph methods}, which directly operate on all dimensions $\{A_d\}_{d=1}^{D}$.

Across all datasets, \methodname\ achieves the best F1-Macro and F1-Micro scores, with consistently low standard deviations, indicating stable performance. On arXiv, \methodname\ reaches $\text{F1-Macro}=39.95$ and $\text{F1-Micro}=48.44$, outperforming the strongest multiplex competitors DMG (second-best Macro: $35.45$) and mGCN (second-best Micro: $44.45$) by $+4.50$ and $+3.99$, respectively. We also observe that HDMI and SSDCM run out of memory on this large dataset, whereas \methodname\ remains feasible.

On Movies, \methodname\ obtains $41.81/42.39$ (Macro/Micro) and improves over the best baseline HDMI ($40.25/41.42$) by $+1.56$ (Macro) and $+0.97$ (Micro). On Amazon, which exhibits low homophily across all dimensions, \methodname\ achieves the largest improvements: $88.32/88.37$ versus the strongest baseline X-GOAL ($85.70/85.79$), i.e., $+2.62$ (Macro) and $+2.58$ (Micro). These trends illustrate the limitations of existing multiplex methods that do not explicitly account for heterophily patterns that vary by relation. Thus, the obtained results empirically support our main design choices: (i) learning dimension-specific compatibility matrices to model class couplings that differ across dimensions, (ii) composing low-pass and high-pass Chebyshev filters via the proposed product mechanism to jointly exploit homophilic and heterophilic signals, and (iii) producing a sparse consensus prediction using proximal-gradient optimization.

Although PolyGCL and TFE-GNN are designed to handle heterophily in unidimensional graphs, their adaptation to multiplex graphs via $\tilde{A}$ generally underperforms native multiplex models, especially on arXiv and Amazon. This suggests that aggregating dimensions can discard relation-specific homophily/heterophily structure that is critical for prediction. In contrast, InfoMGF is more competitive on Movies but remains substantially below \methodname\ on arXiv and Amazon. These results indicate that multiplex structure learning alone is insufficient when heterophily must be explicitly modeled during propagation and prediction. Overall, Table~\ref{table:node_classification} provides evidence that \methodname\ captures the interplay of homophilic and heterophilic interactions across multiplex dimensions and yields improved node classification performance compared to both recent unidimensional heterophily methods and state-of-the-art multiplex baselines.

\begin{table}
\caption{Ablation study on node classification.}
\centering
\resizebox{\linewidth}{!}{
\begin{tabular}{ c|c|c|c|c|c|c } 
\hline
Dataset & \multicolumn{2}{|c|}{arXiv} & \multicolumn{2}{|c|}{Movies} & \multicolumn{2}{|c}{Amazon} \\

\hline
Metrics & F1-Macro & F1-Micro & F1-Macro & F1-Micro & F1-Macro & F1-Micro \\
\hline
Naive model & 33.03 & 44.45 & 38.63 & 40.61 & 74.86 & 75.13 \\
HAAM-CM ($H$)  & 36.95 & 47.32 & 39.38 & 41.05 & 86.62 & 86.79 \\
HAAM-CM ($H_d$)  & 38.97 & 47.05 & 40.12 & 41.32 & 87.17 & 87.42 \\
HAAM-CM ($H_d$ + \text{prox})  & 39.51 & 47.66 & 40.53 & 41.55 & 87.28 & 87.51 \\
HAAM-LP ($H_d + f_d^{\mathcal{L}} + \text{prox}$) & 38.37 & 47.32 & 41.19 & \underline{42.21} &  86.04 & 86.45 \\
HAAM-HP ($H_d + f_d^{\mathcal{H}} + \text{prox}$) & \textbf{40.63} & \underline{48.38} &  40.44 & 41.53 & \underline{88.26} & \underline{88.28} \\
HAAM-SUM ($H_d + f_d^{\mathcal{L}} + f_d^{\mathcal{H}} + \text{prox}$) & 38.27 & 47.66 & 40.49 & 41.53 & 87.85 & 87.91 \\
HAAM-SUM ($H_d + \delta \,f_d^{\mathcal{L}} + (1-\delta) \,f_d^{\mathcal{H}} + \text{prox}$) & 39.21 & 48.06 & \underline{41.49} & 42.03 & 88.18 & 88.28 \\
\hline
HAAM-Prod ($H_d + f_d^{\mathcal{H}} \cdot f_d^{\mathcal{L}} + \text{prox}$) & \underline{39.95} & \textbf{48.44} & \textbf{41.81} & \textbf{42.39} & \textbf{88.32} & \textbf{88.37} \\
\hline
\end{tabular}
}
\label{table:ablation_study}
\end{table}

\subsection{Ablation Study}
\label{sec:ablation_Study}

The ablation study in Table~\ref{table:ablation_study} provides a fine-grained analysis of the main components of \methodname\ by isolating: \textbf{(i)} compatibility modeling (shared vs.\ dimension-specific), \textbf{(ii)} sparse consensus via proximal optimization, and \textbf{(iii)} the spectral filtering design (low-pass, high-pass, their sum, weighted sum, and product). All variants are trained and evaluated under the same protocol and data splits described in Sec.~\ref{protocol}. 

\paragraph{Ablation variants}
We consider the following models:
\begin{itemize}
    \item \textbf{Naive model:} a per-dimension two-layer GCN baseline that does not use compatibility matrices, spectral filter compositions, or the proximal consensus mechanism.
    \item \textbf{HAAM-CM} ($H$): \methodname\ with a single shared compatibility matrix $H \in \mathbb{R}^{C \times C}$ across all dimensions, testing whether a global class-compatibility structure is sufficient.
    \item \textbf{HAAM-CM} ($H_d$): \methodname\ with dimension-specific compatibility matrices $\{H_d\}_{d=1}^{D}$, capturing relation-dependent class interactions.
    \item \textbf{HAAM-CM} ($H_d + \text{prox}$): HAAM-CM ($H_d$) augmented with the proximal-gradient consensus optimization (Eq.~(\ref{eq:consensus_loss})), yielding sparse consensus predictions.
    \item \textbf{HAAM-LP} ($H_d + f_d^{\mathcal{L}} + \text{prox}$): HAAM-CM ($H_d + \text{prox}$) with only the low-pass Chebyshev filter $f_d^{\mathcal{L}}$.
    \item \textbf{HAAM-HP} ($H_d + f_d^{\mathcal{H}} + \text{prox}$): HAAM-CM ($H_d + \text{prox}$) with only the high-pass Chebyshev filter $f_d^{\mathcal{H}}$.
    \item \textbf{HAAM-SUM} ($H_d + f_d^{\mathcal{L}} + f_d^{\mathcal{H}} + \text{prox}$): HAAM-CM ($H_d + \text{prox}$) where the two filter responses are combined by an unweighted sum, i.e., $f_d^{\mathcal{L}} + f_d^{\mathcal{H}}$.
    \item \textbf{HAAM-SUM} ($H_d + \delta f_d^{\mathcal{L}} + (1-\delta) f_d^{\mathcal{H}} + \text{prox}$): HAAM-CM ($H_d + \text{prox}$) with a weighted sum of low-/high-pass responses, where $\delta \in [0,1]$ is selected on the validation set.
    \item \textbf{HAAM-Prod} ($H_d + f_d^{\mathcal{H}} \cdot f_d^{\mathcal{L}} + \text{prox}$): the full model, where the low-pass and high-pass filters are combined by the proposed product (composition) mechanism.
\end{itemize}

\paragraph{Impact of compatibility modeling} As we can see from Table~\ref{table:ablation_study}, comparing the naive model with compatibility-based variants highlights that compatibility modeling is a primary driver of the improvements, especially on low-homophily datasets such as Amazon (Table~\ref{table:data_statistics}). For example, moving from the Naive model to HAAM-CM improves performance on Amazon (F1-Macro: $74.86 \rightarrow 86.62$ with shared $H$, and $74.86 \rightarrow 87.17$ with dimension-specific $H_d$). This indicates that explicitly modeling cross-class couplings is critical when edges frequently connect dissimilar labels.

\paragraph{Shared vs.\ dimension-specific compatibility matrices} Replacing a single shared compatibility matrix $H$ with dimension-specific matrices $H_d$ yields consistent gains on Movies and Amazon (e.g., Amazon F1-Micro: $86.79 \rightarrow 87.42$). These results support the hypothesis that different relations encode different class-interaction patterns. On arXiv, the two designs are competitive (with a small trade-off between Macro and Micro), which suggests that some datasets may benefit from partial sharing. However, the dimension-specific design provides a stronger and more flexible inductive bias overall. This observation is also consistent with the qualitative differences across dimensions shown in Fig.~\ref{fig:compatibility_matrices}.

\paragraph{Effect of proximal consensus optimization} Adding proximal consensus (HAAM-CM ($H_d + \text{prox}$)) improves over HAAM-CM ($H_d$) on all three datasets. These results indicate that explicitly reconciling the dimension-wise predictions into a sparse consensus label distribution is beneficial. Concretely, on arXiv we observe improvements (F1-Macro: $38.97 \rightarrow 39.51$, F1-Micro: $47.05 \rightarrow 47.66$), and similar gains appear on Movies and Amazon. These results empirically validate the role of the sparsity-inducing consensus mechanism in stabilizing predictions across dimensions.

\paragraph{Low-pass vs.\ high-pass filtering} The relative performance of HAAM-LP and HAAM-HP depends on the dataset homophily regime (Table~\ref{table:data_statistics}). On Movies, which exhibits comparatively higher homophily than Amazon, HAAM-LP is stronger than HAAM-HP (e.g., Movies F1-Micro: $42.21$ vs.\ $41.53$), indicating that smoothing low-frequency components is particularly beneficial. Conversely, on Amazon and arXiv, both characterized by lower homophily, HAAM-HP yields larger gains (e.g., Amazon F1-Macro: $88.26$), highlighting the importance of preserving high-frequency (heterophilic) signals. This behavior is also consistent with the spectral response visualizations reported in Figs.~\ref{fig:filter_response_amazon} and \ref{fig:filter_response_movies}.

\paragraph{Sum vs.\ weighted sum vs.\ product} Among the fusion strategies, HAAM-SUM provides improvements over compatibility-only variants but is generally less robust than either (i) tuning a weighted mixture or (ii) using the proposed product. Introducing the validation-selected $\delta$ in the weighted sum improves over the unweighted sum (e.g., Amazon F1-Macro: $87.85 \rightarrow 88.18$), showing that balancing low-/high-frequency contributions matters. However, HAAM-Prod achieves the strongest and most consistent performance across datasets, obtaining the best results on five out of six metrics and remaining competitive on the remaining one (arXiv F1-Macro, where HAAM-HP is highest). Overall, these results support our design choice. More precisely, the product-based composition provides a robust mechanism to jointly exploit homophilic (low-frequency) and heterophilic (high-frequency) information without requiring manual tuning of a mixture coefficient, and it tends to generalize well across datasets with different homophily regimes.


\begin{figure}[h]
\centering
\includegraphics[width=\textwidth]{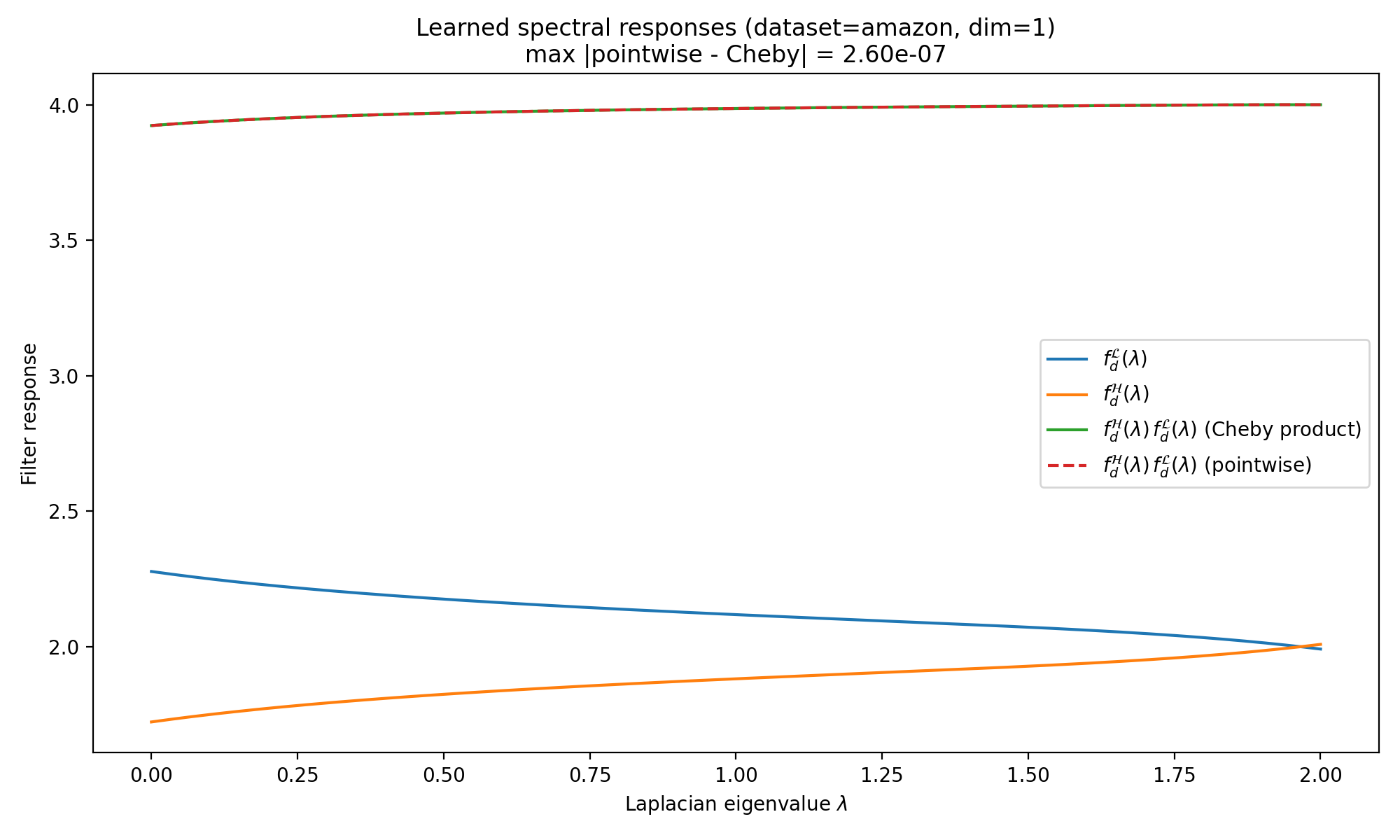}
\caption{Spectral frequency responses of the learned filters of \methodname\ on Amazon. For each dimension $d$, we plot the learned low-pass response $f_d^{\mathcal{L}}(\lambda)$, the learned high-pass response $f_d^{\mathcal{H}}(\lambda)$, and their composed response $f_d^{\mathcal{H}}(\lambda)f_d^{\mathcal{L}}(\lambda)$, evaluated on a dense grid of Laplacian eigenvalues $\lambda \in [0,2]$ (normalized Laplacian). We also overlay the pointwise product $f_d^{\mathcal{H}}(\lambda)f_d^{\mathcal{L}}(\lambda)$ to verify that it matches the Chebyshev-product implementation.}
\label{fig:filter_response_amazon}
\end{figure}

\begin{figure}[h]
\centering
\centering
\includegraphics[width=\textwidth]{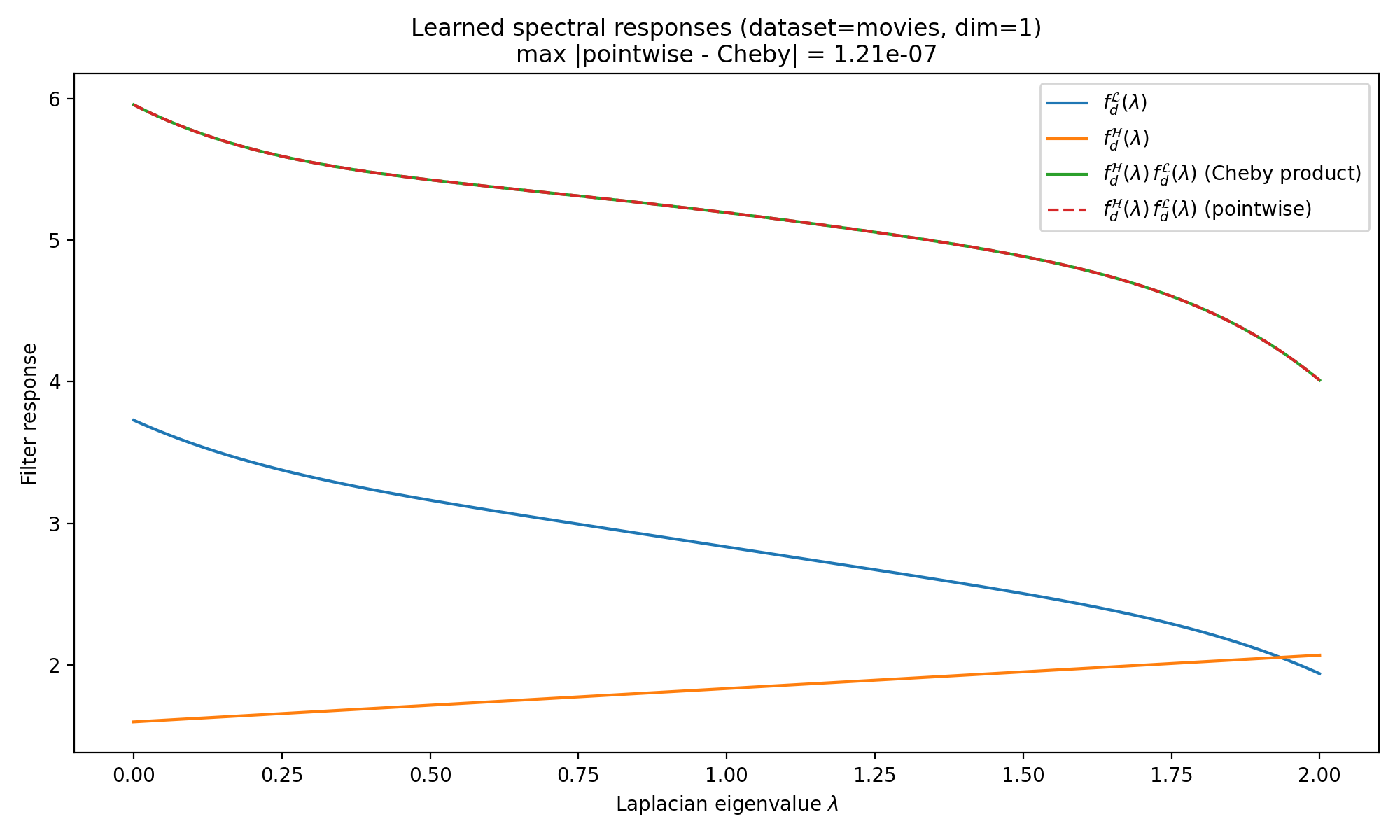}
\caption{Spectral frequency responses of the learned filters of \methodname\ on Movies, plotted with the same protocol as in Fig.~\ref{fig:filter_response_amazon}.}
\label{fig:filter_response_movies}
\end{figure}

\subsection{Spectral Filter Responses}
To provide an interpretable view of how \methodname\ adapts to homophily and heterophily at the signal level, we visualize the spectral frequency response of the learned Chebyshev filters for the multiplex dimensions. Concretely, after training we extract the learned Chebyshev coefficients and evaluate the low-pass filter
$f_d^{\mathcal{L}}(\lambda)$, the high-pass filter $f_d^{\mathcal{H}}(\lambda)$, and the composed filter $f_d^{\mathcal{H}}(\lambda)f_d^{\mathcal{L}}(\lambda)$ on a dense grid of eigenvalues $\lambda \in [0,2]$ corresponding to the spectrum of the normalized Laplacian. This provides an intuitive picture of which frequency components are attenuated or amplified by the model in each dimension.

Figs.~\ref{fig:filter_response_amazon} and \ref{fig:filter_response_movies} show the learned responses on Amazon and Movies. The dashed pointwise product closely overlaps the composed response obtained via the Chebyshev-product coefficients, which empirically validates the correctness of Prop.~\ref{prop:2}.

On Amazon, which exhibits low homophily ratios across dimensions (Table~\ref{table:data_statistics}), the learned low-pass and high-pass components show complementary behaviour, indicating that the model does not rely on purely homophilic smoothing. Importantly, the composed response remains stable across the
spectrum while still preserving dimension-specific differences, which is consistent with the gains obtained by the full model on Amazon in Table~\ref{table:node_classification} and with the ablation results showing that combining $H_d$, proximal consensus, and composed filtering yields the best overall performance (Table~\ref{table:ablation_study}).

On Movies, where the homophily levels are comparatively higher, the composed response places more emphasis on lower frequencies, reflecting a stronger contribution of smoothing components that are beneficial under homophily. At the same time, the presence of a non-trivial high-pass component indicates that the model still preserves heterophilic information when they are relevant. Overall, these frequency-response plots corroborate the main motivation of \methodname: learning dimension-specific filters that flexibly combine homophilic and heterophilic propagation patterns rather than committing to a single regime.

\begin{figure}[t]
\centering
\begin{subfigure}{0.48\linewidth}
    \centering
    \includegraphics[width=\textwidth]{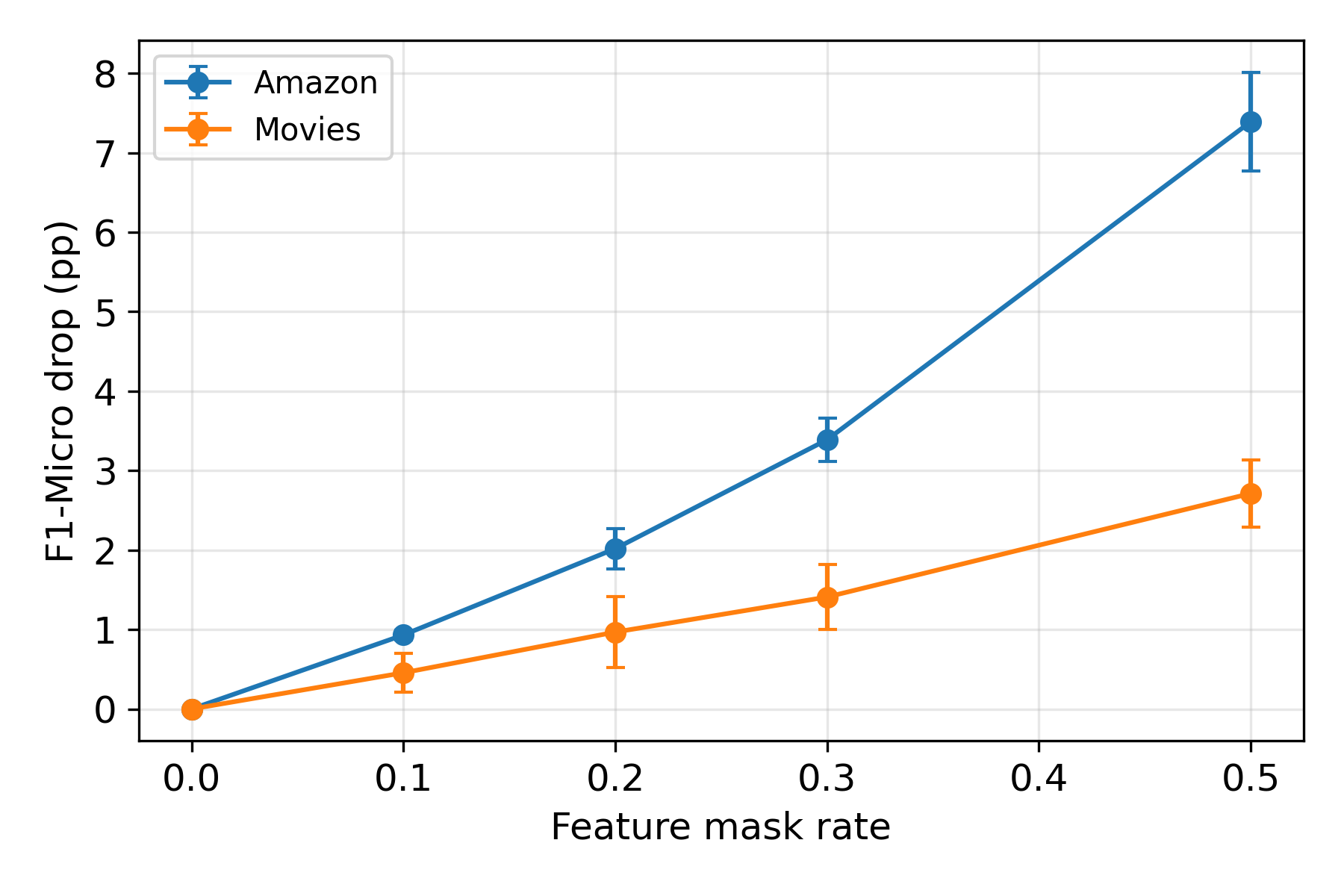}
    \caption{Feature masking.}\label{fig:robust_feat_mask}
\end{subfigure}
\begin{subfigure}{0.48\linewidth}
    \centering
    \includegraphics[width=\textwidth]{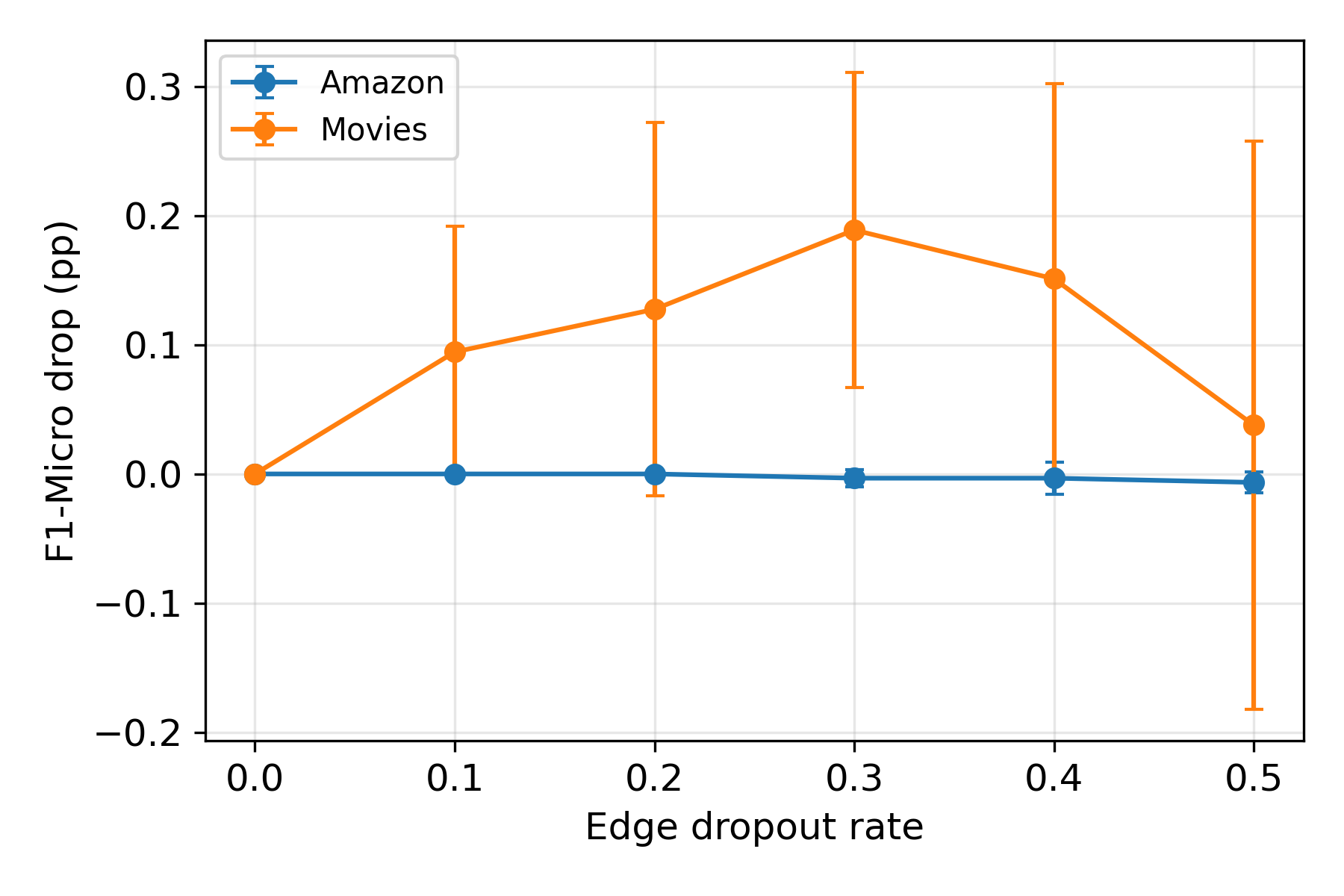}
    \caption{Edge dropping.}\label{fig:robust_edge_dropout}
\end{subfigure}

\caption{Robustness of \methodname\ under input perturbations on two real-world multiplex datasets (Amazon and Movies). The y-axis reports the drop in F1-Micro (in percentage points) relative to the clean graph. Error bars denote standard deviation over five random perturbation trials.}
\label{fig:robustness}
\end{figure}

\subsection{Robustness Analysis}
We evaluate the robustness of \methodname\ under two common perturbations that emulate noisy or incomplete multiplex data: \textit{(i) feature masking}, where a fraction $p$ of node feature entries is randomly set to zero, and \textit{(ii) edge dropping}, where a fraction $p$ of edges is randomly removed in each dimension. We apply perturbations only at test time (the model is trained on the clean graph) and report the corresponding performance drop relative to the clean setting. Each perturbation level is repeated over five random trials and we report mean and standard deviation (Fig.~\ref{fig:robustness}).

\textit{Feature masking.} Fig.~\ref{fig:robust_feat_mask} shows that \methodname\ degrades gracefully as feature information is removed. With $50\%$ masking, the F1-Micro drop is $7.39$ pp on Amazon and $2.71$ pp on Movies (F1-Macro drop: $7.48$ pp and $2.15$ pp, respectively). This indicates that \methodname\ can compensate for partially missing attributes by leveraging multiplex structural signals captured by the composed low-/high-pass filters and the consensus mechanism.

\textit{Edge dropping.} Fig.~\ref{fig:robust_edge_dropout} shows that \methodname\ is highly stable to substantial edge removal. Across dropout rates up to $50\%$, the maximum observed F1-Micro drop remains below $0.19$ pp on Movies and below $0.01$ pp on Amazon (the latter even exhibits a negligible gain within noise). This robustness is consistent with the design of \methodname, which integrates information across multiple relations and relies on learned compatibility matrices to reweight same-class and cross-class propagation rather than overfitting to a specific connectivity pattern. Overall, these stress tests suggest that \methodname\ remains reliable under common forms of missing features and missing links in multiplex graphs.

\begin{figure}[t]
\centering
\begin{subfigure}{0.4\linewidth}
    \centering
    \includegraphics[width=\textwidth]{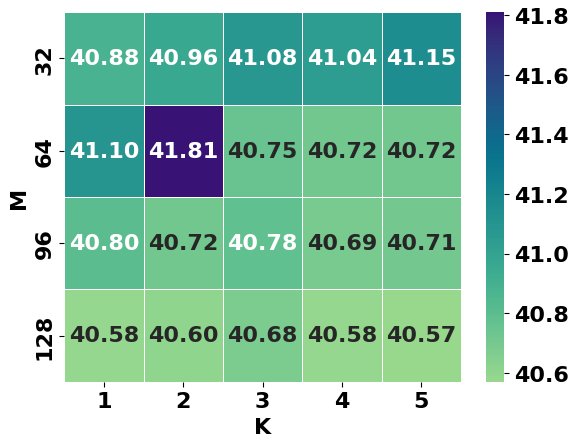}
    \caption{F1-Macro on Movies}\label{fig:sensitivity_movies_macro}
\end{subfigure}
\begin{subfigure}{0.4\linewidth}
    \centering
    \includegraphics[width=\textwidth]{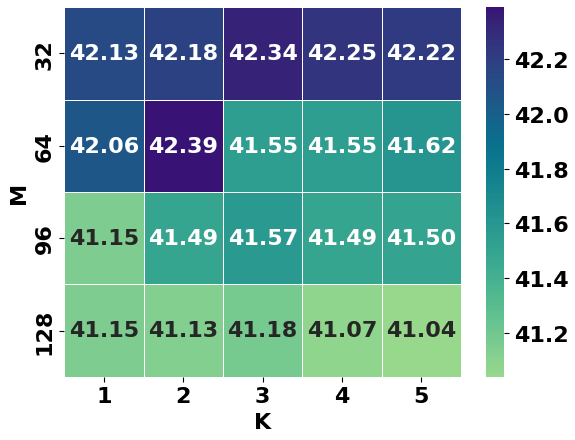}
    \caption{F1-Micro on Movies}\label{fig:sensitivity_movies_micro}
\end{subfigure}

\begin{subfigure}{0.4\linewidth}
    \centering
    \includegraphics[width=\textwidth]{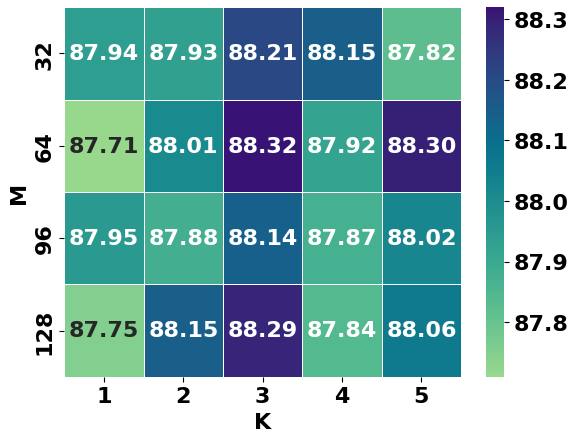}
    \caption{F1-Macro on Amazon}\label{fig:sensitivity_amazon_macro}
\end{subfigure}
\begin{subfigure}{0.4\linewidth}
    \centering
    \includegraphics[width=\textwidth]{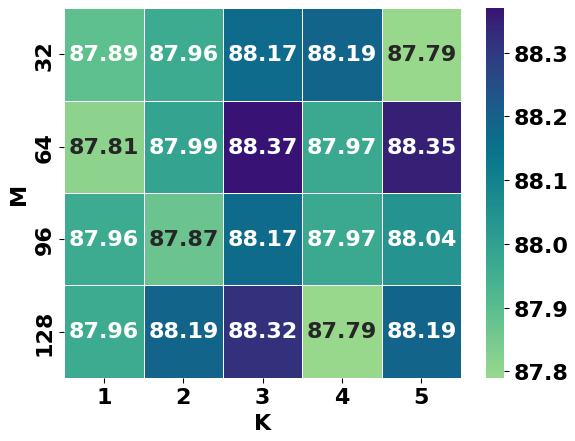}
    \caption{F1-Micro on Amazon}\label{fig:sensitivity_amazon_micro}
\end{subfigure}

\caption{Sensitivity analysis of \methodname.}
\label{fig:sensitivity_analysis}
\end{figure}

\subsection{Sensitivity Analysis}
Fig. \ref{fig:sensitivity_analysis} presents the sensitivity analysis of \methodname, showing the impact of varying the embedding size $M$ and the degree of the filters $K$ on the performance of node classification. The heatmaps reveal that our approach is relatively stable across different settings of $M$ and $K$, with only minor fluctuations in accuracy. For the Movies dataset, the scores show a slight preference for a smaller number of filters ($K=$1 or 2), particularly when the embedding size is set to $M=64$. In contrast, the best performance for the Amazon dataset is observed at a larger degree ($K=$3 or 5). Overall, the analysis suggests that the model performs robustly across a wide range of settings. 

\begin{figure}[t]
\centering
\begin{subfigure}{0.98\linewidth}
    \centering
    \includegraphics[width=\textwidth]{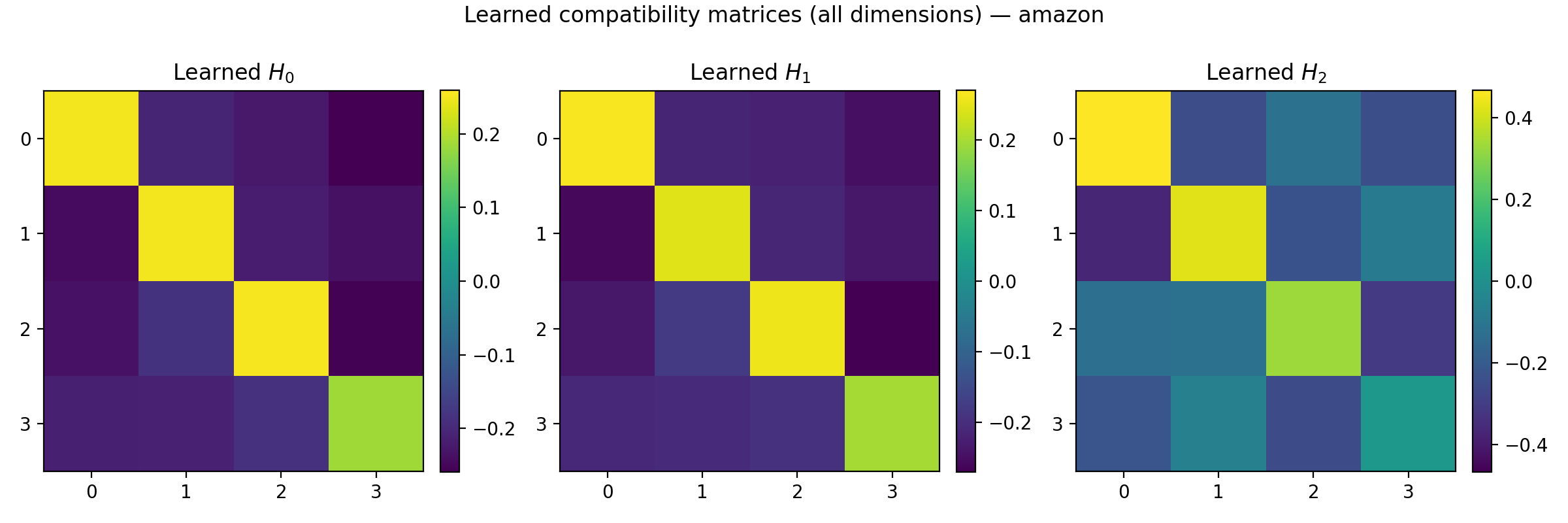}
    \caption{Amazon (three dimensions).}\label{fig:H_learned_amazon}
\end{subfigure}

\vspace{0.5em}

\begin{subfigure}{0.98\linewidth}
    \centering
    \includegraphics[width=\textwidth]{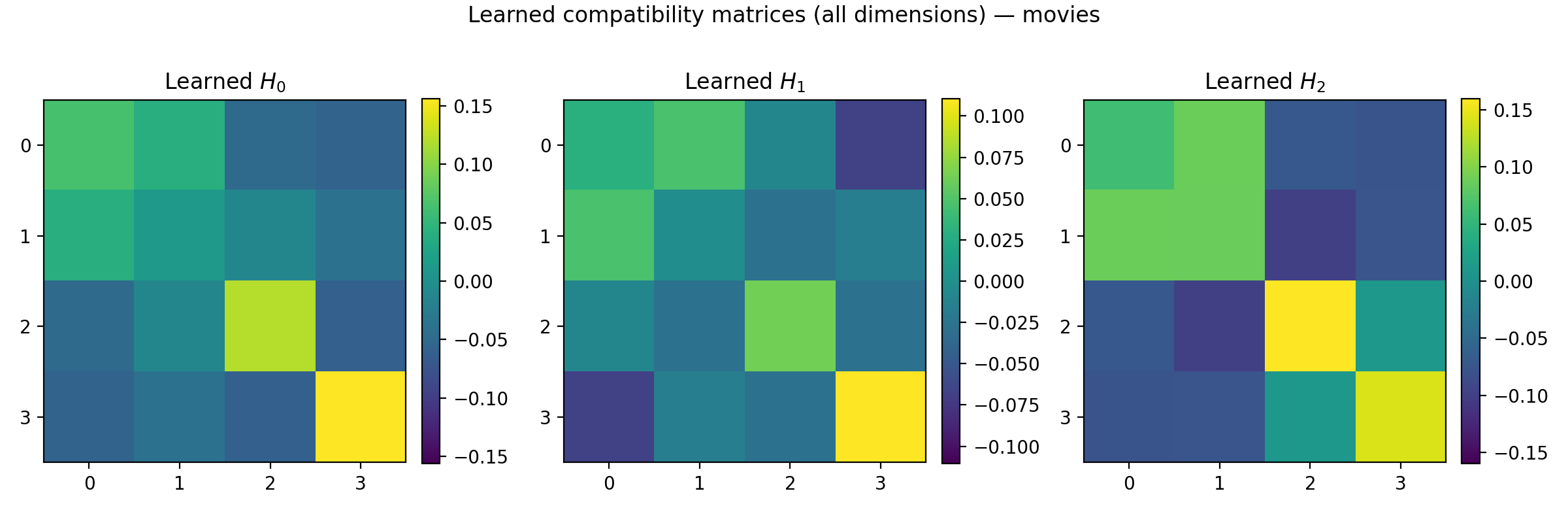}
    \caption{Movies (three dimensions).}\label{fig:H_learned_movies}
\end{subfigure}

\caption{Learned dimension-specific compatibility matrices $H_d$ of \methodname\ on two datasets (Amazon and Movies). Each heatmap is a $C\times C$ matrix, where entry $(H_d)_{c,c'}$ controls how information associated with class $c$ contributes to the score of class $c'$ after propagation in dimension $d$. Diagonal-dominant structure corresponds to homophilic couplings, whereas strong off-diagonal structure (in magnitude) indicates cross-class (heterophilic) couplings.}
\label{fig:compatibility_matrices}
\end{figure}

\subsection{Learned Compatibility Matrices}
A key component of \methodname\ is the dimension-specific compatibility matrix $H_d \in \mathbb{R}^{C \times C}$, which is learned jointly with the spectral filters and the
consensus mechanism. Fig.~\ref{fig:compatibility_matrices} provides a qualitative view of the learned $H_d$ matrices for Amazon and Movies. First, the learned compatibility patterns differ substantially across dimensions, supporting our design choice to learn dimension-specific $H_d$ rather than a single shared compatibility matrix. Second, the matrices exhibit both diagonal structure and pronounced off-diagonal structure (with dataset- and dimension-dependent intensity), indicating that \methodname\ does not assume a fixed homophily regime but instead learns how to reweight same-class and cross-class signals in a relation-aware manner.

These visual results complement the quantitative ablation findings in Table~\ref{table:ablation_study}. More precisely, introducing $H_d$ on top of the naive per-dimension GCN yields consistent gains, with a particularly large improvement on Amazon (F1-Macro: $74.86 \rightarrow 86.62$). Thus, the compatibility modeling is a major contributor to the overall performance improvements reported in Table~\ref{table:node_classification}.

\begin{figure}[!ht]
\centering
\begin{subfigure}{0.48\linewidth}
    \centering
    \includegraphics[width=\textwidth]{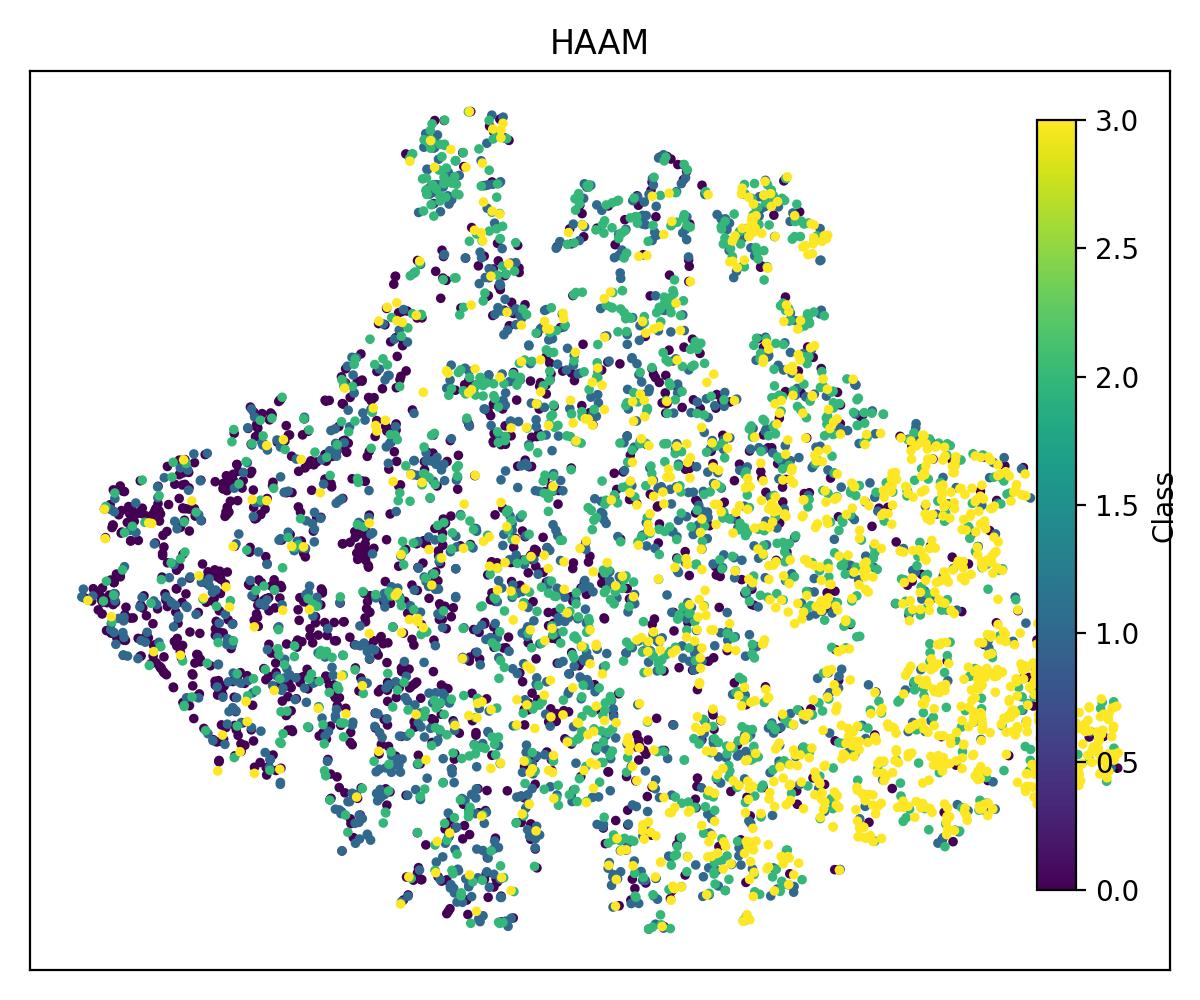}
    \caption{\methodname\ on Movies.}\label{fig:tsne_movies_haam}
\end{subfigure}
\begin{subfigure}{0.48\linewidth}
    \centering
    \includegraphics[width=\textwidth]{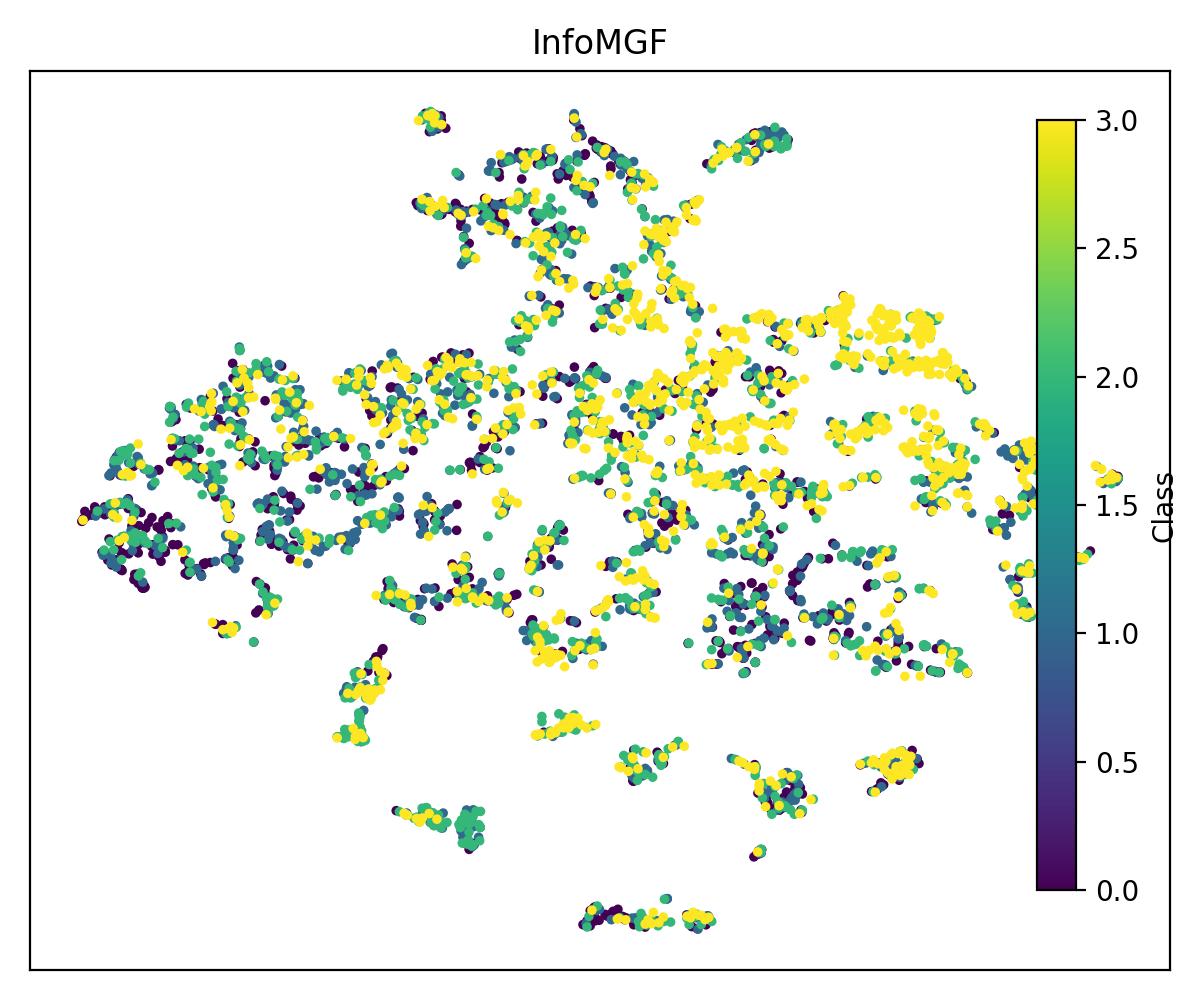}
    \caption{InfoMGF on Movies.}\label{fig:tsne_movies_infomgf}
\end{subfigure}

\begin{subfigure}{0.48\linewidth}
    \centering
    \includegraphics[width=\textwidth]{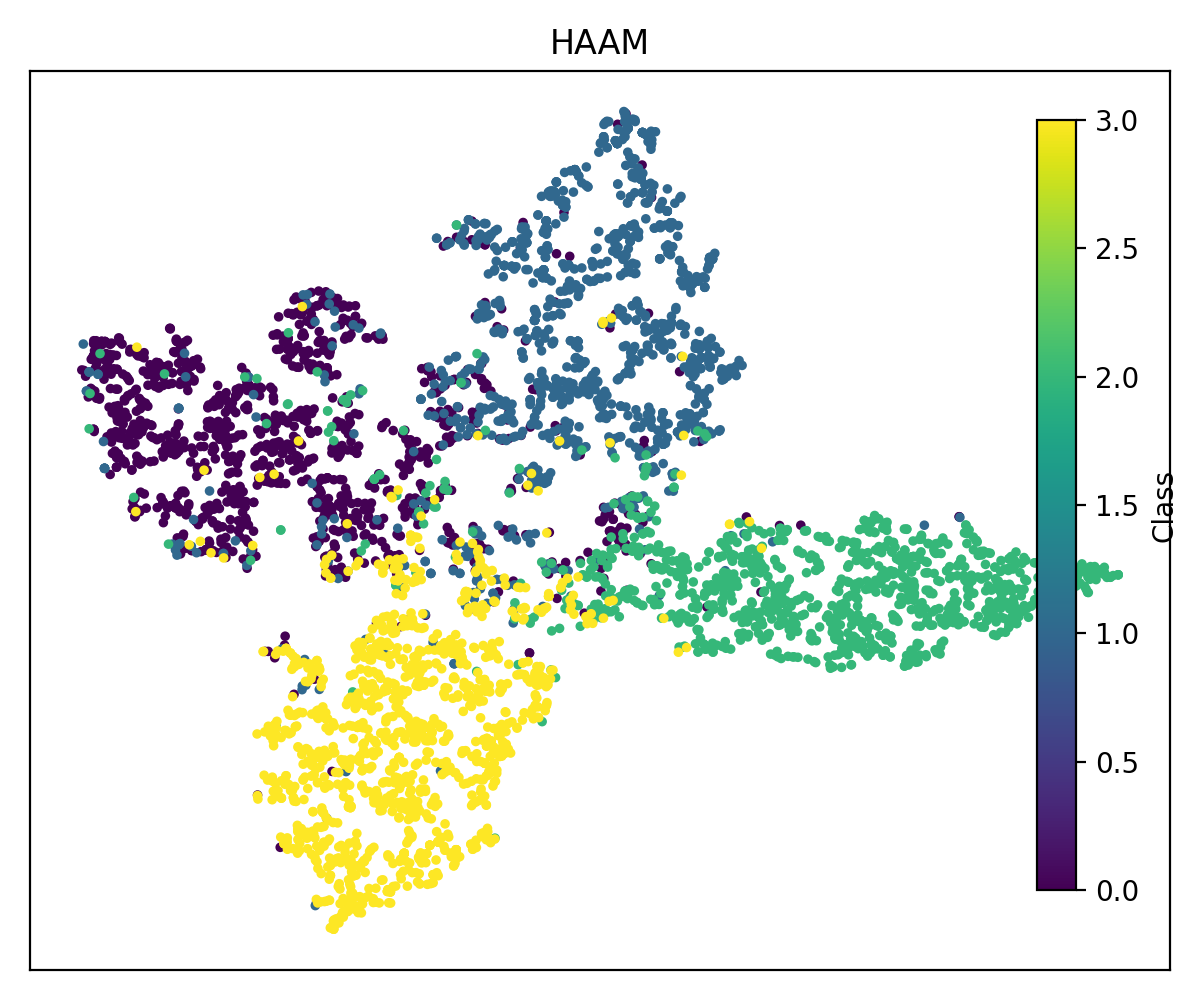}
    \caption{\methodname\ on Amazon.}\label{fig:tsne_amazon_haam}
\end{subfigure}
\begin{subfigure}{0.48\linewidth}
    \centering
    \includegraphics[width=\textwidth]{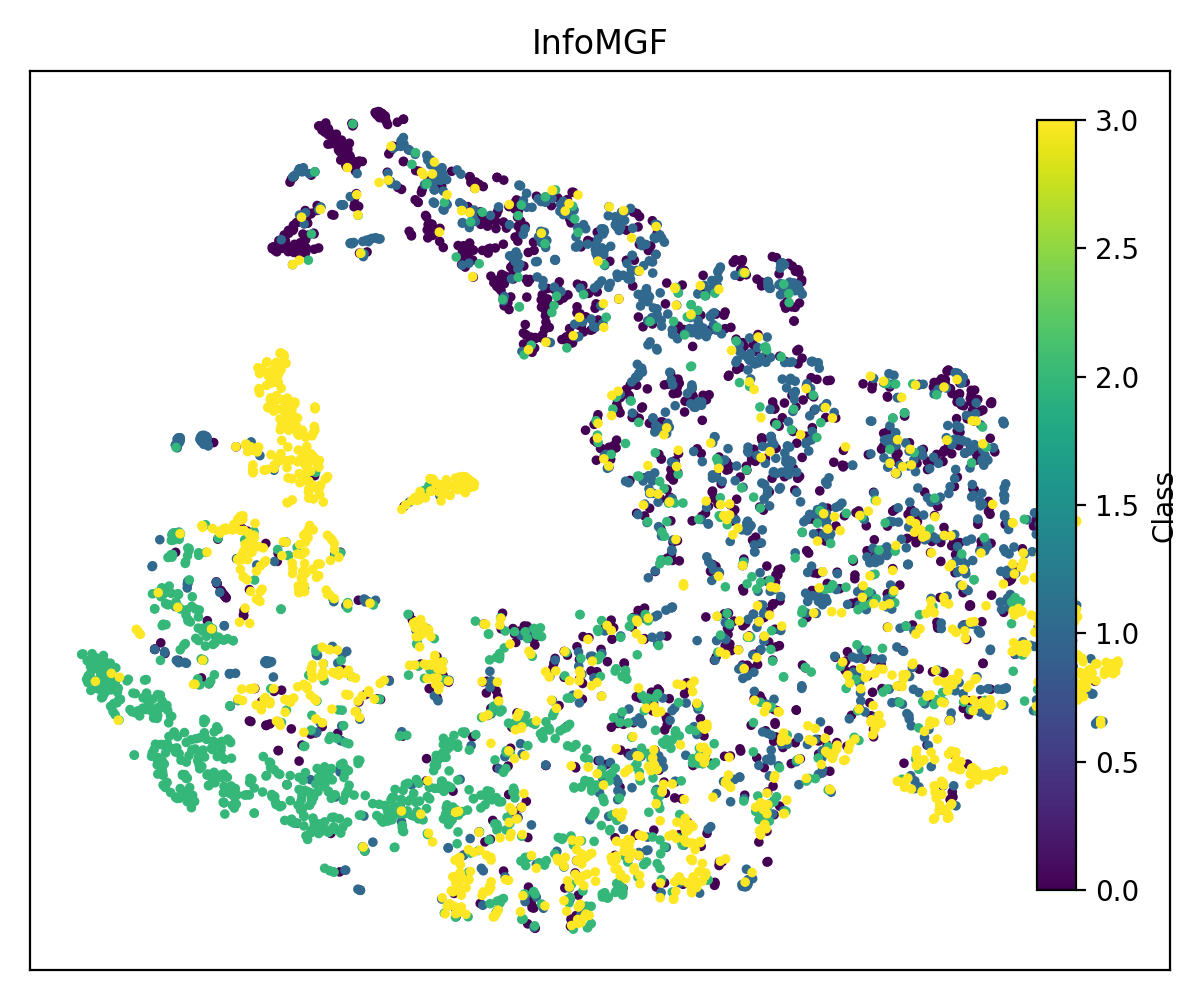}
    \caption{InfoMGF on Amazon.}\label{fig:tsne_amazon_infomgf}
\end{subfigure}

\caption{Latent space t-SNE visualization of the node representations learned by \methodname\ and InfoMGF on Movies and Amazon. Points are colored by ground-truth classes. Following common practice, we subsample up to $5{,}000$ nodes (approximately balanced across classes) and use a fixed random seed.}
\label{fig:tsne_visualization}
\end{figure}

\subsection{Embedding Visualization}
We provide a qualitative assessment of representation quality using t-SNE \cite{maaten2008visualizing}. The node representations produced by each method are extracted after training, project them to two dimensions, and color nodes by their ground-truth class labels. As illustrated in Fig.~\ref{fig:tsne_visualization}, \methodname\ yields representations that are more class-separable. On Amazon, the embedding space of \methodname\ exhibits compact and well-separated clusters with limited overlap between categories. In contrast, InfoMGF shows a more fragmented and mixed structure, which is consistent with its lower F1 scores on this dataset (Table~\ref{table:node_classification}). On Movies, class overlap remains more pronounced for both methods. Nevertheless, the \methodname\ embeddings appear less fragmented, which aligns with its consistent improvement in F1 over competing approaches.

\section{Conclusion}

This paper addresses a critical gap in the literature of multiplex graphs by introducing \methodname, a novel adaptive framework specifically designed for node classification in multiplex graphs exhibiting both heterophilic and homophilic dimensions. While existing models tend to focus on homophilic structures, they may fall short in capturing the structural diversity of real-world systems where heterophily is prevalent. Our method leverages the product of high-pass and low-pass Chebyshev filters, combined with dimension-specific compatibility matrices and consensus labels, to dynamically adapt to varying connectivity patterns across dimensions. This design supports the ability to generate more stable and robust predictions in multi-relational environments. 

The dimension-specific compatibility matrices are estimated using labeled nodes within the semi-supervised setting. As a result, their reliability depends on the availability and representativeness of labeled data. When labeled nodes are extremely scarce, highly imbalanced, or biased, the empirical class-mixing estimates may become less stable, which can in turn affect calibration and prediction robustness. Although this behavior is inherent to approaches relying on class-conditional connectivity statistics, incorporating additional regularization or prior structural assumptions could further improve robustness in low-label regimes. Exploring such strategies constitutes a promising direction for future work.

While this study focuses exclusively on multiplex graphs, in which nodes of the same type are connected via multiple types of edges, it does not target more general heterogeneous multidimensional graphs that involve both multiple types of nodes and multiple types of edges. Likewise, it does not address dynamic graphs that evolve over time. Nonetheless, this work may pave the way for exploring more complex graph structures, including heterogeneous multiplex graphs and dynamic multiplex networks.

\section*{Declaration of Competing Interest}
The authors declare that they have no known competing financial interests or personal relationships that could have appeared to influence the work reported in this paper.

\section*{Data Availability}
The dataset and the source code used to support the findings of this study are available at
\href{https://drive.google.com/drive/folders/1ROhUghYARMRGzyLyBP2wXJH6MxXfu4yf?usp=drive_link}{this link}.

\bibliographystyle{plainnat}
\bibliography{references}     

\newpage 

\appendix
\section*{Appendices}

\section{Notation summary}\label{sec:notation_summary}

In the main manuscript, we use a variety of symbols to denote graph structure, node- and label-related quantities, and the parameters of the spectral filters. For convenience, this appendix gathers the used notation in a single place. The goal is to provide a quick reference while reading Sections 3 and 4, particularly the derivations of the Chebyshev filters and the update rules.

Table~\ref{tab:notation_summary} lists each symbol, along with a brief description and its dimensions. This helps distinguish, for example, between node-level matrices ($X$, $Y$, $\hat{Y}_d$, $\hat{Y}$), graph-structural operators ($A_d$, $L_d$, $\hat{L}_d$), and filter parameters ($\theta_k^{\mathcal{L},d}$, $\theta_k^{\mathcal{H},d}$, $\gamma^{\mathcal{L},d}$, $\gamma^{\mathcal{H},d}$). Dataset-specific notation is introduced directly in 
Sec.~5 when needed.

\begin{table}[h]
\centering
\caption{Summary of main notation.}
\label{tab:notation_summary}
\begin{tabular}{lll}
\hline
Symbol & Description & Dimensions \\
\hline
$N$ & Number of nodes & $\mathbb{N}$ \\
$D$ & Number of dimensions (layers) & $\mathbb{N}$ \\
$C$ & Number of classes & $\mathbb{N}$ \\
$F$ & Number of input node features & $\mathbb{N}$ \\
$F'$ & Dimension of transformed features & $\mathbb{N}$ \\
$K$ & Degree of polynomial filters & $\mathbb{N}$ \\
$X$ & Node feature matrix & $\mathbb{R}^{N \times F}$ \\
$Y$ & One-hot label matrix & $\{0,1\}^{N \times C}$ \\
$A_d$ & Adjacency matrix in dimension $d$ & $\mathbb{R}^{N \times N}$ \\
$\Delta_d$ & Degree matrix in dimension $d$ & $\mathbb{R}^{N \times N}$ \\
$L_d$ & Normalized Laplacian in dimension $d$ & $\mathbb{R}^{N \times N}$ \\
$\tilde{L}_d$ & Rescaled Laplacian in dimension $d$ & $\mathbb{R}^{N \times N}$ \\
$\hat{L}_d$ & Product spectral filter in dimension $d$ & $\mathbb{R}^{N \times N}$ \\
$H_d$ & Compatibility matrix in dimension $d$ & $\mathbb{R}^{C \times C}$ \\
$\hat{Y}_0$ & Initial label predictions (MLP output) & $\mathbb{R}^{N \times C}$ \\
$\hat{Y}_d$ & Dimension-specific label predictions & $\mathbb{R}^{N \times C}$ \\
$\hat{Y}$ & Consensus label predictions & $\mathbb{R}^{N \times C}$ \\
$f_d^{\mathcal{L}}$, $f_d^{\mathcal{H}}$ & Low-/high-pass Chebyshev filters & $\mathbb{R}^{N \times N}$ (operators) \\
$\theta_k^{\mathcal{L},d}$, $\theta_k^{\mathcal{H},d}$ & Chebyshev coefficients & Scalars, $k\in \left\{0,\dots,K\right\}$ \\
$\gamma^{\mathcal{L},d}$, $\gamma^{\mathcal{H},d}$ & Reparameterization vectors & $\mathbb{R}^{K+1}$ \\
\hline
\end{tabular}
\end{table}

\section{Proof of Proposition \ref{prop.A}}\label{proof:prop1}

\begin{proof}

Let $L$ be the graph Laplacian with eigenvalue decomposition $L = U \Lambda U^{\top}$, where $\Lambda = \mathrm{diag}(\lambda_1, \dots, \lambda_N)$ is the diagonal matrix of eigenvalues and $U$ is the matrix of corresponding eigenvectors. The matrix of eigenvectors $U$ is an orthonormal matrix, then $U^{\top} \, U = I$. The graph signal $x$ can be transformed into the spectral domain using the graph Fourier transform:
\[
\hat{x} = U^{\top} x.
\]

\noindent Now, applying a low-pass filter $f^{\mathcal{L}}(L)$ to the signal $x$ results in:
\[
f^{\mathcal{L}}(L) \, x = U \, f^{\mathcal{L}}(\Lambda) \, U^{\top} x = U \, f^{\mathcal{L}}(\Lambda) \, \hat{x},
\]
where $f^{\mathcal{L}}(\Lambda)$ is the element-wise application of the low-pass filter to the eigenvalues in $\Lambda$. Similarly, applying a high-pass filter $f^{\mathcal{H}}(L)$ after the low-pass filter gives:

\begin{equation*}
\begin{split}
    f^{\mathcal{H}}(L) \, \left( f^{\mathcal{L}}(L) \, x \right) &= U \, f^{\mathcal{H}}(\Lambda) \, U^{\top} \, U \, f^{\mathcal{L}}(\Lambda) \, \hat{x} \\
    &= U \, f^{\mathcal{H}}(\Lambda) \, f^{\mathcal{L}}(\Lambda) \, \hat{x}.
\end{split}
\end{equation*}

\noindent Thus, the combined filter $f(L)$ is given by the product of $f^{\mathcal{H}}(\Lambda)$ and $f^{\mathcal{L}}(\Lambda)$:
\[
f(L) = f^{\mathcal{H}}(L) \cdot f^{\mathcal{L}}(L),
\]
and the filter output becomes:
\[
y = U \, \left( f^{\mathcal{H}}(\Lambda) \cdot f^{\mathcal{L}}(\Lambda) \right) \, U^{\top} x.
\]
\end{proof}

\section{Proof of Corollary \ref{coro.1}}\label{proof:coro1}

\begin{proof}
According to Prop. 4.1, the application of a low-pass filter $f^{\mathcal{L}}(L)$ followed by a high-pass filter $f^{\mathcal{H}}(L)$ to a graph signal $x$ is equivalent to applying a filter whose eigenvalues are the element-wise product of the eigenvalues of the low-pass and high-pass filters. 

\vspace{1mm}
\noindent So, we have:
\[
y = f(L) \, x = U \left( f^{\mathcal{H}}(\Lambda) \cdot f^{\mathcal{L}}(\Lambda) \right) U^{\top} x.
\]

\noindent Since the element-wise product of diagonal matrices (eigenvalue matrices) is equivalent to the dot product of the matrices, we can conclude that the dot product of diagonal matrices is commutative:
\[
f^{\mathcal{H}}(\Lambda) \cdot f^{\mathcal{L}}(\Lambda) = f^{\mathcal{L}}(\Lambda) \cdot f^{\mathcal{H}}(\Lambda),
\]
it then follows that:
\[
f^{\mathcal{H}}(L) \cdot f^{\mathcal{L}}(L) = f^{\mathcal{L}}(L) \cdot f^{\mathcal{H}}(L).
\]

\noindent Thus, the order of applying the low-pass and high-pass filters does not affect the outcome, proving that the composition is order-invariant.

\end{proof}

\section{Proof of Proposition \ref{prop:2}}\label{proof:prop2}

We first show the following result, which is used to prove the proposition.
\begin{lemma} \label{lemma1}
Let $T_i(x)$ and $T_j(x)$ be Chebyshev polynomials of degree $i$ and $j$ respectively. Then, their product can be expressed as:
\begin{equation*}
    T_i(x) \cdot T_j(x) = \frac{1}{2} \left[T_{i+j}\left(x\right) + T_{|i-j|}\left(x\right)\right].
\end{equation*}
\end{lemma}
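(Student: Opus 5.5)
The plan is to use the trigonometric characterization of Chebyshev polynomials on $[-1,1]$, namely $T_k(\cos\phi)=\cos(k\phi)$, combined with the product-to-sum identity for cosines. I would then extend the resulting equality from $[-1,1]$ to all of $\mathbb{R}$ by a polynomial identity argument.

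First I would establish $T_k(\cos\phi)=\cos(k\phi)$ for all $k\ge 0$ by strong induction, using the recurrence in Sec.~3.3. The base cases $T_0=1=\cos 0$ and $T_1(\cos\phi)=\cos\phi$ are immediate. For the inductive step, the identity
\[
\cos((k+1)\phi)+\cos((k-1)\phi)=2\cos\phi\cos(k\phi)
\]
matches the recurrence $T_{k+1}(x)=2xT_k(x)-T_{k-1}(x)$ with $x=\cos\phi$.

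Next, for $x=\cos\phi\in[-1,1]$, I would write
\[
T_i(x)T_j(x)=\cos(i\phi)\cos(j\phi)=\tfrac12\big[\cos((i+j)\phi)+\cos((i-j)\phi)\big].
\]
Because cosine is even, $\cos((i-j)\phi)=\cos(|i-j|\phi)$. This equals $\tfrac12[T_{i+j}(x)+T_{|i-j|}(x)]$ by the first step, since $i+j$ and $|i-j|$ are nonnegative integers.

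The only genuinely non-routine point is that the statement is claimed for all $x\in\mathbb{R}$ rather than only for $x\in[-1,1]$. Both sides are polynomials in $x$, and they agree on the infinite set $[-1,1]$. Their difference is therefore a polynomial with infinitely many roots, so it is identically zero, and the identity holds on all of $\mathbb{R}$.

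The same polynomial identity also holds with $x$ replaced by any square matrix, in particular $\tilde{L}_d$, because all the powers of a single matrix commute. This is exactly the form needed to prove Prop.~\ref{prop:2}. That proof then reduces to expanding
\[
\Big(\sum_i\theta_i^{\mathcal{L},d}T_i(\tilde{L}_d)\Big)\Big(\sum_j\theta_j^{\mathcal{H},d}T_j(\tilde{L}_d)\Big)
\]
bilinearly and applying the lemma to each term.
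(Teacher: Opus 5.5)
Your proof is correct and follows the same route as the paper's proof: the trigonometric representation $T_k(\cos\phi)=\cos(k\phi)$ combined with the cosine product-to-sum formula. Your extra steps make explicit what the paper leaves implicit: deriving that representation from the recurrence, extending the identity from $[-1,1]$ to all $x$ (and to matrices such as $\tilde{L}_d$) via the polynomial-identity argument, and using the evenness of cosine to obtain $T_{|i-j|}$.
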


\begin{proof}
    The equality can be proved using the trigonometric definition of Chebyshev polynomials:
    \begin{equation*}
        T_k(x) = \cos(k \arccos(x)),
    \end{equation*}
    and the product-to-sum formula for the cosine:
    \begin{equation*}
        \cos(\alpha) \cdot \cos(\beta) = \frac{1}{2} \left[\cos(\alpha + \beta) + \cos(\alpha - \beta)\right].
    \end{equation*}
    Applying both formulas to $T_i(x) \cdot T_j(x)$ results in:
    \begin{align*}
        T_i(x) \cdot T_j(x) &= \cos(i \arccos(x)) \cdot \cos(j \arccos(x)) \\
        &= \frac{1}{2} [\cos\left(\left(i+j\right) \cdot\arccos x\right) \\
        & \ \ \ + \cos\left(\left(i - j\right) \cdot \arccos x\right)] \\
        &= \frac{1}{2} \left[T_{i+j}\left(x\right) + T_{|i-j|}\left(x\right)\right].
    \end{align*}
\end{proof}

We can now prove Prop.~\ref{prop:2}
\begin{proof}
    Let $\hat{L}_d = f_d^{\mathcal{L}}(\tilde{L}_d) \cdot f_d^{\mathcal{H}}(\tilde{L}_d)$. The product of $f_d^{\mathcal{L}}$ and $f_d^{\mathcal{H}}$ can be expressed as:
    \begin{align*}
        f_d^{\mathcal{L}}(\tilde{L}_d) \cdot f_d^{\mathcal{H}}(\tilde{L}_d) &= \left(\sum_{k=0}^K \theta_k^{\mathcal{L},d} \: T_k(\tilde{L}_d)\right) \left(\sum_{k=0}^K \theta_k^{\mathcal{H},d} \: T_k(\tilde{L}_d)\right)  \\
        &= \sum_{i=0}^K \sum_{j=0}^K \theta_i^{\mathcal{L},d} \, \theta_j^{\mathcal{H},d} \, T_i(\tilde{L}_d) \, \cdot \, T_j(\tilde{L}_d).
    \end{align*}

    We apply Lemma \ref{lemma1} to express $T_i(\tilde{L}_d) \cdot T_j(\tilde{L}_d)$ in terms of the sum of $T_{i+j}\left(\tilde{L}_d\right)$ and $T_{|i-j|}\left(\tilde{L}_d\right)$. Thus, the double sum can be rewritten as:
    \begin{equation*}
        \hat{L_d} = \frac{1}{2} \sum_{i=0}^K \sum_{j=0}^K \theta_i^{\mathcal{L},d} \theta_j^{\mathcal{H},d} \left[T_{i+j}\left(\tilde{L}_d\right) + T_{|i-j|}\left(\tilde{L}_d\right)\right].
    \end{equation*}
\end{proof}


\section{Proof of Proposition~\ref{prop:cheb_basis_bound}}
\label{proof:prop_cheb_basis_bound}

\begin{proof}
Since $\tilde{L}_d$ is symmetric, it admits an eigendecomposition $\tilde{L}_d = U_d \tilde{\Lambda}_d U_d^\top$, where $U_d$ is orthonormal and $\tilde{\Lambda}_d=\mathrm{diag}(\tilde{\lambda}_{d}^{(1)},\dots,\tilde{\lambda}_{d}^{(N)})$ with $\tilde{\lambda}_{d}^{(i)}\in[-1,1]$ for all $i$. Because $T_k(\cdot)$ is a polynomial, we have:
\[
T_k(\tilde{L}_d) = U_d \, T_k(\tilde{\Lambda}_d)\, U_d^\top,
\qquad
T_k(\tilde{\Lambda}_d)=\mathrm{diag}\!\big(T_k(\tilde{\lambda}_{d}^{(1)}),\dots,T_k(\tilde{\lambda}_{d}^{(N)})\big).
\]
Therefore,
\[
\|T_k(\tilde{L}_d)\|_2
=
\|T_k(\tilde{\Lambda}_d)\|_2
=
\max_{1\le i\le N} \big|T_k(\tilde{\lambda}_{d}^{(i)})\big|.
\]
Since $|T_k(x)|\le 1$ for all $x\in[-1,1]$, we obtain $\|T_k(\tilde{L}_d)\|_2\le 1$.
\end{proof}

\section{Proof of Proposition~\ref{prop:bibo_cheb}}
\label{proof:prop_bibo_cheb}

\begin{proof}
Let $f_d(\tilde{L}_d)=\sum_{k=0}^{K}\alpha_k\,T_k(\tilde{L}_d)$. By the triangle inequality and Prop.~\ref{prop:cheb_basis_bound}, we have:
\begin{align*}
\|f_d(\tilde{L}_d)\|_2
&\le \sum_{k=0}^{K} |\alpha_k|\,\|T_k(\tilde{L}_d)\|_2
\le \sum_{k=0}^{K} |\alpha_k|.
\end{align*}

For any matrix-valued signal $S\in\mathbb{R}^{N\times C}$, we similarly obtain:
\begin{align*}
\|f_d(\tilde{L}_d)\,S\|_F
&= \Big\|\sum_{k=0}^{K}\alpha_k\,T_k(\tilde{L}_d)\,S\Big\|_F \\
&\le \sum_{k=0}^{K} |\alpha_k|\,\|T_k(\tilde{L}_d)\,S\|_F \\
&\le \sum_{k=0}^{K} |\alpha_k|\,\|T_k(\tilde{L}_d)\|_2\,\|S\|_F \\
&\le \Big(\sum_{k=0}^{K}|\alpha_k|\Big)\,\|S\|_F,
\end{align*}
again using Prop.~\ref{prop:cheb_basis_bound}.
\end{proof}

\section{Proof of Corollary~\ref{coro:product_stability}}
\label{proof:coro_product_stability}

\begin{proof}
By submultiplicativity of the spectral norm, we have:
\begin{align*}
\|\hat{L}_d\|_2
&=
\|f_d^{\mathcal{L}}(\tilde{L}_d)\,f_d^{\mathcal{H}}(\tilde{L}_d)\|_2 \\
&\le
\|f_d^{\mathcal{L}}(\tilde{L}_d)\|_2\;\|f_d^{\mathcal{H}}(\tilde{L}_d)\|_2.
\end{align*}
Applying Prop.~\ref{prop:bibo_cheb} to each factor yields:
\[
\|f_d^{\mathcal{L}}(\tilde{L}_d)\|_2 \le \sum_{k=0}^{K}|\theta_k^{\mathcal{L},d}|,
\qquad
\|f_d^{\mathcal{H}}(\tilde{L}_d)\|_2 \le \sum_{k=0}^{K}|\theta_k^{\mathcal{H},d}|,
\]
which proves the stated bound on $\|\hat{L}_d\|_2$.

For any $S\in\mathbb{R}^{N\times C}$, we use $\|\hat{L}_d\,S\|_F \le \|\hat{L}_d\|_2\,\|S\|_F$ to obtain the Frobenius inequality in the corollary.

For the coefficient bound, Prop.~\ref{prop:2} expresses $\hat{L}_d$ as a degree-$2K$ Chebyshev expansion with coefficients obtained by summing contributions of the form $\frac{1}{2}\theta_i^{\mathcal{L},d}\theta_j^{\mathcal{H},d}$ (each pair $(i,j)$ contributes to at most two terms: $T_{i+j}$ and $T_{|i-j|}$). By the triangle inequality, the $\ell_1$ norm of the resulting coefficient vector
$\bar{\theta}^{\,d}$ satisfies:
\begin{align*}
\|\bar{\theta}^{\,d}\|_1
&\le
\frac{1}{2}\sum_{i=0}^{K}\sum_{j=0}^{K}\big|\theta_i^{\mathcal{L},d}\theta_j^{\mathcal{H},d}\big|
+
\frac{1}{2}\sum_{i=0}^{K}\sum_{j=0}^{K}\big|\theta_i^{\mathcal{L},d}\theta_j^{\mathcal{H},d}\big| \\
&=
\Big(\sum_{i=0}^{K}|\theta_i^{\mathcal{L},d}|\Big)\,
\Big(\sum_{j=0}^{K}|\theta_j^{\mathcal{H},d}|\Big)
=
\|\theta^{\mathcal{L},d}\|_1\,\|\theta^{\mathcal{H},d}\|_1.
\end{align*}
\end{proof}

\section{Proof of Proposition~\ref{prop:softmax_stability}}
\label{proof:prop_softmax_stability}

\begin{proof}
We first prove the Frobenius bound. For any row vector $s\in\mathbb{R}^{C}$, let $p=\mathrm{softmax}(s)\in\mathbb{R}^{C}$, i.e., $p_c = \exp(s_c)/\sum_{c'=1}^C \exp(s_{c'})$.
Then $p_c\ge 0$ and $\sum_{c=1}^C p_c = 1$, hence $\|p\|_2 \le \|p\|_1 = 1$. Applying this row-wise to $Y=\mathrm{softmax}(S)$ gives:
\[
\|Y\|_F^2 = \sum_{i=1}^{N}\|Y_{i:}\|_2^2 \le \sum_{i=1}^{N} 1 = N,
\]
so $\|Y\|_F \le \sqrt{N}$.

We now prove the Lipschitz bound. Consider the vector softmax map
$\varphi:\mathbb{R}^{C}\to\mathbb{R}^{C}$, $\varphi(s)=\mathrm{softmax}(s)$.
Its Jacobian at $s$ is:
\[
J(s)=\nabla \varphi(s)=\mathrm{diag}(p)-p\,p^\top,
\qquad p=\varphi(s).
\]
For any unit vector $v\in\mathbb{R}^C$ with $\|v\|_2=1$, we have
\[
v^\top J(s)\,v
=
\sum_{c=1}^C p_c v_c^2
-
\Big(\sum_{c=1}^C p_c v_c\Big)^2
=
\mathrm{Var}(V),
\]
where $V$ is a real-valued random variable taking value $v_c$ with probability $p_c$.
Since $V\in[v_{\min},v_{\max}]$, Popoviciu's inequality yields:
\[
\mathrm{Var}(V)\le \frac{(v_{\max}-v_{\min})^2}{4}.
\]
Moreover, $v_{\max}-v_{\min} \le \max_{i,j}|v_i-v_j| \le \sqrt{2}\,\|v\|_2=\sqrt{2}$, because
$|v_i-v_j| = |(e_i-e_j)^\top v|\le \|e_i-e_j\|_2\,\|v\|_2=\sqrt{2}\,\|v\|_2$.
Therefore,
\[
v^\top J(s)\,v \le \frac{(\sqrt{2})^2}{4}=\frac{1}{2}
\qquad \forall v:\|v\|_2=1,
\]
which implies $\|J(s)\|_2\le 1/2$ for all $s\in\mathbb{R}^C$.
By the mean value theorem, for any $s,s'\in\mathbb{R}^C$, we obtain:
\[
\|\varphi(s)-\varphi(s')\|_2 \le \sup_{\tau\in[0,1]}\|J(s'+\tau(s-s'))\|_2\,\|s-s'\|_2
\le \frac{1}{2}\,\|s-s'\|_2.
\]

Applying this row-wise to $Y=\mathrm{softmax}(S)$ and $Y'=\mathrm{softmax}(S')$ yields:
\begin{align*}
\|Y-Y'\|_F^2
&=
\sum_{i=1}^N \|\varphi(S_{i:})-\varphi(S'_{i:})\|_2^2 \\
&\le
\sum_{i=1}^N \Big(\frac{1}{2}\|S_{i:}-S'_{i:}\|_2\Big)^2
=
\frac{1}{4}\,\|S-S'\|_F^2,
\end{align*}
hence $\|Y-Y'\|_F \le \frac{1}{2}\|S-S'\|_F$.
\end{proof}

\section{Proof of Proposition~\ref{prop:bandwise_attenuation}}
\label{proof:prop_bandwise_attenuation}

\begin{proof}
Because $f_d^{\mathcal{L}}(\tilde{L}_d)$ and $f_d^{\mathcal{H}}(\tilde{L}_d)$ are spectral filters of $\tilde{L}_d$,
they diagonalize in the eigenbasis of $\tilde{L}_d$:
\[
f_d^{\mathcal{L}}(\tilde{L}_d)=U_d f_d^{\mathcal{L}}(\tilde{\Lambda}_d)U_d^\top,
\qquad
f_d^{\mathcal{H}}(\tilde{L}_d)=U_d f_d^{\mathcal{H}}(\tilde{\Lambda}_d)U_d^\top.
\]
Hence,
\[
f_d(\tilde{L}_d)=f_d^{\mathcal{L}}(\tilde{L}_d)\,f_d^{\mathcal{H}}(\tilde{L}_d)
=
U_d\,\mathrm{diag}\!\Big(f_d^{\mathcal{L}}(\tilde{\lambda}_{d}^{(i)})\,f_d^{\mathcal{H}}(\tilde{\lambda}_{d}^{(i)})\Big)_{i=1}^N\,U_d^\top.
\]
Also, $P_{\Omega,d}=U_d\,\mathrm{diag}(\mathbf{1}_{i\in\Omega})\,U_d^\top$.
Let $\hat{x}=U_d^\top x$. Then:
\begin{align*}
\|P_{\Omega,d}\, f_d(\tilde{L}_d)\, x\|_2^2
&=
\sum_{i\in\Omega}\big|f_d^{\mathcal{L}}(\tilde{\lambda}_{d}^{(i)})\,f_d^{\mathcal{H}}(\tilde{\lambda}_{d}^{(i)})\big|^2\,|\hat{x}_i|^2 \\
&\le
\Big(\max_{i\in\Omega}\big|f_d^{\mathcal{L}}(\tilde{\lambda}_{d}^{(i)})\,f_d^{\mathcal{H}}(\tilde{\lambda}_{d}^{(i)})\big|^2\Big)
\sum_{i\in\Omega}|\hat{x}_i|^2 \\
&=
\Big(\max_{i\in\Omega}\big|f_d^{\mathcal{L}}(\tilde{\lambda}_{d}^{(i)})\,f_d^{\mathcal{H}}(\tilde{\lambda}_{d}^{(i)})\big|^2\Big)
\|P_{\Omega,d}\,x\|_2^2.
\end{align*}
Taking square roots yields the first inequality in Prop.~\ref{prop:bandwise_attenuation}.
The second inequality follows immediately from:
\[
\max_{i\in\Omega_{\mathrm{high}}}\big|f_d^{\mathcal{L}}(\tilde{\lambda}_{d}^{(i)})\,f_d^{\mathcal{H}}(\tilde{\lambda}_{d}^{(i)})\big|
\le
\Big(\max_{i\in\Omega_{\mathrm{high}}}|f_d^{\mathcal{L}}(\tilde{\lambda}_{d}^{(i)})|\Big)
\Big(\max_{i\in\Omega_{\mathrm{high}}}|f_d^{\mathcal{H}}(\tilde{\lambda}_{d}^{(i)})|\Big).
\]
\end{proof}


\section{Proof of Lemma~\ref{lemma:ce_lipschitz}}
\label{proof:lemma_ce_lipschitz}

\begin{proof}
Recall the softmax cross-entropy written in terms of logits $z\in\mathbb{R}^{C}$ and a one-hot label $y\in\{0,1\}^{C}$:
\[
\ell(z,y)=\log\Big(\sum_{c=1}^{C} e^{z_c}\Big)-\sum_{c=1}^{C} y_c z_c.
\]
Let $c^\star$ be the true class so that $y_{c^\star}=1$ and $y_c=0$ for $c\neq c^\star$. Then, we have:
\[
\ell(z,y)=\log\Big(\sum_{c=1}^{C} e^{z_c}\Big)-z_{c^\star}.
\]
Its gradient with respect to $z$ is:
\[
\nabla_z \ell(z,y)=\mathrm{softmax}(z)-y.
\]
Since $\mathrm{softmax}(z)$ is a probability vector in the simplex, we have $\|\mathrm{softmax}(z)\|_2\le 1$ and $\|y\|_2=1$, hence
\[
\|\nabla_z \ell(z,y)\|_2=\|\mathrm{softmax}(z)-y\|_2 \le \sqrt{2}.
\]
A bounded gradient implies Lipschitzness: for all $z,z'\in\mathbb{R}^{C}$,
\[
|\ell(z,y)-\ell(z',y)| \le \sup_{\xi}\|\nabla \ell(\xi,y)\|_2\,\|z-z'\|_2 \le \sqrt{2}\,\|z-z'\|_2.
\]
This proves the $\sqrt{2}$-Lipschitz claim.

For boundedness, if $\|z\|_{\infty}\le B_z$ then $\max_c z_c \le B_z$ and $\min_c z_c\ge -B_z$. Therefore, we obtain:
\[
\log\Big(\sum_{c=1}^{C} e^{z_c}\Big)\le \log\big(C\,e^{B_z}\big)=\log(C)+B_z,
\qquad
-z_{c^\star}\le B_z,
\]
which yields $\ell(z,y)\le \log(C)+2B_z$.
\end{proof}

\section{Proof of Proposition~\ref{prop:rad_scaling}}
\label{proof:prop_rad_scaling}

\begin{proof}
Fix a labeled sample $S=\{(x_i,y_i)\}_{i=1}^{n}$ and let $\Sigma\in\{-1,+1\}^{n\times C}$ be a matrix of i.i.d.\ Rademacher signs. For any matrix $Z\in\mathbb{R}^{n\times C}$, denote the Frobenius inner product by $\langle \Sigma, Z\rangle := \mathrm{tr}(\Sigma^\top Z)$.

For dimension $d$, the score matrix on the sample can be written as
\[
Z_d = A_d\, Z_0\, H_d,
\]
where $Z_0\in\mathbb{R}^{n\times C}$ collects the MLP outputs on the sample and $A_d$ is the (sample-restricted) linear propagation operator induced by $\hat{L}_d$. Since restricting to a subset of rows/columns can not increase the spectral norm, we have $\|A_d\|_2\le \|\hat{L}_d\|_2$.

By definition of empirical Rademacher complexity,
\begin{align*}
\mathfrak{R}_{n}(\mathcal{F}_d)
&=
\frac{1}{n}\,\mathbb{E}_{\Sigma}\Big[\sup_{Z_0\in\mathcal{F}_0(S)} \langle \Sigma,\;A_d Z_0 H_d\rangle\Big]\\
&=
\frac{1}{n}\,\mathbb{E}_{\Sigma}\Big[\sup_{Z_0\in\mathcal{F}_0(S)} \mathrm{tr}(\Sigma^\top A_d Z_0 H_d)\Big]\\
&=
\frac{1}{n}\,\mathbb{E}_{\Sigma}\Big[\sup_{Z_0\in\mathcal{F}_0(S)} \mathrm{tr}\big((A_d^\top \Sigma H_d^\top)^\top Z_0\big)\Big]\\
&=
\frac{1}{n}\,\mathbb{E}_{\Sigma}\Big[\sup_{Z_0\in\mathcal{F}_0(S)} \langle A_d^\top \Sigma H_d^\top,\; Z_0\rangle\Big].
\end{align*}
Using Cauchy--Schwarz for the Frobenius inner product,
\[
\langle A_d^\top \Sigma H_d^\top,\; Z_0\rangle
\le
\|A_d^\top \Sigma H_d^\top\|_F\,\|Z_0\|_F.
\]
Moreover, by submultiplicativity of norms and $\|M\|_F\le \sqrt{\mathrm{rank}(M)}\,\|M\|_2$, we can bound the transformation of Rademacher signs as:
\[
\|A_d^\top \Sigma H_d^\top\|_F
\le
\|A_d^\top\|_2\,\|\Sigma\|_F\,\|H_d^\top\|_2
=
\|A_d\|_2\,\|H_d\|_2\,\|\Sigma\|_F.
\]
This shows that the supremum over the transformed class is at most scaled by $\|A_d\|_2\|H_d\|_2$. Since $\|A_d\|_2\le \|\hat{L}_d\|_2$, we obtain:
\[
\mathfrak{R}_{n}(\mathcal{F}_d)
\le
\|\hat{L}_d\|_2\,\|H_d\|_2\,\mathfrak{R}_{n}(\mathcal{F}_0),
\]
which is exactly Eq.~(\ref{eq:rad_scaling_main}).
\end{proof}

\section{Proof of Theorem~\ref{thm:generalization}}
\label{proof:thm_generalization}

\begin{proof}
Let $\mathcal{G}$ denote the class of dimension-averaged predictors induced by \methodname, and consider the loss $\ell$ in Eq.~(\ref{eq:ce_loss_logits_form}). Under the i.i.d.\ sampling assumption, a standard Rademacher generalization result for bounded losses implies that with probability at least $1-\delta$,
\[
\mathcal{R}\le \widehat{\mathcal{R}} + 2\,\mathfrak{R}_{n}(\ell\circ \mathcal{G})
+ 3\,B_{\max}\sqrt{\frac{\log(2/\delta)}{2n}},
\]
where $\mathfrak{R}_{n}(\ell\circ \mathcal{G})$ is the empirical Rademacher complexity of the loss-composed class and $B_{\max}$ upper bounds the loss range.

By Lemma~\ref{lemma:ce_lipschitz}, $\ell(\cdot,y)$ is $\sqrt{2}$-Lipschitz in its logits argument with respect to $\|\cdot\|_2$. Therefore, by the vector contraction principle, we have:
\[
\mathfrak{R}_{n}(\ell\circ \mathcal{G})
\le
\frac{\sqrt{2}}{D}\sum_{d=1}^{D}\mathfrak{R}_{n}(\mathcal{F}_d).
\]
Finally, applying Proposition~\ref{prop:rad_scaling} to each $\mathfrak{R}_{n}(\mathcal{F}_d)$ yields
\[
\mathfrak{R}_{n}(\ell\circ \mathcal{G})
\le
\frac{\sqrt{2}}{D}\Big(\sum_{d=1}^{D}\|\hat{L}_d\|_2\,\|H_d\|_2\Big)\,\mathfrak{R}_{n}(\mathcal{F}_0).
\]
Substituting this inequality into the generalization bound above proves Eq.~(\ref{eq:gen_bound_main}).
\end{proof}

\end{document}